\pgfplotsset{compat=1.15}
\newcommand{\Hom}{\text{Hom}}
\newcolumntype{R}[1]{>{\raggedleft\let\newline\\\arraybackslash\hspace{0pt}}m{#1}}
\newcolumntype{L}[1]{>{\raggedright\let\newline\\\arraybackslash\hspace{0pt}}m{#1}}
\newcolumntype{C}[1]{>{\centering\let\newline\\\arraybackslash\hspace{0pt}}m{#1}}
\theoremstyle{definition}
\renewcommand{\deg}{\text{deg }}
\newcommand{\density}{\text{density}}
\newcommand{\ssq}{\text{ssq}}
\newcommand{\sq}{\text{sq}}
\theoremstyle{proof}
\newtheorem{theorem}{Theorem}[section]
\newtheorem{lemma}[theorem]{Lemma}
\newtheorem{definition}[theorem]{Definition}
\newtheorem{remark}[theorem]{Remark}
\def\edge{\tikz[baseline=.1ex]{
\fill (0,0.15) circle (2pt) coordinate (A);
\fill (1.5ex,0.15) circle (2pt) coordinate (B);
\draw (A)--(B);}
}
\def\twoedges{\tikz[baseline=.1ex]{
\fill (0,0.1) circle (2pt) coordinate (A);
\fill (1ex,0.25) circle (2pt) coordinate (B);
\fill (2ex,0.1) circle (2pt) coordinate (C);
\draw (A)--(B);
\draw (B)--(C)}
}
\title{Graphons of Line Graphs}
\author{Sevvandi Kandanaarachchi, Cheng Soon Ong}
\date{}
\begin{document}

\maketitle

\begin{abstract}
We consider the problem of estimating graph limits, known as graphons, from observations of sequences of sparse finite graphs.
In this paper we show a simple method that can shed light on a subset of sparse graphs. The method involves mapping the original graphs to their \textit{line graphs}. 
We show that graphs satisfying a particular property, which we call the \textit{square-degree property} are sparse, but give rise to dense line graphs. 
This enables the use of results on graph limits of dense graphs to derive convergence.
In particular, star graphs satisfy the square-degree property resulting in dense line graphs and non-zero graphons of line graphs. 
We demonstrate empirically that we can distinguish different numbers of stars (which are sparse) by the graphons of their corresponding line graphs. Whereas in the original graphs, the different number of stars all converge to the zero graphon due to sparsity.
Similarly, superlinear preferential attachment graphs give rise to dense line graphs almost surely. In contrast, dense graphs, including Erdős–Rényi graphs make the line graphs sparse, resulting in the zero graphon.

\end{abstract}

\section{Introduction}\label{sec:intro}

A graphon is the limit of a converging graph sequence. Graphons of dense graphs are useful as they can act as a blueprint and generate graphs of arbitrary size with similar properties. But for sparse graphs this is not the case. Sparse graphs converge to the zero graphon, making the generated graphs empty or edgeless. Thus, the classical graphon definition fails for sparse graphs. Several methods have been proposed to overcome this limitation and to understand sparse graphs more deeply. However, the fragile nature of sparse graphs makes these methods mathematically complex. 
Graphons are useful in machine learning as a prior distribution on graphs.
Graphons provide an interesting connection between combinatorial, probabilistic, and analytical problems,
leading to many new approaches for graph modelling.

The obvious use of graphons is to predict a network and its properties at a future time point when the network is large \citep{Chayes2016Talk}. The fact that graphons are compact objects with the ability to generate arbitrarily large networks is an attractive feature. It is also studied in the context of exchangeable arrays \citep{Orbanz2015}.  In addition to network prediction, graphons are used in a myriad ways including in tranfer learning neural networks \citep{RuizGraphonNN}, graph embeddings \citep{Davison2024} and motif sampling \citep{hanbaek2023}. 
They are also of interest to problems in extremal graph theory, the study of large graphs and random matrix theory. 
Graphons have had wide application in statistical physics and network theory.

The theory of graphons of dense graphs is well developed, and is based on the Aldous-Hoover theorem.
% \cheng{One paragraph describing what is needed (in dense graphs) for graphons to exist.}\SK{Added.}
For a graphon to exist the sequence of graphs need to converge in homomorphism density, which can be thought of as subgraph density. However, a limitation of such graphons is that they produce dense graphs when the graphon is non-zero. If the graphon is zero everywhere, then it is of little use as it can only produce an empty graph. Thus, sparse graphs cannot be modelled using this approach.
There are results for graphons of sparse graphs, as the classical constructions
prevent models where the number of edges grow sub-quadratically with respect to the number of nodes.
Previous approaches for sparse graphons include constructions using Kallenberg exchangeability \citep{caron2017sparse}, stretched graphons \citep{borgs2018sparse} and graphexes \citep{Borgs2021}.

In this paper, we propose a new way to model sparse graphons by modeling the graphon
of the corresponding line graph. Line graphs map edges to vertices and connects edges when edges in the original graph share a vertex. For a graph $G_n$ with $n$ nodes, a line graph $H_m := L(G_n)$
is a graph where each of the $m$ edges of the original graph $G_n$ is a node of $H_m$.
Many properties of the original graph $G_n$ have a corresponding property in the line graph $H_m$.
In contrast to previous approaches to graphons of sparse graphs that required complex mathematical machinery,
our approach builds on the results of graphons on dense graphs directly.
We discover that if graphs $G_n$ have the property that 
the sum of the squares of the node degrees is greater than the square of the number of edges, 
then the corresponding line graphs $H_m$ are dense. 
This relationship between $G_n$ and $H_m$ may be of independent interest.
We show that sparse graphs $G_n$ that satisfy the so called ``square-degree property'' 
have line graphs $H_m$ that result in non-zero graphons.

We provide some background in Section~\ref{sec:notation}, 
and present our discovery connecting graphs $G_n$ with their line graphs $H_m$ in Section~\ref{sec:results}. 
We show that graphs $G_n$ that satisfy the square-degree property have convergent edge densities and homomorphism densities.
We derive the graphons for disjoint star graphs in Section~\ref{sec:resultsdeterministic} and 
illustrate the empirical behaviour of estimation on sparse graphs in Section~\ref{sec:experiments}.
We derive graphons of line graphs
for preferential attachment and Erdos-Renyi graphs in Section~\ref{sec:probgraphs}.

\paragraph{Contributions of this paper}
\begin{itemize}
    \item We propose a property of sparse graphs, the square-degree property (Definition~\ref{def:square})
    which allows us to find sparse graphs whose line graphs are dense.
    In particular, sparse graphs with square-degree property 
    have dense line graphs, and under certain conditions have line graph limits (Section~\ref{sec:havingconvergence}).
    \item We prove that for disjoint star graphs, the corresponding line graphs are dense
    and hence have graph limits (Section \ref{sec:resultsdeterministic}). Furthermore, we show that
   certain preferential attachment graphs %and Erd\"{o}s–R\'{e}nyi graphs also have line graphs
   have dense line graphs 
    that converge to non-zero graphons under certain conditions (Section \ref{sec:prefattachment}).
    \item We illustrate with empirical graphons the utility of line graphs for sparse graphs
    in Section~\ref{sec:experiments}.
\end{itemize}

\section{Notation and Preliminaries}\label{sec:notation}
A simple graph is a graph without loops or multiple edges between the same nodes. We only consider simple graphs and sequences of simple graphs in this paper. 

\subsection{Line graphs}\label{sec:notationlinegraphs}
Let $G$ denote a graph. If $G$ has at least one edge, then its line graph is the graph whose vertices are the edges of $G$, with two of these vertices being adjacent if the corresponding edges are adjacent in $G$ \citep{beineke2021line}. Figure \ref{fig:linegraph1} shows an example of a graph and its line graph.  The edges in the graph on the left are mapped to the vertices in the line graph (on the right) as can be seen from the numbers. 
 
\begin{figure}[!ht]
    \centering
    \includegraphics[width = \textwidth, trim={0cm 2cm 0 2cm},clip]{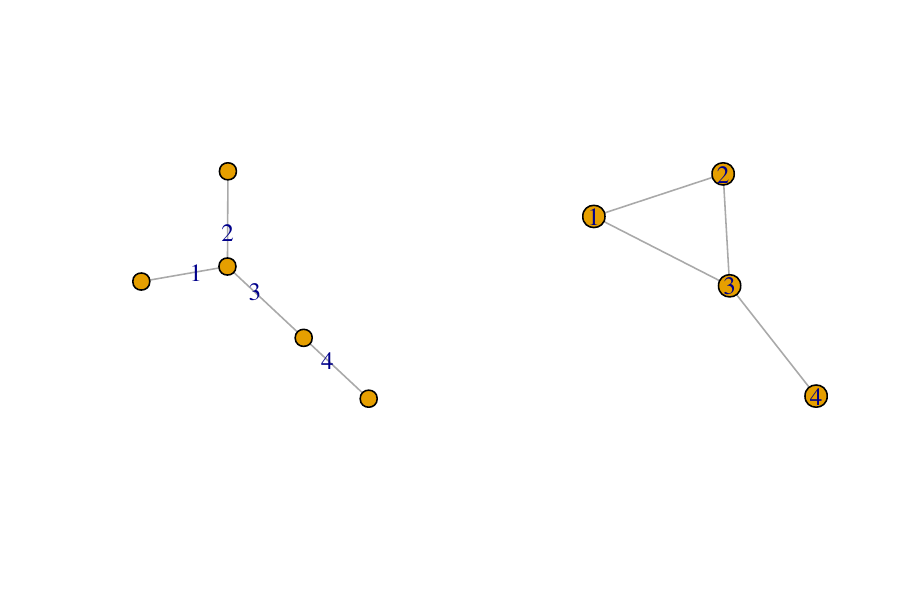}
    \caption{A graph on the left and its line graph on the right. }
    \label{fig:linegraph1}
\end{figure}

We denote the line graph operation by $L$, i.e., for a graph $G$ we denote its line graph by $H:= L(G)$. In terms of notation we make a distinction between graphs $G$ and line graphs $H$, i.e., we use the letter $H$, with and without subscripts, to denote line graphs. 

Rather than a single graph $G$, we are interested in graph sequences. The exact type of sequences which forms our interest will be made clear by the end of this section. Let $\{G_n\}_{n=1}^{\infty}$ denote a graph sequence. The index $n$ denotes the number of nodes in $G_n$ and let the number of edges be given by $m$. We denote the line graph of $G_n$ by $H_m := L(G_n)$ as $H_m$ has $m$ nodes.

We use standard graph theory notation to denote specific types of graphs. As customary $K_n$ denotes a complete graph of $n$ nodes, and $K_{s,r}$ denotes a complete bi-partite graph of  partition sizes $s$ and $r$, i.e., there are $s$ nodes in one subset completely connected to $r$ nodes in the other subset. When $s = 1$ we get star graphs; $K_{1, n}$ denotes a star with $n + 1$ vertices, where $n$ vertices are connected to the hub vertex.

\begin{definition}\label{def:linegraphroot}
    If $G$ is a graph whose line graph is $H$, that is, $L(G) = H$, then $G$ is called the \textbf{root} of $H$. 
\end{definition}
\cite{whitney1932congruent} showed that the structure of a graph can be recovered from its line graph with one exception: if the line graph $H$ is $K_3$, a triangle, then the root of $H$ can be either  $K_{1,3}$, a star or $K_3$ a triangle. This follows from the following theorem as stated in \cite{harary2018graph}:

\begin{theorem}[Whitney1932, Harary 1969]
    Let $G$ and $G'$ be connected graphs with isomorphic line graphs. Then $G$ and $G'$ are isomorphic unless one is $K_3$ and the other is $K_{1,3}$.
\end{theorem}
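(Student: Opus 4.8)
The plan is to recover each graph from its line graph by matching up the vertices of $G$ with a canonical family of cliques in $L(G)$. For a vertex $v$ of $G$, let $C_v$ be the set of edges of $G$ incident to $v$, viewed as a set of vertices of $L(G)$; since two edges of a simple graph meet in at most one endpoint, each $C_v$ is a clique of $L(G)$, the family $\{C_v : v \in V(G)\}$ covers the edge set of $L(G)$ with each edge of $L(G)$ in exactly one $C_v$, and each vertex of $L(G)$ lies in exactly two of the $C_v$. Call any partition of the edge set of $L(G)$ into cliques with every vertex in at most two parts a \emph{Krausz partition} (parts of size $\le 1$ being irrelevant and ignorable). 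An isomorphism $\psi : L(G) \to L(G')$ carries the Krausz partition $\{C_v\}$ to a Krausz partition of $L(G')$. So if one knows that $L(G')$ has only \emph{one} Krausz partition, which must then be $\{C'_{v'} : v' \in V(G')\}$, then $\psi$ induces a bijection $V(G) \to V(G')$, and one checks directly that adjacency is preserved: two vertices of $G$ are adjacent exactly when the two corresponding parts share a (unique) vertex of $L(G)$, a condition $\psi$ respects. Thus the whole theorem reduces to the uniqueness claim: \emph{a connected line graph other than $K_3$ admits a unique Krausz partition.}

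To prove the uniqueness claim I would classify the cliques of $H := L(G)$. A set of pairwise adjacent vertices of $H$ is a set of pairwise adjacent edges of $G$, and such a set is either a \emph{star} (all edges through one common endpoint, hence contained in some $C_v$) or a \emph{triangle} (three edges forming a $K_3$ in $G$). Hence every clique of $H$ of size at least $4$ sits inside a unique $C_v$, and the only cliques available to a Krausz partition besides sub-cliques of the $C_v$ are the triangle-cliques coming from $K_3$'s in $G$. The crux is then to show that if a Krausz partition $\mathcal{P}$ of $H$ differs from $\{C_v\}$, some part of $\mathcal{P}$ is a triangle-clique $T$, and the covering condition together with the at-most-two-parts condition forces the three edges of $T$ to carry no further incidences in $G$; connectedness then propagates this rigidity from that one bad part to all of $G$, so that $G$ is exactly the triangle $K_3$. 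Making this propagation argument precise, and separately checking the small configurations where vertices of degree $3$ and triangles interact (graphs on at most four vertices, wheel-like pieces such as $K_4$), is the step I expect to be the main obstacle; the classification above and the incidence-checking are routine.

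Finally I would assemble the pieces: applying the uniqueness claim to $L(G')$, the image under $\psi$ of the vertex-partition of $L(G)$ must be the vertex-partition of $L(G')$, giving a bijection $V(G) \to V(G')$ that, by the incidence characterization above, is a graph isomorphism. This holds whenever $\{G, G'\} \neq \{K_3, K_{1,3}\}$, and since $L(K_3) = L(K_{1,3}) = K_3$ has the two Krausz partitions (three edge-cliques versus one triangle-clique) these two roots exhibit the sole exception, completing the proof.
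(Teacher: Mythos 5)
The paper itself does not prove this statement; it is quoted as a classical theorem of Whitney with a citation to Harary, so there is no in-paper argument to compare against, and your proposal must be judged on its own. The Krausz-partition strategy is a legitimate and standard route to Whitney's theorem, but the lemma to which you reduce the whole theorem is false as stated, and this is a genuine gap. The claim that ``a connected line graph other than $K_3$ admits a unique Krausz partition'' fails already for $G = K_4$: its line graph is the octahedron $K_{2,2,2}$, whose eight triangles split into two families of four, each family partitioning the twelve edges with every vertex in exactly two parts --- two distinct Krausz partitions, only one of which is $\{C_v\}$. It also fails for the diamond $K_4 - e$, which is the line graph of the paw ($K_3$ with a pendant edge) and admits both the canonical partition (one triangle plus two single edges) and a second partition using the other triangle of the diamond as a part. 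Consequently your proposed propagation argument --- that any part deviating from $\{C_v\}$ forces $G = K_3$ --- cannot succeed: non-uniqueness of the Krausz partition occurs for a short finite list of small roots ($K_3$, $K_{1,3}$, the paw, the diamond, $K_4$), and Whitney's theorem survives only because in every exceptional case except the $K_3$/$K_{1,3}$ pair the distinct partitions still reconstruct isomorphic roots. A correct proof along your lines needs (i) a uniqueness lemma restricted to connected graphs outside this exceptional list (e.g.\ with more than four vertices), proved by your triangle-versus-star classification of cliques, and (ii) an explicit finite check that the alternative partitions of the exceptional line graphs yield isomorphic roots. You do flag ``graphs on at most four vertices'' and $K_4$ as a worry, which is the right instinct, but since the reduction rests on the false uniqueness claim, the argument as structured does not stand.

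Two smaller points. First, because parts of size one are discarded, the induced map on vertex sets does not automatically account for degree-one vertices of $G$; you recover them as vertices of $L(G)$ lying in only one non-trivial part, and this should be said explicitly when assembling the bijection $V(G) \to V(G')$. Second, your classification of cliques of $L(G)$ into stars and triangles is correct and is indeed the standard engine of the proof, so the repair is a matter of restating the key lemma with the right exceptional set rather than abandoning the approach.
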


By simply creating edges corresponding to vertices in line graph $H$ and connecting them by merging the vertices if there is an edge between the vertices in $H$ we can obtain the the graph $G$, such that $H = L(G)$.   
Thus, if $H$ is a line graph and it is not $K_3$, then we can talk about $L^{-1}(H)$.

We state some preliminary results on line graphs covered in Chapter 1 of \cite{beineke2021line}. 
\begin{lemma}\label{lemma:linegraphs}
    Let $G$ be a non-null graph with $n$ vertices and $m$ edges. Let $H = L(G)$.  Then
    \begin{enumerate}
        \item $H$ has $m$  vertices and $\frac{1}{2}\sum (\deg\,  v)^2 - m$ edges  \label{lemma:linegraphs:numedges}
        \item If $G$ is an $r$-regular graph then $H$ is $2(r-1)$-regular and has $\frac{nr}{2}$ vertices. \label{lemma:linegraphs:r-regulargraph}
        \item If $G$ is a path $P_n$, then $H$ is also a path of $n-1$ vertices, i.e., $H = P_{n-1}$. \label{lemma:linegraphs:paths}
        \item If $G$ is a non-trivial connected graph, then $H$ is also connected. \label{lemma:linegraphs:connectedgraph}
        \item If $G$ is a cycle $C_n$ of $n$ vertices, then $H$ is also a cycle $C_n$ of $n$ vertices.  \label{lemma:linegraphs:cyle}
        \item If $G$ is a star, i.e., $G = K_{1, n-1}$, then $H$ is a complete graph of $n-1$ vertices, i.e. $H = K_{n-1}$. \label{lemma:linegraphs:star} 
      \end{enumerate}
\end{lemma}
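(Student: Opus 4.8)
The plan is to derive all six items directly from the definition of the line graph: the vertex set of $H = L(G)$ is the edge set $E(G)$, and two such vertices are adjacent in $H$ exactly when the corresponding edges of $G$ share an endpoint. The single structural fact that drives everything is this: for each vertex $v$ of $G$, the set $Q_v$ of edges of $G$ incident to $v$ induces a clique in $H$ on $\deg v$ vertices, and because $G$ is simple two distinct edges have at most one common endpoint, so every edge of $H$ lies in exactly one of the cliques $\{Q_v : v \in V(G)\}$. In other words, $E(H)$ is partitioned by these cliques, and this observation handles item~\ref{lemma:linegraphs:numedges} at once.

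For item~\ref{lemma:linegraphs:numedges}, the vertex count $|V(H)| = |E(G)| = m$ is immediate, and the clique partition gives $|E(H)| = \sum_{v} \binom{\deg v}{2} = \tfrac12\sum_v (\deg v)^2 - \tfrac12 \sum_v \deg v$; the handshake identity $\sum_v \deg v = 2m$ turns the last term into $m$, yielding $\tfrac12\sum (\deg v)^2 - m$.

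For items~\ref{lemma:linegraphs:r-regulargraph}–\ref{lemma:linegraphs:star} I would specialise the same picture. If $G$ is $r$-regular, an edge $e = uv$ meets $r-1$ further edges at $u$ and $r-1$ further edges at $v$, and by simpleness these two families are disjoint, so $\deg_H(e) = 2(r-1)$; the vertex count $\tfrac{nr}{2}$ is $m$ via handshake. For a path $P_n$ or a cycle $C_n$ every vertex of $G$ has degree at most $2$, so each clique $Q_v$ is just an edge (or a single vertex), and tracing consecutive edges shows $H$ is $P_{n-1}$ (resp.\ $C_n$) — items~\ref{lemma:linegraphs:paths} and~\ref{lemma:linegraphs:cyle}. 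For a star $K_{1,n-1}$ all $n-1$ edges meet at the hub, so there is a single clique $Q_{\text{hub}}$ on all $n-1$ vertices, i.e.\ $H = K_{n-1}$. For connectivity (item~\ref{lemma:linegraphs:connectedgraph}), given edges $e,f$ of a connected $G$, take a walk in $G$ from an endpoint of $e$ to an endpoint of $f$; the successive edges along it, prefixed by $e$ and suffixed by $f$, are pairwise endpoint-sharing and hence form a walk from $e$ to $f$ in $H$.

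None of this is deep; the only thing to be careful about is the simple-graph hypothesis, which is precisely what makes the cliques $Q_v$ pairwise edge-disjoint (no overcounting in item~\ref{lemma:linegraphs:numedges}) and what makes the two ends of an edge contribute disjointly to its line-graph degree in item~\ref{lemma:linegraphs:r-regulargraph}. A second point worth flagging is the indexing convention — $K_{1,n-1}$ has $n$ vertices but $n-1$ edges, and $P_n$ has $n-1$ edges — so the stated vertex counts of the line graphs are consistent. I would also remark that item~\ref{lemma:linegraphs:paths} is subsumed by the path/cycle argument and that item~\ref{lemma:linegraphs:numedges} is essentially classical \citep{beineke2021line}, so the write-up can be kept short.
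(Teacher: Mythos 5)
Your proof is correct; the paper itself does not prove this lemma but simply cites it as a collection of standard facts from Chapter 1 of \cite{beineke2021line}, and your clique-partition argument (each vertex $v$ of $G$ inducing a clique on its $\deg v$ incident edges, pairwise edge-disjoint because $G$ is simple) is precisely the classical derivation, with the edge count $\sum_v \binom{\deg v}{2} = \tfrac12\sum_v(\deg v)^2 - m$ and the remaining five items all falling out of the same picture. The two points you flag — simpleness guaranteeing no overcounting, and the indexing conventions for $K_{1,n-1}$ and $P_n$ — are exactly the right ones to check, so nothing is missing.
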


The edge density of a graph $G$ with $n$ nodes and $m$ edges is given by $\density(G) = \frac{2m}{n(n-1)}$. Thus, from Lemma \ref{lemma:linegraphs}(1) the edge density of $H = L(G)$ is given by
\begin{equation}\label{eq:linedensity}
    \density(H) = \frac{\frac{1}{2}\sum (\deg\,  v^2) - m}{\frac{1}{2}m(m-1)} \, , 
\end{equation}
where $\deg v$ denotes the degree distribution of graph $G$ and $\deg v^2$ denotes the vector of squared degrees in $G$. We refer to the edge density simply as density. 

\subsection{Graphons}\label{sec:notationgraphons}
Next we turn our attention to graphons. A graphon is a symmetric, measurable function $W:[0,1]^2 \rightarrow [0,1]$ often used to describe both the limiting properties of graph sequences as well as the graph generation process \citep{borgs2011limits}.  We define some terms often used in the graphon literature. 
% When defining graph limits, graph homomorphisms -- edge preserving maps from one graph to another --  are used. 

\begin{definition}\label{def:graphhomo}A \textbf{graph homomorphism} from $F$ to $G$ is a map $f : V(F) \to V(G)$ such that if $uv \in E(F)$ then $f(u)f(v) \in E(G)$. (Maps edges to edges.) Let $\Hom(F, G)$ be the set of all such homomorphisms and let $\hom(F, G) = |\Hom(F, G)|$. Then \textbf{homomorphism density} is defined as
$$ t(F, G) = \frac{\hom(F, G)}{ |V(G)|^{|V(F)|} } \, . 
$$ 
The number of homomorphisms  $\hom(F, G)$ is given by 
$$ \hom(F, G) = \sum_{\phi: V(F) \to V(G)} \prod_{uv \in E(F)} \beta_{\phi(u)\phi(v)}(G)
$$
where $\beta_{ij}(G)$ is the weight of edge $ij$ in graph $G$, which equals either 1 or 0 in unweighted graphs. 
For a graphon $W$, the homomorphism density is defined as
$$ t(F, W) = \int_{[0,1]^{|V(F)|}} \prod_{ij \in E(F)} W(x_i, x_j) \,   dx \, . 
$$
\end{definition}
A graph homomorphism is an edge preserving map from one graph to another. The homomorphism density is useful as it is bounded even when the number of homomorphisms $\hom(F, G)$ go to infinity.

\begin{definition}\label{def:cut1}The \textbf{cut norm} of graphon $W$   \citep{frieze1999quick, borgs2008convergent} is defined as
$$ \lVert W \rVert_\square = \sup_{S, T} \left\vert\int_{S\times T} W(x,y) \, dx dy \right\vert  \, , 
$$
where the supremum is taken over all measurable sets $S$ and $T$ of $[0 ,1]$.  \end{definition}

\begin{definition}\label{def:cut2} Given two graphons $W_1$ and $W_2$ the \textbf{cut metric}  \citep{borgs2008convergent} is defined as 
$$ \delta_{\square}(W_1, W_2) = \inf_\varphi \left\lVert W_1 - W_2^\varphi \right\rVert_\square \, , 
$$
where the infimum is taken over all measure preserving bijections $\varphi:[0,1] \rightarrow [0,1]$.  
\end{definition}
Let $\mathcal{W}$ denote the space of graphons, i.e., $\mathcal{W} = \{ W \in \mathcal{W} \}$. Then, the cut metric is a pseudo-metric in $\mathcal{W}$ because $\delta_{\square}(W_1, W_2)  = 0 $ does not imply $W_1 = W_2$, i.e., $ \delta_{\square}(W_1, W_2)  \geq 0 $ for $W_1 \neq W_2$. However the cut metric $\delta_{\square} $  is a metric on the quotient space $\tilde{\mathcal{W}} = \mathcal{W}/ \sim  $ where $f \sim g$  if $f(x, y) = g(\sigma x,  \sigma y)$ for some measure preserving $\sigma$.

%\cheng{Is $t$ the standard notation for this? If not, perhaps use something else, just in case we ever write a paper about time $t$.}\SK{I think $t$ is the standard notation for this, I've seen $t$ been used by all the Borgs-Chayes papers and other papers as well.}
%\cheng{I don't see $t$ being used anywhere.}\SK{This will change.}

\begin{definition}\label{def:wrandomgraphs}Uniformly pick $x_1, x_2, \ldots x_n$ from $[0,1]$. A \textbf{W-random graph} $\mathbb{G}(n,W)$ has the vertex set $1, 2, \ldots n$ and vertices $i$ and $j$ are connected with probability $W(x_i, x_j)$.
\end{definition}
We can think of $W$-random graphs as graphs sampled from the graphon $W$. We will use $W$-random graphs in our experiments. 

The homomorphism density is used to define graph convergence. 

\begin{definition}[\citep{borgs2008convergent}]\label{def:convergent}
A graph sequence $\{G_n\}_n$ is said to be convergent if  $t(F, G_n)$ converges as $n$ goes to infinity for any simple graph $F$. 
\end{definition}

Every finite, simple graph $G$ can be represented by a graphon $W_G$, which we call its empirical graphon. 
\begin{definition}\label{def:empiricalgraphon}
    Given a graph $G$ with $n$ vertices labeled $\{1, \ldots, n\}$, we define its \textbf{empirical graphon} $W_G: [0, 1]^2 \rightarrow [0, 1]$ as follows: We split the interval $[0, 1]$ into $n$ equal intervals $\{I_1, I_2, \ldots, I_n \}$ (first one closed, all others half open) and for $x \in I_i, y \in I_j$ define
    $$ W_G(x,y) = \begin{cases}
        1 & \, \text{if} \quad  ij \in E(G) \, \\
        0 & \, \text{otherwise} \, ,
    \end{cases}
    $$
\end{definition}
where $E(G)$ denotes the edges of $G$. The empirical graphon replaces the the adjacency matrix with a unit square and the $(i,j)$th entry of the adjacency matrix is replaced with a square of size $(1/n) \times (1/n)$.  

The cut metric between graphs $G$  and $G'$ is defined as $\delta_\square\left(G, G'\right) = \delta_\square\left(W_G, W_{G'} \right) $. The cut metric between a graph $G$ and a graphon $U$ is defined as $\delta_\square\left(G, U \right) = \delta_\square\left(W_G, U \right) $.

\cite{borgs2008convergent} prove the following theorem for convergent graph sequences. 

\begin{theorem}[\cite{borgs2008convergent}]\label{thm:BCconvergent} For every convergent sequence $\{G_n\}_n$ of simple graphs there is a graphon $W$ with values in $[0,1]$ such that $t(F, G_n) \rightarrow t(F, W)$ for every simple graph $F$. Moreover for every graphon $W$ with values in $[0,1]$ there is a convergent sequence of graphs satisfying this relation. 
\end{theorem}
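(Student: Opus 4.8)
The plan is to prove the two halves separately, using the cut metric as the bridge between the combinatorial object (the graph sequence) and the analytic object (the graphon). For the first assertion, I would identify each $G_n$ with its empirical graphon $W_{G_n} \in \mathcal{W}$ and argue that $\{W_{G_n}\}$ subconverges in the cut metric $\delta_\square$ to some limit graphon $W$; then transfer this convergence to homomorphism densities; and finally upgrade subsequential convergence to convergence of the whole sequence, using the hypothesis that $t(F, G_n)$ already converges for every simple $F$. For the ``moreover'' assertion, I would exhibit the $W$-random graphs $\mathbb{G}(n, W)$ as the desired sequence and show that their homomorphism densities concentrate around $t(F, W)$.

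For the first half, two ingredients do the work. First, a \emph{counting lemma}: for any simple graph $F$ and graphons $U_1, U_2$ one has $|t(F, U_1) - t(F, U_2)| \le |E(F)|\,\delta_\square(U_1, U_2)$, proved by a telescoping argument that swaps one edge-factor $W(x_i, x_j)$ at a time and bounds each swap by the cut norm (after applying the optimal measure-preserving coupling), using that the remaining factors, integrated out, are bounded by $1$ in each variable. Second, \emph{compactness}: the quotient space $(\tilde{\mathcal{W}}, \delta_\square)$ is compact; the standard route is the weak (Frieze--Kannan) regularity lemma, which says every graphon is $\epsilon$-approximable in cut norm by a step function with a bounded number of steps, making $\tilde{\mathcal{W}}$ totally bounded and, with completeness, compact. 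Granting these: extract a subsequence with $\delta_\square(W_{G_{n_k}}, W) \to 0$; by the counting lemma $t(F, G_{n_k}) = t(F, W_{G_{n_k}}) \to t(F, W)$ for every $F$ (the $O(1/|V(G_n)|)$ discrepancy between $t(F,G_n)$ and $t(F, W_{G_n})$ is harmless); and since $t(F, G_n)$ converges by hypothesis, its limit must coincide with the subsequential value $t(F, W)$, so $t(F, G_n) \to t(F, W)$ for every simple graph $F$.

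For the ``moreover'' part, fix a graphon $W$ and set $G_n := \mathbb{G}(n, W)$. A direct computation gives $\mathbb{E}\,\hom(F, \mathbb{G}(n,W)) = n^{|V(F)|} t(F, W) + O(n^{|V(F)|-1})$, since each injective map $\phi: V(F) \to \{1,\dots,n\}$ contributes exactly $t(F, W)$ in expectation and the non-injective maps are negligible in number; hence $\mathbb{E}\,t(F, \mathbb{G}(n,W)) \to t(F,W)$. Concentration follows from a bounded-difference (Azuma--Hoeffding) estimate: exposing the sampled coordinates $x_1,\dots,x_n$ one at a time changes $t(F, \mathbb{G}(n,W))$ by $O(1/n)$ per step, so $\Pr\!\left(|t(F, \mathbb{G}(n,W)) - \mathbb{E}\,t(F, \mathbb{G}(n,W))| > \epsilon\right)$ decays exponentially in $n$; Borel--Cantelli then yields $t(F, \mathbb{G}(n,W)) \to t(F,W)$ almost surely for each fixed $F$, and intersecting these almost-sure events over the countable family of all simple graphs $F$ gives a single realization of $\{\mathbb{G}(n,W)\}_n$ that is convergent (Definition~\ref{def:convergent}) with limit $W$.

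The main obstacle is the compactness statement; everything else is a routine telescoping estimate or a standard concentration argument. Proving compactness honestly requires the weak regularity lemma (or, equivalently, a martingale/Aldous--Hoover argument), and some care is needed because $\delta_\square$ is only a pseudometric on $\mathcal{W}$ — one must work in the quotient $\tilde{\mathcal{W}}$ and track measure-preserving maps when invoking the counting lemma. A secondary point is the passage from ``every subsequence has a convergent further subsequence'' to convergence of the original density sequence; here the assumed convergence of $\{G_n\}$ is exactly what forbids two subsequential limits with different homomorphism densities.
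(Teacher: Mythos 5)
This theorem is stated in the paper purely as an imported result, cited to \cite{borgs2008convergent}, and the paper gives no proof of it; there is therefore no in-house argument to compare yours against. Judged on its own, your sketch is the standard and essentially correct proof. The first half is the compactness route of Lov\'asz--Szegedy: the counting lemma $|t(F,U_1)-t(F,U_2)|\le |E(F)|\,\delta_\square(U_1,U_2)$ together with compactness of $(\tilde{\mathcal{W}},\delta_\square)$ produces a subsequential cut-metric limit $W$, and the hypothesis that every sequence $\{t(F,G_n)\}_n$ already converges forces the full sequence to converge to $t(F,W)$. The second half via $\mathbb{G}(n,W)$, a bounded-differences estimate, Borel--Cantelli, and intersection over the countable family of test graphs $F$ is likewise the textbook argument. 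Two minor remarks. First, under Definitions~\ref{def:graphhomo} and~\ref{def:empiricalgraphon} one has $t(F,G_n)=t(F,W_{G_n})$ exactly (the sum over all maps $\phi$ against the $0$--$1$ edge weights is literally the step-function integral), so the $O(1/|V(G_n)|)$ correction you allow for is not needed here; it arises only for injective homomorphism densities. Second, as you acknowledge, the entire weight of the first half rests on compactness of the quotient space, which requires the weak regularity lemma and a completeness (martingale) argument; a self-contained proof would have to supply these, and since the infimum in $\delta_\square$ need not be attained one should run the counting lemma with the $\lVert\cdot\rVert_\square$ version plus the invariance $t(F,W^\varphi)=t(F,W)$ and near-optimal $\varphi$. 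Neither point is a gap in the logic, only in the level of detail; note also that the original proof in \cite{borgs2008convergent} constructs $W$ directly from Szemer\'edi partitions and martingale convergence rather than abstract compactness, but both routes are valid.
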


\begin{theorem}[\cite{borgs2011limits}]\label{thm:BCconvergent2} 
A sequence of graphs $\{G_n\}_n$ is convergent if and only if it is Cauchy in the $\delta_\square$ distance. The sequence  $\{G_n\}_n$ converges to $W$ if and only if $\delta_\square(W_{G_n}, W) \to 0$. Furthermore, if this is the case, and $|V(G_n)| \to \infty$, then there is a way to label the nodes of the graphs $G_n$ such that $\lVert W_{G_n}  - W \rVert_\square \to 0$. 
\end{theorem}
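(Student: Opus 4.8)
The plan is to treat this as the standard two-way bridge between convergence in homomorphism density (Definition~\ref{def:convergent}) and convergence in the cut metric $\delta_\square$. The two directions rest on completely different tools: the \emph{Counting Lemma} --- $|t(F,U)-t(F,W)|\le |E(F)|\,\delta_\square(U,W)$ for graphons $U,W$ and any finite simple $F$ --- gives one direction essentially for free, whereas the reverse needs the \emph{compactness} of the graphon space $(\tilde{\mathcal{W}},\delta_\square)$ together with the fact that the collection of homomorphism densities $\{t(F,\cdot)\}_F$ determines a graphon up to $\delta_\square$-distance zero (the inverse-counting, or ``moment'', statement, itself a consequence of the weak regularity lemma of \citep{frieze1999quick}). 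I would prove the middle assertion first and derive the other two from it.

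For the middle assertion I would proceed as follows. Since $t(F,\cdot)$ is invariant under measure-preserving rearrangements, the Counting Lemma says $U\mapsto t(F,U)$ is $\delta_\square$-continuous; applied to $U=W_{G_n}$ with $t(F,G_n)=t(F,W_{G_n})$, this gives that $\delta_\square(W_{G_n},W)\to 0$ implies $t(F,G_n)\to t(F,W)$ for all $F$, i.e.\ that $\{G_n\}$ converges to $W$. For the converse, suppose $t(F,G_n)\to t(F,W)$ for every finite simple $F$. If $\delta_\square(W_{G_n},W)\not\to 0$, choose $\epsilon>0$ and a subsequence with $\delta_\square(W_{G_{n_k}},W)\ge\epsilon$; by compactness refine it to a subsequence with $W_{G_{n_k}}\to W'$, so $\delta_\square(W',W)\ge\epsilon$. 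The forward direction applied to this subsequence gives $t(F,G_{n_k})\to t(F,W')$, but we also have $t(F,G_{n_k})\to t(F,W)$, so $t(F,W)=t(F,W')$ for all $F$; the moment statement then forces $\delta_\square(W,W')=0$, a contradiction. Hence $\delta_\square(W_{G_n},W)\to 0$.

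The first assertion follows quickly: a Cauchy sequence in $\delta_\square$ has a limit $W$ by completeness of $(\tilde{\mathcal{W}},\delta_\square)$ (a byproduct of compactness), whence by the middle assertion $\{G_n\}$ converges to $W$ and so is convergent; conversely a convergent $\{G_n\}$ has, by Theorem~\ref{thm:BCconvergent}, a limit graphon $W$ with $t(F,G_n)\to t(F,W)$, so $\delta_\square(W_{G_n},W)\to 0$ by the middle assertion, and a $\delta_\square$-convergent sequence is Cauchy. For the last assertion, assume $\delta_\square(W_{G_n},W)\to 0$ and $|V(G_n)|=n\to\infty$, and pick measure-preserving bijections $\varphi_n$ with $\lVert W_{G_n}^{\varphi_n}-W\rVert_\square\to 0$. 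Since $W_{G_n}$ is a step function whose $n$ steps each have measure $1/n\to 0$, I would argue (following \citep{borgs2011limits}) that a near-optimal $\varphi_n$ can be replaced, at the cost of an $o(1)$ change in cut norm, by a bijection that merely permutes the subintervals $I_1,\dots,I_n$ --- that is, by a relabeling $\sigma_n$ of $V(G_n)$ --- either by using weak regularity to reduce $W$ to a bounded-complexity step function and matching blocks, or by comparing $G_n$ with a $W$-random graph $\mathbb{G}(n,W)$, which satisfies $\delta_\square(\mathbb{G}(n,W),W)\to 0$ almost surely. This yields $\lVert W_{G_n^{\sigma_n}}-W\rVert_\square\to 0$.

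The main obstacles are the two global inputs used above: the compactness of $(\tilde{\mathcal{W}},\delta_\square)$ and the inverse-counting/moment statement used to collapse the two subsequential limits --- both proved via the weak regularity lemma plus a martingale (or diagonalization) argument, and precisely the machinery that makes dense graphon theory work while failing for genuinely sparse sequences. By comparison the Counting Lemma and the rounding step in the relabeling argument are routine. Since the statement is quoted from \citep{borgs2011limits}, in the present paper we would simply cite it; the above is the skeleton one would fill in.
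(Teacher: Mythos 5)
This theorem is quoted from \citep{borgs2011limits} as background: the paper supplies no proof of it, so there is no in-paper argument to compare yours against. Your skeleton --- the Counting Lemma for the easy direction, compactness of $(\tilde{\mathcal{W}},\delta_\square)$ together with the inverse counting/moment argument for the converse, completeness for the Cauchy equivalence, and the rounding of near-optimal measure-preserving maps to vertex permutations for the relabeling claim --- is the standard proof in the cited source and is correct as an outline, as you yourself acknowledge in your closing paragraph.
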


\subsubsection{Line graphs and edge exchangeability}\label{sec:edge}
As discussed above edge-exchangeable graphs can exhibit sparsity \citep{Janson2018}. 
Here we show the link between line graphs and edge exchangeability. 

Figure \ref{fig:edgeexchange} shows the connection between vertex and edge exchangeability when we map from graphs to line graphs. Graph $G$ is shown on the top left and its line graph $H = L(G)$ is shown on the top right. The graph on the bottom right $H'$ is $H$ with vertices permuted. Let us call the graph on the bottom left $G'$. Following definition \ref{def:linegraphroot} we can see that $G'$ is the root of $H'$ , i.e., $H' = L(G')$.  Furthermore, the vertex permutation  $\phi$ relabeled the vertices $(1, 2, 3, 4)$ in $H$ to $(2, 3, 4, 1)$ in $H'$. We see the same permutation occurs in edges from $G$ to $G'$, i.e. $G'$ is an edge permuted version of $G$. This is not surprising as line graphs map edges to vertices. 

\begin{figure}[!ht]
    \centering
    \includegraphics[width=0.8\textwidth]{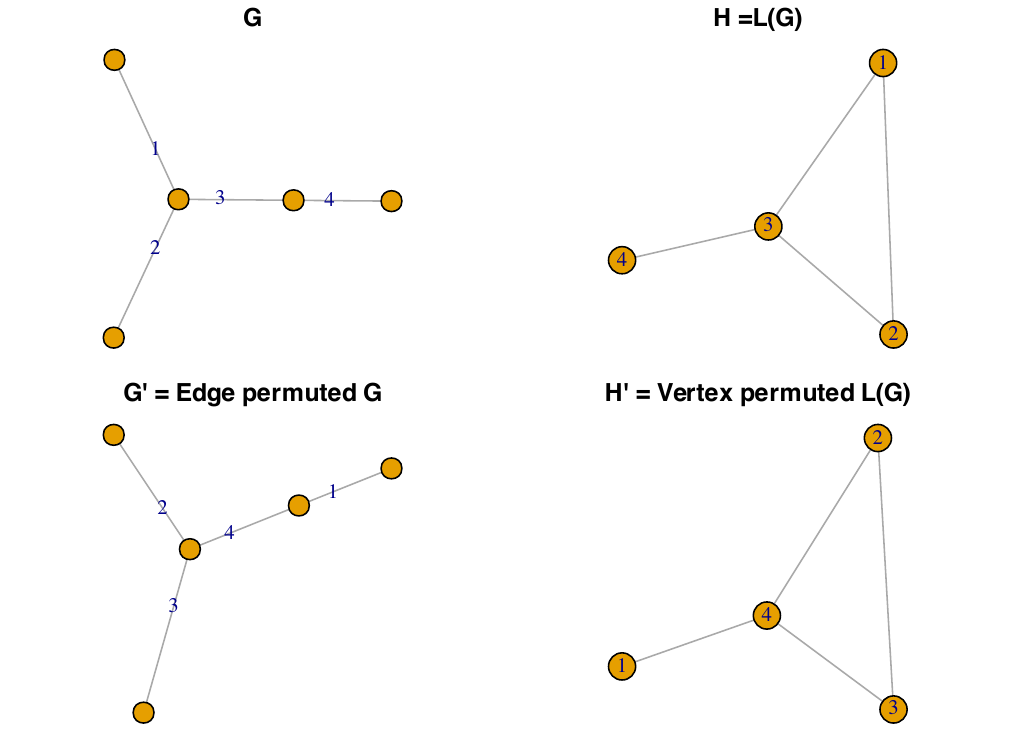}
    \caption{Vertex and edge exchangeability in graphs and line graphs. Graphs $G$ and $H = L(G)$ on the top row. Graph $H'$ is a vertex permuted version of $H$. We see that $H' = L(G')$, where $G'$ is the edge permuted version of $G$. }
    \label{fig:edgeexchange}
\end{figure}

\subsubsection{Edge vs homomorphism density}\label{sec:edgevshomdensity}

In this study we mention different types of convergence: convergence with respect to homomorphism density (Definition~\ref{def:graphhomo}),  cut metric (Definition~\ref{def:cut2}), and edge density (Equation~\ref{eq:linedensity}). 
Homomorphism density convergence is subgraph convergence. Suppose $ \{ G_n \}_n$ converges in homomorphism density, then  for any graph $F$ the sequence $\{t(F, G_n)\}_n$ converges. That is, the edge density, triangle density, 4-cycle density and all such densities converge. Convergence in homomorphism density is equivalent to convergence in the cut metric as shown by \cite{borgs2011limits}. In contrast, edge density convergence is the same as convergence of the single sequence $\{t(\edge, G_n)\}_n$. As edge density is given by $2|E(G_n)|/n(n-1)$ and $t(\edge, G_n) = 2|E(G_n)|/n^2$
convergence in one implies convergence in the other.
%these two quantities are close for large $n$. 
The denominators are different because the edge density excludes the diagonal of the adjacency matrix whereas $\{t(\edge, G_n)\}_n$ includes it (see Definition~\ref{def:graphhomo}). 
However, edge density is much weaker and does not give us subgraph convergence. 

We use edge density to characterize a bigger space of graph sequences -- sequences that do not converge either in the cut metric or in edge density. The use of $\liminf$ in the definition of dense graph sequences (Definition \ref{def:dense}) means that we do not need convergence of  edge densities to call a graph sequence dense.

\subsection{Related work}
\label{sec:related-work}

\subsubsection{Graphons of sparse graphs}

\cite{caron2017sparse} set aside the discrete version of exchangeability and consider its continuous counterpart -- Kallenberg exchangeability \citep{Kallenberg1990}. They consider exchangeable point processes and  model graphs using completely random measures. They show that by selecting an appropriate L\'evy measure,  they can construct sparse or dense graphs. Collaborations led by Borgs and Chayes have resulted in considerable work on sparse graph limits. \cite{Borgs2017LD} consider sparse graph convergence by introducing a new notion of convergence called  LD-convergence, which is based on the theory of large deviations. The large deviations rate function is considered to be the limit object for the sparse graph sequence. In \cite{borgs2018sparse}, they introduce  \textit{stretched graphons} as a way to overcome the zero graphon, which is the natural limit of sparse graphs. They consider both the \textit{rescaled graphon} introduced by \cite{Bollobas2011} and the stretched graphon as means of representing sparse graph limits. In \cite{borgs2019Lp} they develop the theory of  $L^p$ graphons, which provides convergence for sparse graphs with the flexibility to account for power laws.  \cite{Borgs2019} and \cite{Borgs2021} consider graphexes --  a triple including a positive number, a positive integrable function and a graphon --  as a framework for modelling sparse graphs.   

Edge-exchangeability is another avenue used to model sparse graphs. Instead of considering exchangeability of vertices, edges are labelled and their permutations are considered. \cite{crane2018edge, Crane_Dempsey_2019} introduce edge-exchangeable network models and show that these models allow for sparse structure and power-law degree distributions. \cite{cai2016edge} consider projective, edge-exchangeable graphs and obtain sparsity results for all Poisson point process-based graph frequency models. \cite{Janson2018} extends the model put forward by \cite{crane2018edge} and investigate different types of graphs that can be generated by this model. He shows that graphs ranging from dense to very sparse graphs can be generated by using the Poisson construction. 

\subsubsection{Other graphon applications}
Possibly due to its rich mathematical context, graphons are used in many topics in machine learning. For example, it is desirable for a machine learning model to be transferable. \cite{RuizGraphonNN}  propose graphon neural networks as the limit of graph neural networks (GNNs) with the aim of producing transferable GNNs. They show that GNNs are transferable between deterministic graphs obtained from the same graphon. Graphons and the associated theory is used to bolster theoretical aspects of other topics. \cite{Levie2023} propose a graph signal similarity measure for message passing neural networks based on the graphon cut distance. Hence they extend the cut distance to graph signals. Graph embeddings are used for a myriad of downstream tasks such as node classification, clustering and link prediction. \cite{Davison2024} investigate theoretical aspects of graph embeddings and show that embedding methods implicitly fit graphon models. Under the assumption the graph is exchangeable, they describe the limiting distribution of embeddings learned via subsampling the network. Graph homomorphisms are closely connected to graphons.  \cite{hanbaek2023} introduce motif sampling, which essentially sampling graph homomorphisms uniformly at random.  They propose two MCMC algorithms for sampling random graph homomorphisms.

% ------------------------------------------------------------------
\section{Sparse graphs with dense line graphs}\label{sec:results}
% -----------------------------------------------------------------

In this section, we show that there are sparse graphs whose line graphs are dense. 
In particular we show in Theorem~\ref{thm:main} that sparse graphs with square-degree property
(Definition~\ref{def:square})
have corresponding line graphs that are dense, and vice versa.
We show in Section~\ref{sec:havingconvergence} that under certain conditions, 
the corresponding line graphs converge with respect to the homomorphism density,
leading to graphons of line graphs. Therefore, this enables us to define a novel approach
to defining graph limits for sparse graphs by their associated line graphs.
Recall we denote graph sequences as $\{G_n\}_n$ and the corresponding line graph sequence as $\{H_m\}_m$. If the sequences converge, then we consistently use $W$ and $U$ for graphons corresponding to $\{G_n\}_n$ and $\{H_m\}_m$ respectively.
We defer many of the proofs of lemmas and theorems to Appendix~\ref{sec:appendix1}.

\subsection{Graph sequences}\label{sec:notationgeneral}
\begin{definition}[\bf Dense graph sequences]\label{def:dense}
    A sequence of graphs $\{G_n\}_n$ is dense if the number of edges $m$ grow quadratically with the number of nodes $n$, i.e., 
    $$ \liminf_{n \to \infty} \frac{m}{n^2} = c > 0 \, . %\density(G_n)
    $$ 
    We denote the set of all dense graph sequences by $D$.
\end{definition}
\begin{definition}[\bf Sparse graph sequences]\label{def:sparse}
     A sequence of graphs $\{G_n\}_n$ is sparse if the number of edges $m$ grow sub-quadratically with the number of nodes $n$, i.e., 
    $$ \lim_{n \to \infty} \frac{m}{n^2} = 0 \, . 
    $$
    We denote the set of all sparse graph sequences by $S$.
\end{definition}

For dense graph sequences, the density is bounded from below by a non-zero constant, whereas for sparse graph sequences it goes to zero. 
%Figure \ref{fig:venndensesparse} shows the Euler diagram of dense and sparse graph sequences. 
The density or $m/n^2$ of a sequence of dense graphs $\{G_n\}_n$ does not necessarily converge; the $\liminf$ is strictly positive, i.e., any converging subsequence has strictly positive density as $n \to \infty$. In contrast, the density or $m/n^2$  of sparse graphs converge to zero, i.e., the limit is equal to zero, not just the $\liminf$.  The set of dense graph sequences $D$ and the set of sparse graph sequences $S$ is non-intersecting. Furthermore, the complement of the union of $D$ and $S$, $ \overline{ D \cup S}$ is non-empty. It contains graph sequences $\{G_n\}_n$ such that $\liminf_{n \to \infty} m/n^2  =0 \neq \limsup_{n \to \infty} m/n^2 $, i.e, it is a mixture of dense and sparse graph sequences with the density of different subsequences converging to different limits with some converging to zero.   %These are disjoint sets. 
Next we define a property of a graph sequence that we call the \textit{square-degree property }. 

\begin{definition}[\bf Square-degree property $Sq$]\label{def:square} Let $\{G_n\}_n$ denote a sequence of graphs.  We say that $\{G_n\}_n$ exhibits the square-degree property if there exists some $c_1  > 0 $ and $N_0 \in \mathbb{N}$ such that for all $n \geq N_0$ we have
$$ \sum \deg v_{i, n}^2 \geq c_1 \left( \sum \deg v_{i, n} \right)^2  \, . 
$$  
We denote the set of graph sequences satisfying the square-degree property by $S_q$, i.e. if $\{G_n\}_n$ satisfies $Sq$ then $\{G_n\}_n \in S_q$.
\end{definition}

We note that Cauchy-Schwarz inequality gives $c_1 = 1/n$, which is not satisfactory as we need a strictly positive lower bound $c_1 > 0$ for all $n$. 
The square-degree property says that the ratio between the sum of the degree squared 
and square of the sum of degrees is bounded from below as $n$ goes to infinity. 
As the degree of a node is either zero or positive, this cannot be satisfied if the degree distribution is uniform, because then the sum of the mixed product terms $\deg v_{i, n} \times \deg v_{j,n}$ would hold the bulk weight compared to the  square terms $(\deg v_{i, n})^2$, especially as there are $n \choose{2}$ mixed product terms and only $n$ square terms. Therefore, we expect a graph sequence satisfying this property to have some inequalities in the degree distribution. For example, it may contain a set of ``big player'' nodes with large degree values. 

% \SK{I don't think we need the expectation definition.}
% \begin{definition}[\bf Square-degree property in expectation $SqE$]\label{def:squareInExp} Let $\{G_n\}_n$ denote a sequence of graphs.  We say that $\{G_n\}_n$ exhibits the square-degree property in expectation if there exists some $c_1 \in (0,1)$ and $N_0 \in \mathbb{N}$ such that for all $n \geq N_0$ we have
% $$ \mathbb{E} \left[\sum \deg v_{i, n}^2 \right] \geq c_1  \mathbb{E}\left[  \left(\sum \deg v_{i, n} \right)^2 \right]  \, . 
% $$  
% We denote the set of graph sequences satisfying the square-degree property in expectation by $S_{qE}$, i.e. if $\{G_n\}_n$ satisfies $SqE$ then $\{G_n\}_n \in S_{qE}$.    
% \end{definition}

%\cheng{Do graphons for graphs with star property and/or square-degree property exist?}\SK{Yes, stars.}

Using the square-degree property $Sq$ we characterize %simple 
graph sequences $\{G_n\}_n$ as shown in Figure \ref{fig:resultsdiagram}, in which the blue text represents results obtained in this paper.  
If a graph sequence converges in homomorphism density, then by Theorem~\ref{thm:BCconvergent} a graphon exists. 
In such instances, we consistently use $W$ and $U$ for graphons corresponding to $\{G_n\}_n$ and $\{H_m\}_m$ respectively. It is well-known that for converging dense graph sequences $\{G_n\}_n$, the graphon $W \neq 0$, while sparse graph sequences correspond to $W = 0$. This can be easily verified using the fact that for a converging graph sequence edge density and the non-zero area of the empirical graphon have the same limit. %\cheng{I don't think we've defined what $W=0$ or $W\neq 0$ means yet?}\SK{Done.}

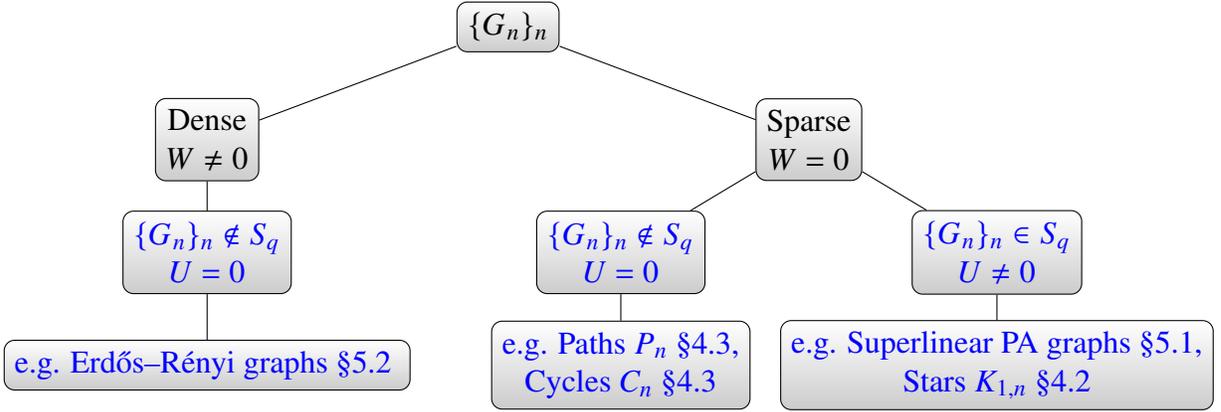
\begin{figure}[!ht]
    \centering
    \begin{tikzpicture}[level 1/.style={sibling distance=8cm},
level 2/.style={sibling distance=5cm}, 
level 3/.style={sibling distance=4cm}, 
  every node/.style = {shape=rectangle, rounded corners,
    draw, align=center,
    top color=white, bottom color=black!20}]
  \node {$\{G_n\}_n$}
    child { node {Dense \\ $W \neq 0$} 
     child { node[text=blue] { $\{G_n\}_n \notin S_q$ \\ $U= 0$ }  
     child {node[text=blue] {e.g. Erdős–Rényi graphs \S\ref{sec:erdos-renyi}  }}} } 
    child { node {Sparse \\ $W=0$}
      child { node[text=blue] {$ \{G_n\}_n \notin  S_q$ \\ $U = 0$} 
        child {node[text=blue] {e.g. Paths $P_n$ \S\ref{sec:linegraphs2}, \\ Cycles $C_n$ \S\ref{sec:linegraphs2}}}} 
     child { node[text=blue] {$ \{G_n\}_n \in S_q $ \\ $
      U \neq 0$ }
        child { node[text=blue] {e.g. Superlinear PA graphs \S\ref{sec:prefattachment}, \\ Stars $K_{1,n}$ \S\ref{sec:stardense2}} } }
      };   
    \end{tikzpicture}
    \caption{Characterization of graph sequences $\{G_n\}_n$ with results discussed in this paper in blue text. $\{G_n\}_n\in S_q$ indicates that the graph sequences satisfies the square-degree property. If $\{G_n\}_n$ converges to $W$ (with respect to the homomorphism density), then for dense graphs $\{G_n\}_n$,  $W \neq 0$, but $U = 0$. Recall that sparse graphs converge to $W = 0$. However if $\{G_n\}_n \in S_q$ and $\{H_m\}_m$ converges to $U$, then $U \neq 0$. For sparse $\{G_n\}_n \notin S_q$ then $U = 0$. }
    \label{fig:resultsdiagram}
\end{figure}

We show that dense graph sequences do not satisfy the square-degree property $Sq$ in Section~\ref{sec:starsquare}.  
If $\{G_n\}_n$ converges for dense sequences, then $\{H_m\}_m$ converges to $U = 0$, 
i.e., line graphs of dense graph sequences converge to the zero graphon.  
If $\{G_n\}_n$ is sparse then we know that $W = 0$  %\cheng{Which Theorem?, perhaps cite something} \SK{This is common knowledge}. 
However, we cannot distinguish between different sparse graphs using $W$. 
We suppose $\{G_n\}_n$ converges to $W$ and find conditions under which $\{H_m\}_m$ converges to $U$ in Section~\ref{sec:havingconvergence}.
If the line graphs $\{H_m\}_m$ of sparse $\{G_n\}_n$ that satisfy $Sq$ converge, then $U$ can distinguish different types of sparse graphs. 
This means that line graphs of sparse graphs can be more revealing which we illustrate in Sections~\ref{sec:resultsdeterministic} and \ref{sec:probgraphs}.  
The square-degree property $Sq$ is important because only graphs satisfying $Sq$ give rise to $U \neq 0$, if $\{H_m\}_m$ converges. 
Furthermore, not all sparse graphs satisfy $Sq$. %\cheng{The preceeding sentence sounds wrong.}\SK{Updated.}
Paths $P_n$ or cycles $C_n$ are such sparse graphs. Therefore, the subset of sparse graphs satisfying $Sq$ gives us certain types of graphs such as stars $K_{1,n}$ or superlinear preferential attachment graphs.  For these graph sequences the line graphs converge to the limit $U \neq 0$. We will explore the square-degree property next.

\subsection{Graph sequences with square-degree property $Sq$ are sparse}\label{sec:starsquare}

% \begin{restatable}{lem}{primelemma}
% \label{mylemma}
% Let $p$ be a prime number, and assume $p$ divides the product of two integers $a$ and $b$. Then $p$ divides $a$ or $p$ divides $b$.
% \end{restatable}

\begin{restatable}{lemma}{lemmapropsparse}\label{lemmapropsparse}
    If $\{G_n\}_n \in S_q \Longrightarrow \{G_n\}_n \in S$, i.e., graph sequences satisfying the square-degree property are sparse.
\end{restatable}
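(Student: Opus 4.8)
The plan is to exploit the elementary bound that in a simple graph on $n$ vertices every degree is at most $n-1$. Combining this with the square-degree property sandwiches $\sum \deg v_{i,n}^2$ between a constant multiple of $m^2$ (from below, via $Sq$) and a constant multiple of $mn$ (from above, via the degree bound), which forces the number of edges $m$ to grow at most linearly in $n$, hence in particular sub-quadratically.

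First I would record the handshake identity $\sum_i \deg v_{i,n} = 2m$, where $m = |E(G_n)|$; if $m = 0$ for a given $n$ then $m/n^2 = 0$ and that index is harmless, so I may assume $m \geq 1$. Next, since each $G_n$ is simple, $\deg v_{i,n} \leq n-1$ for every $i$, and therefore
$$\sum_i \deg v_{i,n}^2 \;\leq\; (n-1)\sum_i \deg v_{i,n} \;=\; 2m(n-1).$$
On the other hand, for $n \geq N_0$ Definition~\ref{def:square} gives $\sum_i \deg v_{i,n}^2 \geq c_1\big(\sum_i \deg v_{i,n}\big)^2 = 4 c_1 m^2$. Chaining the two inequalities yields $4 c_1 m^2 \leq 2m(n-1)$, and dividing through by $2 c_1 m > 0$ gives
$$m \;\leq\; \frac{n-1}{2 c_1} \qquad \text{for all } n \geq N_0.$$

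Finally I would divide by $n^2$: for $n \geq N_0$ we get $0 \leq m/n^2 \leq (n-1)/(2 c_1 n^2) \leq 1/(2 c_1 n)$, which tends to $0$ as $n \to \infty$. Hence $\lim_{n\to\infty} m/n^2 = 0$, so $\{G_n\}_n \in S$ by Definition~\ref{def:sparse}. There is essentially no obstacle in this argument; the only minor points are the trivial case $m = 0$ and the fact that the hypothesis is only assumed for $n \geq N_0$, neither of which affects the limit. It is worth remarking that the argument actually proves the stronger conclusion $m = O(n)$: graph sequences with the square-degree property have at most linearly many edges.
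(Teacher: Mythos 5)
Your proof is correct and follows the same basic strategy as the paper's: sandwich $\sum_i \deg v_{i,n}^2$ between the lower bound $4c_1 m^2$ supplied by the square-degree property and an upper bound coming from $\deg v_{i,n} \leq n-1$, then conclude $m/n^2 \to 0$. The difference is in the upper bound. The paper bounds each squared degree by $(n-1)^2$ and sums over the $n$ vertices to get $\sum_i \deg v_{i,n}^2 \leq n(n-1)^2$, which combined with $4c_1 m^2 \leq n(n-1)^2$ yields only $m \in O(n^{3/2})$. You instead write $\deg v_{i,n}^2 \leq (n-1)\deg v_{i,n}$ and invoke the handshake identity, getting $\sum_i \deg v_{i,n}^2 \leq 2m(n-1)$; retaining that factor of $m$ lets you cancel one power and deduce $m \leq (n-1)/(2c_1)$, i.e.\ $m \in O(n)$. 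Both bounds suffice for sparsity, but yours is strictly sharper and shows that sequences in $S_q$ in fact have at most linearly many edges --- a quantitatively stronger structural fact than the paper records. Your handling of the trivial case $m=0$ and of the restriction to $n \geq N_0$ is careful and correct; there is no gap.
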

\begin{proof}
As $\{G_n\}_n \in S_q$ there exist some $c_1 > 0$ and $N_0 \in \mathbb{N}$ such that for all $n \geq N_0$ we have
$$ \sum \deg v_{i, n}^2 \geq c_1 \left( \sum \deg v_{i, n} \right)^2  \, . 
$$  
As 
\begin{equation}\label{eq:sq2}
     n(n-1)^2 \geq \sum \deg v_{i, n}^2 \geq c_1 \left( \sum \deg v_{i, n} \right)^2  = 4 c_1 m^2 \, , 
\end{equation}
we get $m \in O(n^{3/2})$ making $\{G_n\}_n$ sparse. From the above inequality we can see that 
$$ \limsup_{n \to \infty} \frac{m^2}{n^4} = \limsup_{n \to \infty} \frac{1}{4c_1 n} =   0 \, , 
$$
making $\lim_{n \to \infty} m/n^2 = 0$.
\end{proof}

% \begin{proof}
% As $\{G_n\}_n \in S_q$ there exist some $c_1 > 0$ and $N_0 \in \mathbb{N}$ such that for all $n \geq N_0$ we have
% $$ \sum \deg v_{i, n}^2 \geq c_1 \left( \sum \deg v_{i, n} \right)^2  \, . 
% $$  
% As 
% \begin{equation}\label{eq:sq2}
%      n(n-1)^2 \geq \sum \deg v_{i, n}^2 \geq c_1 \left( \sum \deg v_{i, n} \right)^2  = 4 c_1 m^2 \, , 
% \end{equation}
% we get $m \in O(n^{3/2})$ making $\{G_n\}_n$ sparse. From the above inequality we can see that 
% $$ \limsup_{n \to \infty} \frac{m^2}{n^4} = \limsup_{n \to \infty} \frac{1}{4c_1 n} =   0 \, , 
% $$
% making $\lim_{n \to \infty} m/n^2 = 0$.
% \end{proof}

Lemma \ref{lemmapropsparse} shows that the sparse graphs are a superset of graphs satisfying the square-degree property. However, not all sparse graphs satisfy $Sq$, for example paths and cycles. Therefore
$$ S_q \subset S \, . 
$$

\begin{restatable}{corollary}{corGnDense} If $\{G_n\}_n \in D \Longrightarrow \{G_n\}_n \notin S_q$, i.e., 
dense graph sequences do not satisfy the square-degree property.     
\end{restatable}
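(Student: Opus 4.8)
The plan is to observe that this corollary is exactly the contrapositive of Lemma~\ref{lemmapropsparse} once one uses that the classes $D$ and $S$ are disjoint (as noted right after Definition~\ref{def:sparse}). Concretely, I would argue by contradiction: suppose some sequence satisfies $\{G_n\}_n \in D$ and yet $\{G_n\}_n \in S_q$. By Lemma~\ref{lemmapropsparse}, membership in $S_q$ forces $\{G_n\}_n \in S$, i.e. $\lim_{n\to\infty} m/n^2 = 0$. But $\{G_n\}_n \in D$ means $\liminf_{n\to\infty} m/n^2 = c > 0$. Since a convergent sequence has $\liminf$ equal to its limit, this would force $c = 0$, a contradiction. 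Hence no dense sequence can lie in $S_q$, which is the claim.

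If one prefers a self-contained derivation rather than invoking Lemma~\ref{lemmapropsparse} as a black box, the same chain of inequalities used in its proof works directly. Assuming $\{G_n\}_n \in S_q$ with constant $c_1 > 0$ and threshold $N_0$, for every $n \geq N_0$ one has
\begin{equation}
  n(n-1)^2 \;\geq\; \sum \deg v_{i,n}^2 \;\geq\; c_1 \left( \sum \deg v_{i,n} \right)^2 \;=\; 4 c_1 m^2 ,
\end{equation}
so that $m/n^2 \leq (n-1)/\bigl(2\sqrt{c_1}\, n^{3/2}\bigr) \to 0$; this contradicts $\liminf_{n\to\infty} m/n^2 = c > 0$, so again $\{G_n\}_n \in D$ rules out $\{G_n\}_n \in S_q$.

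There is essentially no real obstacle here — the result is a one-line logical consequence of Lemma~\ref{lemmapropsparse}. The only points requiring a little care are bookkeeping ones: that $\liminf = c > 0$ and $\lim = 0$ are genuinely incompatible, and that the constant $c_1$ in Definition~\ref{def:square} is fixed independently of $n$, which is precisely what makes the bound on $m/n^2$ in the displayed inequality decay to zero rather than merely stay bounded.
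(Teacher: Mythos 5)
Your proposal is correct and follows essentially the same route as the paper: the contrapositive of Lemma~\ref{lemmapropsparse} combined with the disjointness of $D$ and $S$, plus the same direct verification via the inequality $n(n-1)^2 \geq 4c_1 m^2$ showing $m/n^2 \to 0$. Both your logical argument and your self-contained inequality chain match the paper's proof.
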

% \begin{proof} 
% As $D \subset \bar{S}$, where $\bar{S}$ denotes the complement of $S$, this is true because of the contrapositive of Lemma \ref{lemmapropsparse}. It can be quickly verified that dense graph sequences do  not satisfy  equation \eqref{eq:sq2} in  Lemma \ref{lemmapropsparse} because for dense graphs  $m  \geq c n^2$ for some $c>0$.
% \end{proof}

\subsection{Line graphs of graphs with square-degree property}\label{sec:linegraphs1}
% \cheng{Style question: I wonder whether there is a difference between saying \\ (1) $\{G_n\}_n\in S_q \Longrightarrow \{H_m\}_m\in D$, and $\{H_m\}_m\in D \Longrightarrow \{G_n\}_n\in S_q$ \\ (2) $\{G_n\}_n\in S_q \equiv \{H_m\}_m \in D$.} \SK{I have used option 2. But I don't see it often in math papers.}

\begin{theorem}\label{thm:main}
 Let $\{G_n\}_n\in S$ be a sparse graph sequence. Let $\{H_m\}_m$ be the corresponding sequence of line graphs with $H_m = L(G_n)$. Then $\{G_n\}_n\in S_q \equiv \{H_m\}_m \in D$, i.e., $\{G_n\}_n$ satisfies $Sq$ if and only if $\{H_m\}_m$ is dense. 
 \end{theorem}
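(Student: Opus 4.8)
The plan is to use the density formula for line graphs, Equation~\eqref{eq:linedensity}, together with the definition of dense graph sequences, Definition~\ref{def:dense}, and work directly with $\liminf$. Writing $D_n := \sum (\deg v_{i,n})^2$ and noting $\sum \deg v_{i,n} = 2m$, Lemma~\ref{lemma:linegraphs}(\ref{lemma:linegraphs:numedges}) says $H_m$ has $\frac{1}{2}D_n - m$ edges on $m$ vertices, so the density of $H_m$ is
$$ \density(H_m) = \frac{\frac{1}{2}D_n - m}{\frac{1}{2}m(m-1)} = \frac{D_n - 2m}{m(m-1)} \, . $$
The number of edges of $H_m$ is $\frac{1}{2}D_n - m$, and since we want density in the sense of Definition~\ref{def:dense}, I should compare the number of edges of $H_m$ against $m^2$ (the square of its vertex count). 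So the quantity to analyze is
$$ \frac{\tfrac12 D_n - m}{m^2} = \frac{D_n}{2m^2} - \frac{1}{m} \, . $$

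First I would prove the forward direction: $\{G_n\}_n \in S_q \implies \{H_m\}_m \in D$. By the square-degree property there are $c_1 > 0$ and $N_0$ with $D_n \geq c_1 (2m)^2 = 4c_1 m^2$ for $n \geq N_0$, hence $\frac{D_n}{2m^2} \geq 2c_1$. Since $\{G_n\}_n$ is sparse, $m \to \infty$ (here I would note that $m\to\infty$ follows from $\{G_n\}_n\in S_q$ combined with $G_n$ being non-null for large $n$; a subtlety is that one must assume $\{G_n\}_n$ is not eventually edgeless, which is implicit since otherwise line graphs are undefined), so $\frac{1}{m} \to 0$, giving $\liminf_m \frac{\frac12 D_n - m}{m^2} \geq 2c_1 > 0$, so $\{H_m\}_m \in D$. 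Conversely, if $\{H_m\}_m \in D$ then $\liminf_m \left(\frac{D_n}{2m^2} - \frac{1}{m}\right) = c > 0$, so for all large $n$ we have $\frac{D_n}{2m^2} - \frac{1}{m} \geq c/2$, i.e. $D_n \geq 2m^2(c/2 + 1/m) \geq c\, m^2 = \frac{c}{4}(2m)^2 = \frac{c}{4}\left(\sum \deg v_{i,n}\right)^2$, which is exactly the square-degree property with constant $c_1 = c/4$.

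The main obstacle, such as it is, is bookkeeping around the edge cases rather than any deep step: I need $m \to \infty$ so that the $-1/m$ term vanishes and does not interfere with the $\liminf$, and I need to handle the indexing mismatch cleanly (the sequence $\{H_m\}_m$ is indexed by $m = m(n)$, and $m$ is nondecreasing in $n$ but the map need not be injective; what matters is only that $m(n) \to \infty$, which holds because $D_n \geq 4c_1 m^2$ forces $m$ to grow and, being a sparse non-null sequence, $m$ is unbounded). I would also remark that the hypothesis $\{G_n\}_n \in S$ in the theorem statement is not strictly needed for the equivalence — it is recorded because Lemma~\ref{lemmapropsparse} already shows $S_q \subset S$, so the interesting content is precisely that a \emph{sparse} graph sequence can have \emph{dense} line graphs. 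Once the density computation above is in place, both implications are immediate, so I would present the single display for $\density(H_m)$ and then dispatch both directions in a few lines each.
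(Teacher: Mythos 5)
Your proof is correct and takes essentially the same route as the paper's: both directions rest on the line-graph edge count from Lemma~\ref{lemma:linegraphs}(\ref{lemma:linegraphs:numedges}), substituting the square-degree bound $\sum(\deg v_{i,n})^2 \geq 4c_1 m^2$ to get the forward implication and inverting the density inequality for the converse, with your normalization by $m^2$ instead of $\tfrac12 m(m-1)$ being immaterial. If anything, your converse is spelled out more carefully than the paper's (which passes from $\density(H_m)\geq c$ to the square-degree property with an unargued ``this can only happen when''), and your flagging of the need for $m\to\infty$ is a reasonable extra precaution, though your claim that it follows from $S_q$ alone is not quite right --- one must simply assume it, as the paper does implicitly.
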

%  we have the following:
%  \begin{enumerate}
%      \item If $\{G_n\}_n$ satisfies  the square-degree property, that is $\{G_n\}_n\in S_q$, then the sequence of line graphs $\{H_m\}_m$ are dense, that is, 
% $$ \liminf_{m \to \infty} \density( H_m) = c >0 \, .
% $$
%indicating that the non-zero elements of the line-graphon $W_\ell$ of $\{G_n\}_n$ have positive area.
%     \item If the sequence of line graphs  $\{H_m\}_m$ are dense, then $\{G_n\}_n$ satisfies the square-degree property, that is $\{G_n\}_n\in S_q$.    
%  \end{enumerate}
% \end{theorem}
% Suppose $\{G_n\}_n$ denotes a graph sequence that satisfies the star property or the square-degree property. Then the line graphs of $\{G_n\}_n$ are dense, i.e., 
% $$ \lim_{n \to \infty} \density( L(G_n)) = c >0 \, .
% $$

\begin{proof}
\begin{enumerate}
    \item First we show $\{G_n\}_n\in S_q \Longrightarrow \{H_m\}_m\in D$. Suppose  $\{G_n\}_n \in S_q$ . Then from Definition \ref{def:square} there exists some $c_1 > 0$ and $N_0 \in \mathbb{N}$ such that for all $n \geq N_0$ we have
$$ \sum \deg v_{i, n}^2 \geq c_1 \left( \sum \deg v_{i, n} \right)^2  = 4 c_1 m^2 \, ,
$$     
where $m$ denotes the number of edges in $G_n$. From equation~\eqref{eq:linedensity} the edge density of the line graph $L(G_n)$ is
\begin{align}
    \density(H_m) & = \frac{\frac{1}{2}\sum_i (\deg\,  v_{i,n})^2 - m}{\frac{1}{2}m(m-1)} \, ,  \\
    & \geq \frac{\frac{1}{2}4c_1m^2 - m }{\frac{1}{2}m(m-1)} \, ,  \\
    & = \frac{2c_1 - \frac{1}{m}}{\frac{1}{2} -\frac{1}{2m} } \, .
\end{align}
Thus,
$$ \liminf_{m \to \infty} \density(H_m) = 4c_1 > 0 \, . 
$$
% Note that if $\{G_n\}_n$ satisfies the star property this holds true as the star property implies the square-degree property.  %Even in cases where $m$ is bounded we get line graphs with non-zero density. When the line-graphs have positive density as $n \to \infty$, the associated line-graphon has positive area contributed by the non-zero elements of the line-graphon.

  \item Next we show $\{H_m\}_m\in D \Longrightarrow \{G_n\}_n\in S_q$. If the line graphs $\{H_m\}_m$ are dense, i.e., $\{H_m\}_m \in D$ we have
  \begin{align}
    \density(H_m) & = \frac{\frac{1}{2}\sum_i (\deg\,  v_{i,n})^2 - m}{\frac{1}{2}m(m-1)} \geq c > 0  \quad \text{for all} \quad m > M_0 \in \mathbb{N}. 
\end{align}
This can only happen when 
$$ \sum_i (\deg\,  v_{i,n})^2   \geq c'm^2 \quad \text{where} \quad c' >0\, ,   \\
$$
implying that $\{G_n\}_n$ satisfies the square-degree property. 

% \begin{align*}
%     \text{As} \quad n-1 & \geq \deg\,  v_{i,n} \, ,  \\
%    \text{we have} \quad n(n-1)^2 & \geq \sum_i (\deg\,  v_{i,n})^2   \geq c'm^2  \, , \\
%   \text{giving us} \quad   \frac{1}{c'}\sqrt{n}(n-1) & \geq m \, .
% \end{align*}
% Thus, $m$, the number of edges in $G_n$ grow sub-quadratically making $\{G_n\}_n$ sparse. 
\end{enumerate}
% In both cases $\{G_n\}_n$ is sparse from Lemma \ref{lemmapropsparse}. 
\end{proof}

% Figure \ref{fig:vennstarsquaresparse} illustrates the results of Theorem \ref{thm:main}, where the shaded region denotes dense line graphs and the non-shaded region denotes sparse line graphs. The graph sequence $\{G_n\}_n$ satisfies $Sq$, i.e., $\{G_n\}_n \in S_q$ if and only if  the line graphs $\{H_m\}_m$ are dense. 

% \begin{figure}[!ht]
%     \centering
%     \begin{tikzpicture}
%         \draw (2,2) ellipse (4cm and 1.5cm);
%         \node[] at (2.5,3.75) {$S = \{ \text{sparse} \, \{G_n\}_n \}$ };
%         \draw[fill=gray!30] (2,2) ellipse (2cm and 1cm);
%         \node[] at (2,2) { $S_q$};
%         % \draw (1,2) ellipse (1.5cm and 1cm);
%         %\node[] at (2.5,2.5) { $St$};
%         % \node[] at (1,2) { $S_t$};
%         \node[] at (4.75,1.5) { $\{P_n\}_n$};
%         \node[] at (5,2.5) { $\{C_n\}_n$};
%     \end{tikzpicture}
%     \caption{An Euler diagram of the space of sparse graphs $S$ with $S_q \subset S$. The set $S\backslash S_q$ is non-empty as paths $\{P_n\}_n$, cycles $\{C_n\}_n$ and other graphs live here. The line graphs $\{H_m\}_m$ are dense in the shaded set  $S_q$.    }
%     \label{fig:vennstarsquaresparse}
% \end{figure}

Next we explore graph sequences $\{G_n \}_n$ that do not satisfy $Sq$, i.e. $\{G_n \}_n \notin S_q$.

\begin{restatable}{lemma}{lemmanotsqu}\label{lemma:notsqu}
    If $\{G_n \}_n$ does not satisfy the square-degree property, i.e., $\{G_n \}_n \notin S_q$, then
% there exists a subsequence $\{G_{n'} \}_{n'}$ such that $H_{m'} = L(G_{n'})$ and
    %      $$ \liminf_{m' \to \infty}\density(H_{m'}) = 0 \, . 
    % $$
    $$ \normalfont \liminf_{m \to \infty} \density(H_{m}) = 0 \, . 
    $$
Additionally if the graph sequence $\{H_m \}_m$ is convergent in edge density, then 
    $$ \normalfont \lim_{m \to \infty} \density(H_m) = 0 \, . 
    $$
\end{restatable}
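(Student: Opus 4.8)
The plan is to work directly with the edge density formula for line graphs in Equation~\eqref{eq:linedensity}, namely $\density(H_m) = \bigl(\tfrac12 \sum_i (\deg v_{i,n})^2 - m\bigr) \big/ \bigl(\tfrac12 m(m-1)\bigr)$, and analyze what the failure of the square-degree property forces on the numerator. The hypothesis $\{G_n\}_n \notin S_q$ is a negation of an "eventually bounded below" statement, so I need to be careful about quantifiers: it means that for every $c_1 > 0$ and every $N_0$, there is some $n \geq N_0$ with $\sum_i (\deg v_{i,n})^2 < c_1 \bigl(\sum_i \deg v_{i,n}\bigr)^2 = 4 c_1 m^2$. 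Equivalently, $\liminf_{n \to \infty} \bigl(\sum_i (\deg v_{i,n})^2\bigr)/m^2 = 0$, i.e. there is a subsequence along which $\sum_i (\deg v_{i,n})^2 = o(m^2)$.

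First I would pass to that subsequence $\{n_k\}$ along which $\sum_i (\deg v_{i,n_k})^2 / m^2 \to 0$ (where $m = m(n_k)$ is the edge count of $G_{n_k}$), and note that along it the corresponding line-graph sizes $m(n_k)$ tend to infinity — this needs a small argument, since if the number of edges stayed bounded the line graphs would be finite and $\liminf_m$ would be over a finite set; but if only finitely many distinct $m$ appeared then $\sum_i (\deg v)^2 \geq m$ always (each edge contributes to two degree-squares, crudely $\sum (\deg v)^2 \geq \sum \deg v = 2m$), and one checks this forces $m \to \infty$ along the subsequence or else the ratio cannot go to $0$ unless $m$ itself does, in which case we can discard those terms. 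Having $m \to \infty$, I plug the subsequential bound into the density formula: the numerator is $\tfrac12 \sum_i (\deg v_{i,n_k})^2 - m = o(m^2)$ (using $m = o(m^2)$), while the denominator is $\tfrac12 m(m-1) \sim \tfrac12 m^2$, so $\density(H_{m(n_k)}) \to 0$. Since this is a subsequence of line graphs with densities tending to $0$, and densities are nonnegative, $\liminf_{m \to \infty} \density(H_m) = 0$.

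For the second statement, suppose in addition that $\{H_m\}_m$ converges in edge density, so $\lim_{m\to\infty} \density(H_m)$ exists. A convergent sequence has a unique limit equal to any of its subsequential limits, and we have just exhibited a subsequence converging to $0$; hence the full limit is $0$.

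The main obstacle I anticipate is purely the quantifier bookkeeping around $\notin S_q$ and the degenerate possibility that the line-graph sequence $\{H_m\}_m$ is indexed only along the sparse "bad" subsequence versus being a genuine sequence with $m \to \infty$; one must make sure the statement "$\liminf_{m\to\infty} \density(H_m) = 0$" is interpreted over the actual index set of line graphs arising from $\{G_n\}_n$, and that infinitely many distinct values of $m$ occur (which holds whenever $|E(G_n)| \to \infty$, the only case of interest). Apart from that, everything is a direct substitution into \eqref{eq:linedensity} together with the elementary bounds $2m \leq \sum_i (\deg v_{i,n})^2 \leq n(n-1)^2$.
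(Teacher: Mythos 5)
Your proposal is correct and follows essentially the same route as the paper: both recast $\{G_n\}_n \notin S_q$ as the existence of a subsequence along which $\sum_i (\deg v_{i,n})^2 / m^2 \to 0$ (the paper builds this subsequence explicitly via a diagonal argument over the sets $A_{c_1}$, you via the $\liminf$ characterization), substitute into Equation~\eqref{eq:linedensity} to get a subsequence of densities tending to $0$, and invoke uniqueness of limits for the convergent case. Your extra check that $m \to \infty$ along the subsequence, via $\sum_i (\deg v_{i,n})^2 \geq 2m$, is a small point the paper leaves implicit.
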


Lemma \ref{lemma:notsqu} coupled with Theorem \ref{thm:main} show that  dense $ \{H_m\}_m$ can only occur as a result of $\{G_n \}_n \in S_q$. This is shown in Figure \ref{fig:venndensesparsewithSq} with the shaded area representing dense $ \{H_m\}_m$.

\begin{figure}[!ht]
    \centering
    \begin{tikzpicture}
        \draw (0,0) rectangle (8cm, 2.25cm);
        \draw (2,1.125) ellipse (1.5cm and 0.75cm);
        \draw (6,1.125) ellipse (1.5cm and 0.75cm);
        \draw[fill=gray!30] (6,1.125) ellipse (0.75cm and 0.25cm);
        % \draw (1.5,1.75) circle [radius=1];
        % \draw (4.5,1.75) circle [radius=1];
        \node[] at (2,1.125) {Dense};
        \node[] at (6,1.85) {Sparse};
        \node[] at (6,1.125) {$S_q$};
        \node[] at (0.5,2.5) {$\{G_n\}_n$};
        \node[] at (6.8,1.5) { $\{P_n\}_n$};
        \node[] at (5.5,0.7) { $\{C_n\}_n$};
    \end{tikzpicture}
    \caption{The Euler diagram of the space of dense and sparse graph sequences, and indicate
    where there are graph sequences satisfying the square-degree property. The set $S\backslash S_q$ is non-empty as paths $\{P_n\}_n$, cycles $\{C_n\}_n$ and other graphs live here. The line graphs $\{H_m\}_m$ are dense in the shaded set  $S_q$. }
    \label{fig:venndensesparsewithSq}
\end{figure}
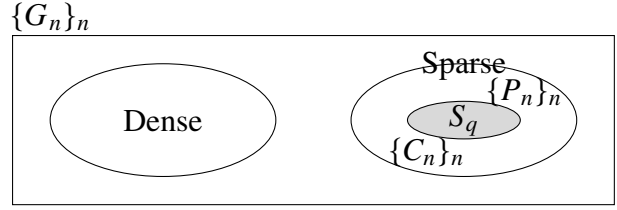

\subsection{Conditions for non-zero graphons of line graphs} \label{sec:havingconvergence} % homomorphism density convergence
In this section we explore graph sequences converging in homomorphism density. 
%\cheng{I'm confused about the section title, whether we have homomorphism density convergence or edge density convergence}. \SK{My bad. Making up words. It is homomorphism density convergence.}
We suppose $\{G_n\}_n$ converges to $W$ and show that under the square-degree property, $\{H_m\}_m$ converges to a non-zero $U$. We will start with homomorphism densities. 

\subsubsection{Revisiting graph homomorphisms}
% \SK{
% Need to sort out here!!! \\
% First need to show
% $$  t(F, H_m)  =  \sum_{i, j \in V(F)} \prod_{ij \in E(F) } U_m(q_i, q_j) \cdot \frac{1}{m^{|V(F)|}} 
% $$
% What we have is
% $$  t(F, W) = \int_{[0,1]^{|V(F)|}} \prod_{ij \in E(F)} W(x_i, x_j) \,   dx \, .
% $$
% Put 2.2 and 2.3 equations in Convergent sequences together to get this.
% }
% \cheng{Where do we use $t(F, H_m)$?}\SK{We don't use it. I thought it would be nice to have it. }

Recall when defining the empirical graphon ( Definition \ref{def:empiricalgraphon} ) we divide the interval [0,1] into $n$ equal subintervals $I_1, I_2, \ldots, I_n$ where each $I_j$ has length $1/n$.  We use this construction in the next Lemma. Furthermore, recall that the homomorphism density $t( \edge, G_n) = 2m/n^2$ while the edge density,  $\density(G_n) = 2m/(n(n-1))$ (Section \ref{sec:edgevshomdensity}) making the two densities converge to the same limit. 

%\cheng{Are $r_i$ and $q_i$ the center points of intervals? I somehow expect intervals to have two ends.} \SK{Added a small paragraph above.}
 \begin{restatable}{lemma}{lemmahomedgedensity}\label{lemma:homedgedensity}
 Let $H_m = L(G_n)$ and let $W_{n}$ be the empirical graphon of $G_n$ with $[0,1]$ divided into $n$ equal intervals $\{r_1, \ldots r_n \}$. Let $U_m$ be the empirical graphon of $H_m$ with $[0,1]$ equally divided into $m$ intervals $\{q_1, \ldots, q_m\}$.  Then $t( \edge, H_m)$ can be written as
 %\cheng{Do we want to call $t( \edge, H_m)$ ``edge density'' instead of homomorphism density? I thought homomorphism density $t( F, H_m)$ is with respect to all graphs $F$}\SK{Paragraph above addresses this. Removed homomorphism density in previous sentence.}
 $$ t(\edge, H_m) = \sum_{i,j} U_m(q_i, q_j) \cdot \frac{1}{m^2} = \sum_{\substack{i,j,k \\ i \neq j}} W_n(r_i, r_k) W_n(r_k, r_j) \cdot \frac{1}{m^2} \, . 
 $$
\end{restatable}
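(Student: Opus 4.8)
The plan is to compute $t(\edge, H_m)$ from both sides and show they agree with the claimed double/triple sum. On the one hand, for any finite simple graph $H_m$ on $m$ vertices, the edge homomorphism density counts ordered pairs $(i,j)$ of vertices with $ij$ an edge, divided by $m^2$; since the empirical graphon $U_m$ takes value $1$ exactly on squares $q_i \times q_j$ with $ij \in E(H_m)$ and $0$ elsewhere, we have $\sum_{i,j} U_m(q_i,q_j)\cdot \tfrac{1}{m^2} = \tfrac{2|E(H_m)|}{m^2} = t(\edge, H_m)$. This is just unpacking Definition~\ref{def:empiricalgraphon} together with the formula for $\hom(\edge, H_m)$ in Definition~\ref{def:graphhomo}, so it is essentially immediate.

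The content is in the second equality. Here I would use the defining property of the line graph: the vertices of $H_m$ are the edges of $G_n$, and two distinct vertices of $H_m$ are adjacent precisely when the corresponding two edges of $G_n$ share a common endpoint. So I would set up a bijection between vertices of $H_m$ and edges of $G_n$, and then observe that an \emph{ordered} pair of adjacent vertices of $H_m$ corresponds to an ordered pair of distinct edges $(e, e')$ of $G_n$ sharing a vertex $v_k$. An edge $e$ incident to $v_k$ is determined by its other endpoint, say $v_i$, and likewise $e'$ by $v_j$; the condition that $e \ne e'$ translates to $i \ne j$ (the shared vertex $k$ being fixed). This gives the correspondence: ordered adjacent pairs in $H_m$ $\leftrightarrow$ triples $(i,j,k)$ with $i \ne j$, $v_i v_k \in E(G_n)$, and $v_j v_k \in E(G_n)$. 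Summing $W_n(r_i,r_k)W_n(r_k,r_j)$ over all such triples (the product being $1$ exactly on these triples by Definition~\ref{def:empiricalgraphon}) counts exactly these ordered adjacent pairs, and dividing by $m^2$ gives the middle expression.

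The one subtlety I want to be careful about is the possibility of \emph{multiply-counted} pairs of edges: two distinct edges $e, e'$ of a simple graph $G_n$ can share \emph{at most one} vertex (if they shared two, they would be the same edge, by simplicity). Hence the map $(i,j,k) \mapsto (e,e')$ is well-defined and injective on the relevant triples, so there is no overcounting — this is the step I expect to be the main (if modest) obstacle, and it is exactly where the simple-graph hypothesis is used. A further small point: one must check that a triple with $v_i v_k, v_j v_k \in E(G_n)$ and $i \ne j$ cannot have $i = k$ or $j = k$ causing a degenerate "edge" — but $v_i v_k \in E(G_n)$ already forces $i \ne k$ since $G_n$ has no loops, so this is automatic. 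Assembling the three pieces — the $U_m$ side, the combinatorial correspondence, and the $W_n$ side — yields the stated identity.
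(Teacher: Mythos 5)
Your proposal is correct and follows essentially the same route as the paper: the first equality by unpacking the homomorphism count for a single edge against the empirical graphon $U_m$, and the second by the bijection between ordered adjacent pairs of vertices in $H_m$ and triples $(i,j,k)$ with $i\neq j$ and $v_iv_k, v_jv_k \in E(G_n)$. Your explicit check that two distinct edges of a simple graph share at most one vertex (so the correspondence has no overcounting) is the same one-to-one-and-onto claim the paper invokes, just spelled out more carefully.
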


\subsubsection{Converging dense graph sequences}

\begin{restatable}{lemma}{lemmadensegraphs}\label{lemma:densegraphs}
    Let $\{G_n\}_n$ be a dense graph sequence converging to $W$ and let $H_m = L(G_n)$. Then $\{ H_m \}_m$  converges to $U(x,y) = 0$ almost everywhere. 
\end{restatable}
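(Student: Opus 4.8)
The plan is to use Lemma~\ref{lemma:homedgedensity} to control the edge density of the line graphs, show it tends to zero, and then combine this with the results of \cite{borgs2008convergent, borgs2011limits} relating edge density to the area of the limit graphon. First I would recall that since $\{G_n\}_n$ is dense, by Definition~\ref{def:dense} we have $\liminf_{n\to\infty} m/n^2 = c > 0$. I want to bound $\density(H_m)$ from above. Starting from \eqref{eq:linedensity}, $\density(H_m) = \bigl(\tfrac12 \sum_i (\deg v_{i,n})^2 - m\bigr)/\bigl(\tfrac12 m(m-1)\bigr)$, and using the crude bound $\deg v_{i,n} \le n-1$ for every vertex gives $\sum_i (\deg v_{i,n})^2 \le (n-1)\sum_i \deg v_{i,n} = 2m(n-1)$. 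Hence $\density(H_m) \le \bigl(m(n-1) - m\bigr)/\bigl(\tfrac12 m(m-1)\bigr) = 2(n-2)/(m-1)$. Since $m \ge c' n^2$ for large $n$ (any $c' < c$ works eventually, along the relevant subsequence, or more carefully using $\liminf$), this upper bound is $O(n)/O(n^2) = O(1/n) \to 0$. So $\density(H_m) \to 0$, and since $t(\edge, H_m)$ and $\density(H_m)$ have the same limit (Section~\ref{sec:edgevshomdensity}), also $t(\edge, H_m) \to 0$.

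Next I would translate this into a statement about the limit graphon $U$. By hypothesis $\{G_n\}_n$ converges to $W$; I should first argue that $\{H_m\}_m$ converges to \emph{some} graphon $U$ — this requires a little care since convergence of $\{G_n\}_n$ in homomorphism density does not a priori give convergence of $\{H_m\}_m$. The cleanest route is: since $t(\edge, H_m)\to 0$ and edge density is the $\delta_\square$-limit-relevant quantity, and more to the point any subsequential limit $U'$ of $\{H_m\}_m$ (which exists by compactness of $(\tilde{\mathcal W}, \delta_\square)$, \cite{borgs2008convergent}) must satisfy $t(\edge, U') = \int_{[0,1]^2} U'(x,y)\,dx\,dy = \lim t(\edge, H_m) = 0$. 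A nonnegative measurable function with zero integral is zero almost everywhere, so $U' = 0$ a.e. Since every subsequential limit is the zero graphon, $\{H_m\}_m$ converges to $U = 0$ a.e. in the cut metric, which is the claim (invoking Theorem~\ref{thm:BCconvergent2} to phrase convergence either in $\delta_\square$ or in homomorphism density as needed).

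The main obstacle is the bookkeeping around $\liminf$ versus genuine limits: Definition~\ref{def:dense} only guarantees $\liminf m/n^2 > 0$, so $m/n^2$ need not converge, and one must be slightly careful that the upper bound $\density(H_m) \le 2(n-2)/(m-1) \to 0$ holds along the full sequence — but this is fine because $\liminf m/n^2 = c>0$ means $m \ge \tfrac{c}{2}n^2$ for all sufficiently large $n$, so the bound $O(1/n)$ is valid for the whole tail regardless of oscillation. A secondary point worth stating explicitly is the passage from "edge density of $H_m$ tends to zero" to "the limit graphon has zero integral," which relies on the fact that for a convergent graph sequence the empirical graphon's $L^1$ norm (equivalently the edge density) converges to the integral of the limit graphon; this is standard but should be cited to Theorem~\ref{thm:BCconvergent} / Theorem~\ref{thm:BCconvergent2}. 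Everything else is a routine degree-sum estimate.
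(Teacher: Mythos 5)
Your proposal is correct, and it reaches the conclusion by a genuinely different route from the paper for the key estimate. The paper bounds $t(\edge, H_m)$ via Lemma~\ref{lemma:homedgedensity}: it rewrites the edge count of $H_m$ as a sum $\sum_{i,j,k} W_n(r_i,r_k)W_n(r_k,r_j)$, recognizes this (up to normalization) as $n^3\, t(\twoedges, G_n)$, and obtains $t(\edge, H_m) \le \frac{1}{n}\, t(\twoedges, G_n)\cdot (n^2/m)^2 \to 0$, using the hypothesis that $\{G_n\}_n$ converges so that $t(\twoedges, G_n)$ has a limit, together with $m/n^2 \to c/2 > 0$. You instead use the elementary bound $\deg v_{i,n} \le n-1$, giving $\sum_i(\deg v_{i,n})^2 \le 2m(n-1)$ and hence $\density(H_m) \le 2(n-2)/(m-1) = O(1/n)$ once $m \ge \tfrac{c}{2}n^2$ for large $n$. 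Your estimate is both simpler and slightly stronger in scope: it uses only denseness of $\{G_n\}_n$, not its convergence, whereas the paper's route genuinely invokes convergence of the $\twoedges$-density (and, strictly, only needs $\liminf m/n^2 > 0$ rather than an actual limit of $m/n^2$, a point your $\liminf$ bookkeeping handles more carefully than the paper's phrasing ``as $m/n^2$ goes to $c/2$''). For the second step the paper argues directly that $\lVert U_{H_m} - 0\rVert_\square = \lVert U_{H_m}\rVert_\square$ equals the edge density of $H_m$ (the cut-norm supremum of a nonnegative step function is attained at $S = T = [0,1]$) and hence tends to zero, so $\delta_\square(U_{H_m},0)\to 0$ and Theorem~\ref{thm:BCconvergent2} gives convergence to $U=0$; your detour through compactness of $(\tilde{\mathcal{W}}, \delta_\square)$ and subsequential limits is valid but heavier than necessary --- the direct cut-norm computation already establishes that $\{H_m\}_m$ converges, which is the existence point you were (rightly) worried about. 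Both arguments are sound; yours trades the homomorphism-density machinery for a one-line degree estimate.
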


\subsubsection{Converging sparse graph sequences}
Recall the definition of the cut-norm (Definition~\ref{def:cut1}). The following lemma shows that 
for a graph sequence $\{G_n\}_n$ satisfying the square-degree property, 
if the sequence of line graphs $\{H_m\}_m$ converge to $U$,
then $U$ has a strictly positive cut-norm.
%\cheng{Notation question: $\lVert U(x,y) \rVert_\square$ or $\lVert U \rVert_\square$?}\SK{Sorted.}
But Lemma~\ref{lemma:convergingSnotSq} shows that for sparse graphs that do \emph{not} have the
square-degree property, the graphon corresponding to the line graph is uniformly zero.

\begin{restatable}{lemma}{lemmaconverginginSq}\label{lemma:converginginSq}
      Let $\{G_n\}_n \in S_q$ and let $H_m = L(G_n)$. If $\{H_m\}_m$ converges to $U$ then 
      $U$ has strictly positive cut-norm, that is $\lVert U \rVert_\square > 0$. 
\end{restatable}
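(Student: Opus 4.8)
The plan is to connect the cut-norm of $U$ to the edge density of the line graphs $\{H_m\}_m$, and then invoke Theorem~\ref{thm:main} to get a strictly positive lower bound on that density. First I would recall the standard fact (which follows from Theorem~\ref{thm:BCconvergent2} together with Definition~\ref{def:cut1}) that convergence of $\{H_m\}_m$ to $U$ in the cut metric implies $\delta_\square(W_{H_m}, U) \to 0$, and in particular that the edge density of $H_m$, which equals $\int_{[0,1]^2} W_{H_m}$, converges to $\int_{[0,1]^2} U = t(\edge, U)$. Taking $S = T = [0,1]$ in the definition of the cut norm gives $\lVert U \rVert_\square \geq \left\lvert \int_{[0,1]^2} U(x,y)\, dx\, dy \right\rvert = t(\edge, U)$, so it suffices to show $t(\edge, U) > 0$.

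Next I would use the square-degree property. Since $\{G_n\}_n \in S_q$, Lemma~\ref{lemmapropsparse} tells us $\{G_n\}_n$ is sparse, so Theorem~\ref{thm:main} applies and yields $\{H_m\}_m \in D$, i.e.\ $\liminf_{m\to\infty} \density(H_m) = c > 0$ for some constant $c$. (In fact the proof of Theorem~\ref{thm:main} gives $c = 4c_1$ where $c_1$ is the constant in Definition~\ref{def:square}, but only positivity is needed.) Combining this with the previous paragraph: since $\density(H_m) = \frac{2|E(H_m)|}{m(m-1)}$ and $t(\edge, H_m) = \frac{2|E(H_m)|}{m^2}$ differ only by the factor $\frac{m-1}{m} \to 1$, they have the same limiting behaviour, so $\liminf_{m\to\infty} t(\edge, H_m) = c > 0$ as well. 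But $\{H_m\}_m$ converges to $U$, so $t(\edge, H_m) \to t(\edge, U)$, forcing $t(\edge, U) = \liminf_{m\to\infty} t(\edge, H_m) \geq c > 0$. Therefore $\lVert U \rVert_\square \geq t(\edge, U) \geq c > 0$, which is what we wanted.

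The one subtlety to be careful about is the interplay between $\liminf$ and an honest limit: $\{H_m\}_m$ being convergent (in homomorphism density, hence in the cut metric) means $t(\edge, H_m)$ genuinely converges, so its $\liminf$ equals its limit and the bound transfers cleanly; the $\liminf$ in Definition~\ref{def:dense} is only there to make ``dense'' a property of non-convergent sequences too, and does no harm here. I expect the main (very mild) obstacle is simply making the bookkeeping between edge density $\frac{2|E|}{m(m-1)}$ and homomorphism density $\frac{2|E|}{m^2}$ precise — ensuring the $\frac{m}{m-1}$ correction does not affect the strict positivity of the limit — and correctly citing Lemma~\ref{lemmapropsparse} and Theorem~\ref{thm:main} to license the application of denseness of $\{H_m\}_m$. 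Everything else is a one-line consequence of taking $S = T = [0,1]$ in the cut norm.
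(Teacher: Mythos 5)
Your proposal is correct and follows essentially the same route as the paper's proof: invoke Theorem~\ref{thm:main} to conclude $\{H_m\}_m \in D$, use convergence to pass the strictly positive (lim inf of the) edge density to $t(\edge, U)$, and take $S = T = [0,1]$ in the cut norm to relate $\lVert U \rVert_\square$ to $t(\edge, U)$ (the paper notes equality since $U \geq 0$; your one-sided inequality suffices). Your explicit citation of Lemma~\ref{lemmapropsparse} to verify the sparsity hypothesis of Theorem~\ref{thm:main} is a small point of added care over the paper's version.
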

% \begin{proof}
%  From Theorem \ref{thm:main} we know $\{G_n\}_n\in S_q \equiv \{H_m\}_m \in D$. Additionally, if $\{H_m\}_m$ converges to $U$ then $\{ t(\edge, H_m)\}_m$  converges to $t(\edge, U)$. As $t(\edge, H_m) = \frac{2m'}{n'^2}$ where $m'$ and $n'$ denote the number of edges and nodes in $H_m$ where $n' = m$, the sequence $\frac{2m'}{n'^2}$ converges to some constant $c$. But as $\{H_m\}_m \in D$ 
%  $$ t(\edge, U) =  \lim_{n' \to \infty} \frac{2m'}{n'^2} = c > 0 \, ,
%  $$
% that is, the edge density of $\{H_m\}_m$ converges to a positive constant. The homomorphism density $t(\edge, U)$ (Definition \ref{def:graphhomo}) is given by
%  $$ t(\edge, U) = \int_{[0,1]^2} U(x, y) \, dx dy   \, , 
%  $$
% which is equal to the cut-norm of $U$
% $$  \lVert U \rVert_\square = \sup_{S ,T} \left \vert \int_{S \times T} U(x,y) dxdy \right\vert \, , 
% $$
% because $U(x,y) \in [0,1]$ and the supremum is achieved when $S = T = [0,1]$, giving us 
% $$ \lVert U \rVert_\square  =  t(\edge, U)  > 0 \, . 
% $$
%  % Let $\{G_n\}_n\in S$ be a sparse graph sequence. Let $\{H_m\}_m$ be the corresponding sequence of line graphs with $H_m = L(G_n)$. Then $\{G_n\}_n\in S_q \equiv \{H_m\}_m \in D$, i.e., $\{G_n\}_n$ satisfies $Sq$ if and only if $\{H_m\}_m$ is dense. 
% \end{proof}

\begin{restatable}{lemma}{lemmaconvergingSnotSq}\label{lemma:convergingSnotSq}
    Let $\{G_n\}_n \in S\backslash S_q$ and let $H_m = L(G_n)$. If $\{H_m\}_m$ converges to $U$, then $U = 0$ almost everywhere. 
\end{restatable}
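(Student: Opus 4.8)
The plan is to reduce everything to the edge density of the line graphs and then invoke Lemma~\ref{lemma:notsqu}. Since $\{G_n\}_n \in S\backslash S_q$ we have in particular $\{G_n\}_n \notin S_q$, so the hypothesis of Lemma~\ref{lemma:notsqu} is satisfied. First I would unpack what ``$\{H_m\}_m$ converges to $U$'' means: by Definition~\ref{def:convergent} and Theorem~\ref{thm:BCconvergent} it means $t(F, H_m) \to t(F, U)$ for every simple graph $F$. Taking $F = \edge$ shows that $t(\edge, H_m)$ converges, and since (as recalled in Section~\ref{sec:edgevshomdensity}) $\density(H_m) = \tfrac{m}{m-1}\, t(\edge, H_m)$, the sequence $\{H_m\}_m$ is also convergent in edge density and shares the same limit. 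This is precisely the extra hypothesis needed to apply the stronger (second) conclusion of Lemma~\ref{lemma:notsqu}, which gives $\lim_{m\to\infty}\density(H_m) = 0$, hence also $\lim_{m\to\infty} t(\edge, H_m) = 0$.

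Next I would identify that limit with the integral of $U$. Because $\{H_m\}_m$ converges to $U$, we have $t(\edge, H_m) \to t(\edge, U) = \int_{[0,1]^2} U(x,y)\, dx\, dy$ by Definition~\ref{def:graphhomo}. Combining this with the previous paragraph yields $\int_{[0,1]^2} U(x,y)\, dx\, dy = 0$. Since $U$ is a graphon it takes values in $[0,1]$, so the integrand is non-negative; a non-negative measurable function with vanishing integral is zero almost everywhere, which gives $U(x,y) = 0$ for almost every $(x,y) \in [0,1]^2$, as claimed.

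The argument is short, and the only points requiring care are: (i) verifying that homomorphism-density convergence of $\{H_m\}_m$ really does deliver the edge-density convergence needed to invoke the $\lim$ (rather than merely the $\liminf$) conclusion of Lemma~\ref{lemma:notsqu}; and (ii) the standing assumption that $m = |E(G_n)| \to \infty$, so that $\{H_m\}_m$ is a genuine growing graph sequence and $\tfrac{m}{m-1} \to 1$. I expect (i) to be the only real subtlety — everything else is a direct chain through the already-established facts.
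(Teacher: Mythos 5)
Your proposal is correct and follows essentially the same route as the paper: both reduce the claim to Lemma~\ref{lemma:notsqu}, use the convergence of $\{H_m\}_m$ to upgrade the $\liminf$ conclusion to a genuine limit $\density(H_m)\to 0$, and then conclude $U=0$. The only (minor) difference is the last step: the paper argues via the cut norm of the empirical graphons $\lVert U_{H_m}\rVert_\square\to 0$ and the cut metric, whereas you identify the limit with $t(\edge,U)=\int_{[0,1]^2}U$ and use non-negativity of $U$ — which in fact delivers the ``almost everywhere'' conclusion slightly more directly, since the cut metric alone only determines $U$ up to measure-preserving relabelling.
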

% 
% \begin{proof}
%     As $\{G_n\}_n \in S$, it is sparse and it converges to $W = 0$.  From Lemma \ref{lemma:notsqu}\,  if  $\{G_n \}_n \notin S_q$ 
%       $$ \liminf_{m \to \infty}\density(H_{m}) = 0 \, . 
%     $$
%     As $\{H_m\}_m$ converges to $U$, the edge densities converge and we get $\lim_{m \to \infty} \density(H_m) = 0$. 
%     The empirical graphon $U_{H_m}$ converges to $U$ and we have the cut norm (Definition \ref{def:cut1}) of the empirical graphon
%     $$ \lVert U_{H_m} \rVert  = \frac{2m'}{n'(n'-1)} \to 0
%     $$
%     giving us 
%     $$ \lim_{m \to \infty}   \lVert U_{H_m} - U \rVert = 0 \, , 
%     $$
%     where $U = 0$. As the cut-metric (Definition \ref{def:cut2})
%     $$ \delta_\square \left(U_{H_m}, U \right) = \inf_{\varphi} \lVert U_{H_m} - U^{\varphi} \rVert \, , 
%     $$
%     we get the result. 
% \end{proof}

For graph sequences $\{G_n\}_n$ converging in homomorphism density the Euler diagram of sparse and dense graphs is given in Figure \ref{fig:vennconverging}. 

\begin{figure}[!ht]
    \centering
    \begin{tikzpicture}
        \draw (0,0) rectangle (5cm, 2.25cm);
        \draw (3,1.125) ellipse (1.5cm and 0.75cm);
        \draw[fill=gray!30] (3,1.125) ellipse (0.75cm and 0.25cm);
        % \draw (1.5,1.75) circle [radius=1];
        % \draw (4.5,1.75) circle [radius=1];
        \node[] at (3,1.85) {Sparse};
        \node[] at (3,1.125) {$S_q$};
        \node[] at (1.3,2.5) {Converging $\{G_n\}_n$};
        \node[] at (1,0.5) {Dense};
    \end{tikzpicture}
    \caption{The Euler diagram in Figure \ref{fig:venndensesparsewithSq} updated for converging $\{G_n\}_n$. }
    \label{fig:vennconverging}
\end{figure}
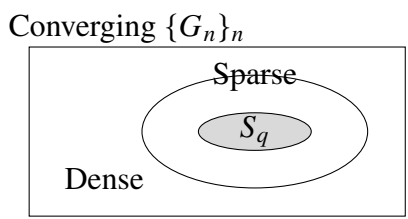

Lemmas \ref{lemma:densegraphs}, \ref{lemma:converginginSq} and \ref{lemma:convergingSnotSq} can be used map different instances of $W$ to $U$ depending on the characteristics of $\{G_n\}_n$. For $W$ and $U$ to exist both sequences $\{G_n\}_n$ and $\{H_m\}_m$ need to converge. Figure \ref{fig:mappingWtoU} shows this relationship.
% converging graph sequences $\{G_n\}_n$ and their limit $W$ to converging $\{H_m\}_m$ and their limit $U$.

\begin{figure}[ht]
\centering
\begin{tikzpicture}
    % draw the sets
    \draw (-3.5,0) ellipse (2.5cm and 1.5cm);
    \draw (3.5,0) ellipse (2.5cm and 1.5cm);
   %\filldraw[fill=red!20, draw=red!60] (3.5,0) ellipse (2.5cm and 1.5cm);

    % the texts
    \node at (-3.5,1.8) {$\{G_n\}_n$};
    \node at (3.5,1.8) {$\{H_m\}_m$};

    % the points in the sets (here I just create nodes to use them later on to position
    % the circles and the arrows
    \node (x1) at (-3.5,0.8) {$ \{G_n\}_n \in D \equiv W \neq 0  $};
    \node (x2) at (-3.5,0) {$\{G_n\}_n \in S \backslash S_q \Rightarrow W = 0$};
    \node (x3) at (-3.5,-0.8) {$\{G_n\}_n \in S_q \Rightarrow W = 0$};
    \node (y1) at (3.5,0.5) {$\{H_m\}_m \in S \equiv U = 0$};
    \node (y2) at (3.5,-0.5) {$\{H_m\}_m \in D \equiv U \neq 0$};

    % draw the arrows
    \draw[->] (x1) -- (y1);
    \draw[->] (x2) -- (y1);
    \draw[->] (x3) -- (y2);
    % \draw[->] (x4) -- (y3);

\end{tikzpicture}
\caption{The map from converging $\{G_n\}_n$ to converging $\{H_m\}_m$. }
\label{fig:mappingWtoU}
\end{figure}
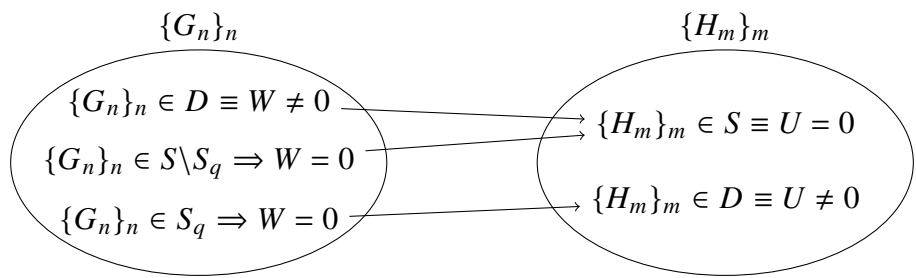

\subsubsection{Orthogonal spaces}
\begin{restatable}{lemma}{lemmaWandU}\label{lemma:WandU}
Suppose $\{ G_n\}_n$ converges to $W$ and  $\{ H_m\}_m$ converges to $U$ where $H_m = L(G_n)$. Then the inner product 
$$ \langle W, U \rangle = \int_{[0,1]^2} W(x,y) U(x, y) \, dx dy = 0 \, . 
$$
 Thus, graphons $U$ obtained from line graphs are orthogonal to graphons $W$ with respect to the above inner product.    
\end{restatable}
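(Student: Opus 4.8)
The plan is to establish the orthogonality by showing that $W$ and $U$ are supported on ``disjoint'' parts of the unit square in a suitable asymptotic sense, and then pass to the limit. The key observation is that $\{G_n\}_n$ and $\{H_m\}_m$ cannot both be dense: if $\{H_m\}_m$ converges to a nonzero $U$, then $\{H_m\}_m \in D$, so by Theorem~\ref{thm:main} the sequence $\{G_n\}_n \in S_q \subset S$ is sparse, hence $W = 0$ almost everywhere, and the inner product is trivially zero. Conversely, if $W \neq 0$, then $\{G_n\}_n$ is dense, so by Lemma~\ref{lemma:densegraphs} we have $U = 0$ almost everywhere and again the inner product vanishes. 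The only remaining case is $W = 0$ and $U = 0$, where the statement is immediate. So at the level of the \emph{limit objects} the claim is essentially a corollary of the dichotomy already proved.

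First I would make this case analysis precise. Case 1: $\{G_n\}_n \in D$. Then $W \neq 0$ is incompatible with $\{G_n\}_n \in S$, but Lemma~\ref{lemma:densegraphs} gives $U = 0$ a.e., so $\int W U = 0$. Case 2: $\{G_n\}_n \in S \setminus S_q$. Then $W = 0$ a.e.\ (sparse graphs converge to the zero graphon, as noted after Figure~\ref{fig:resultsdiagram}) and also $U = 0$ a.e.\ by Lemma~\ref{lemma:convergingSnotSq}; either way $\int W U = 0$. Case 3: $\{G_n\}_n \in S_q$. Then again $W = 0$ a.e.\ since $S_q \subset S$ by Lemma~\ref{lemmapropsparse}, so $\int W U = 0$ regardless of what $U$ is (here $U \neq 0$ by Lemma~\ref{lemma:converginginSq} if $\{H_m\}_m$ converges, but that does not matter for the inner product). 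These three cases are exhaustive for a convergent $\{G_n\}_n$ whose line graph sequence also converges, and in each the integrand is zero almost everywhere.

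The main obstacle, and the only place any real work is needed, is to confirm that ``$W = 0$ a.e.'' genuinely holds whenever $\{G_n\}_n$ is sparse and convergent — this is the standard fact that a convergent graph sequence has edge density converging to the non-zero area of its limiting graphon (invoked in the text right before Figure~\ref{fig:resultsdiagram}), so I would cite Theorem~\ref{thm:BCconvergent2} together with the relation $t(\edge, G_n) \to t(\edge, W) = \int_{[0,1]^2} W$: since $t(\edge, G_n) = 2m/n^2 \to 0$ for sparse sequences and $W \geq 0$, we get $W = 0$ a.e. The rest is bookkeeping. An alternative, more unified phrasing avoids the case split: note that $W$ is nonzero on a positive-measure set only if $\{G_n\}_n$ is dense, in which case $U = 0$ a.e.\ by Lemma~\ref{lemma:densegraphs}; hence the product $W(x,y)U(x,y)$ is zero almost everywhere in every scenario, and the integral vanishes. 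I would present the short unified version as the main argument and perhaps remark on the three-case breakdown for intuition, tying it to Figure~\ref{fig:mappingWtoU}.
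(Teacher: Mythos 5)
Your proposal is correct and follows essentially the same route as the paper's proof: in every case either $W=0$ or $U=0$ almost everywhere (dense $\{G_n\}_n$ forces $U=0$ by Lemma~\ref{lemma:densegraphs}, while $W\neq 0$ requires density, and $S_q\subset S$ forces $W=0$), so the integrand vanishes almost everywhere. Your version is somewhat more careful than the paper's — you justify why sparse convergent sequences have $W=0$ a.e.\ and note the trichotomy is exhaustive for convergent sequences — but the underlying argument is the same.
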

% \begin{proof}
%     For  converging sequences $\{ G_n\}_n$ and  $\{ H_m\}_m$ we have $W = 0$ or $U = 0$ (Lemmas \ref{lemma:densegraphs}, \ref{lemma:converginginSq} and \ref{lemma:convergingSnotSq}).  The graphon $W \neq 0$ only when $\{G_n\}_n \in D$. When $\{G_n\}_n \in D$ we have $\{H_m\}_m \in S$ giving $U = 0$. The graphon $U \neq 0 $ only when $\{G_n\}_n \in S_q$ implying $W = 0$ as $S_q \subset S$. %Figure \ref{fig:mappingWtoU} explains this Lemma. 
% \end{proof}

%For example, we can mix 2 sequences, say a sequence of star graphs and a sequence of paths, and the mixed sequence would still converge to $W = 0$.  

%The fact that $W=0$ for sparse graphs is implied by the regularly expressed consequence of  Aldous Hoover theorem -- graphs represented by exchangeable random arrays (graphons) are either empty or dense. 

\section{Results for deterministic graphs}\label{sec:resultsdeterministic}

In this section, consider graph sequences consisting of disjoint star graphs. We show that although the original graph sequences $\{G_n\}_n$ are sparse, the corresponding sequences of line graphs $\{H_m\}_m$ converge to distinct non-zero graphons.

\subsection{Dense line graphs, for star graphs}\label{sec:stardense1}
Consider a sequence of graphs $\{G_n\}_n$ as follows: For $n = 1$ we start with a single node $v_0$. At each step we add a node and connect it to $v_0$. 
At the $(n+1)$st step, this will give us a star graph $K_{1,n}$. Next we consider the line graph density of star graphs.

\begin{restatable}{lemma}{lemmastargraphsone}\label{lemma:stargraphs1}
    Let  $\{G_n\}_n$ denote a sequence of star graphs i.e,  $G_n = K_{1,n-1}$ and let $H_m = L(K_{1,n-1})$. Then $\{K_{1,n-1}\}_n \in S_q$. Moreover $\normalfont \density(H_m) = 1$ and $\lim_{m \to \infty} \normalfont \density(H_m) = 1$.
\end{restatable}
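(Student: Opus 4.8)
The plan is to verify the square-degree property for the star sequence by direct computation of the degree sequence, and then to compute the line-graph density exactly using Lemma~\ref{lemma:linegraphs}(6), which identifies $L(K_{1,n-1})$ with the complete graph $K_{n-1}$.

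First I would write down the degree sequence of $G_n = K_{1,n-1}$ explicitly: the hub $v_0$ has degree $n-1$, and each of the remaining $n-1$ leaf vertices has degree $1$. Hence $\sum \deg v_{i,n} = 2m = 2(n-1)$, so $\left(\sum \deg v_{i,n}\right)^2 = 4(n-1)^2$, while $\sum \deg v_{i,n}^2 = (n-1)^2 + (n-1)$. To establish $\{K_{1,n-1}\}_n \in S_q$ it then suffices to exhibit a constant $c_1 > 0$ with $(n-1)^2 + (n-1) \geq c_1 \cdot 4(n-1)^2$ for all large $n$; since $(n-1)^2 + (n-1) \geq (n-1)^2$, any $c_1 \leq 1/4$ works (e.g. $c_1 = 1/8$, with $N_0 = 2$), so the property holds.

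Next I would compute $\density(H_m)$. By Lemma~\ref{lemma:linegraphs}(6), $H_m = L(K_{1,n-1}) = K_{n-1}$, a complete graph, and $m = n-1$ is indeed its number of vertices. The density of a complete graph on $m$ vertices is $1$, so $\density(H_m) = 1$ for every $m \geq 2$ (one should note $m=n-1$ ranges over $\{0,1,2,\dots\}$, and density is only defined for $m \geq 2$; for all such $m$ it equals $1$). Alternatively, one can plug the degree sum $\sum (\deg v_{i,n})^2 = (n-1)^2 + (n-1) = 2\binom{n-1}{2} + 2(n-1)$ into Equation~\eqref{eq:linedensity}: the numerator becomes $\tfrac12\big((n-1)^2+(n-1)\big) - (n-1) = \tfrac12 (n-1)(n-2) = \tfrac12 m(m-1)$, which is exactly the denominator, confirming $\density(H_m)=1$. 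Taking $m \to \infty$ gives $\lim_{m\to\infty}\density(H_m) = 1$ trivially since the sequence is constant.

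There is no real obstacle here; the only thing to be careful about is bookkeeping with the index shift ($G_n$ has $n$ vertices, so the star is $K_{1,n-1}$ with $m = n-1$ edges) and the fact that the edge-density formula is only meaningful once $m \geq 2$, so the limit statement is what carries the content. I would present the two computations (the $Sq$ inequality and the density identity) and then observe the limit is immediate.
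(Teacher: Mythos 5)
Your proposal is correct and follows essentially the same route as the paper: the paper's main-text proof simply invokes Lemma~\ref{lemma:linegraphs}(\ref{lemma:linegraphs:star}) (that $L(K_{1,n-1})=K_{n-1}$), and its appendix gives the same first-principles degree computation you give, arriving at the same cancellation $\tfrac12 m(m-1)/\tfrac12 m(m-1)=1$. If anything your version is slightly more careful on one point: you correctly use the handshake identity $\sum \deg v_{i,n} = 2m = 2(n-1)$ and obtain the constant $c_1 \le 1/4$, whereas the paper's appendix writes $\sum \deg v_{i,n} = m = n-1$ and gets the ratio $1+\tfrac{1}{n-1}$ --- an immaterial discrepancy for establishing membership in $S_q$, but yours is the correct bookkeeping.
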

\begin{proof}
   Line graphs of star graphs are complete (Lemma \ref{lemma:linegraphs}-\ref{lemma:linegraphs:star}). This gives us the desired result. An alternate proof from first principles is given in the Appendix. 
 %   For the sake of completeness, we do the computation from first principles. For a star graph 
 %       $$ \deg v_i =  \begin{cases} 
 %      n-1 \quad \text{ for star vertex } \, , \\
 %      1 \quad \text{otherwise} 
 %   \end{cases}
 %       $$
 % giving us       
 %    \begin{align*}
 %        \sum \deg v_{i,n}^2 & = (n-1)^2 + 1 + \cdots + 1 \, ,  \\
 %        & = (n-1)^2 + (n-1) \, , \\
 %        \sum \deg v_{i,n} & = m = n-1  \, ,  \\
 %        \frac{\sum \deg v_{i,n}^2}{\left( \sum \deg v_{i,n} \right)^2} & = 1 + \frac{1}{n-1} > 1 \, ,   
 %    \end{align*}
 % showing that  $\{K_{1,n}\}_n \in S_q$ (Definition \ref{def:square}). From equation \eqref{eq:linedensity}, the density of $H_m$ is given by
 %   \begin{align*}
 %        \density(H_m) & = \frac{\frac{1}{2}\sum_i (\deg\,  v_{i,n})^2 - m}{\frac{1}{2}m(m-1)} \, , \\
 %        & = \frac{\frac{1}{2}( (n-1)^2 + 1 +1 + \ldots + 1 ) - (n -1)}{ \frac{1}{2} (n-1)(n-2)} \, ,  \\
 %        & = \frac{ \frac{1}{2}((n-1)^2 + (n-1) ) - (n-1) }{\frac{1}{2}(n-1)(n-2)} \, ,  \\
 %        & = \frac{\frac{1}{2}(n-1)(n) - (n-1)}{\frac{1}{2}(n-1)(n-2)}  \, , \\
 %        & = \frac{\frac{1}{2}(n-1)(n-2)}{\frac{1}{2}(n-1)(n-2)} \, , \\
 %        & = 1\, . 
 %   \end{align*}   
 %   Thus, $\lim_{m \to \infty}\density(H_m)  = 1$. 
\end{proof}

\subsection{Graphons of line graphs of star graphs} \label{sec:stardense2}

Suppose $\{G_n\}_n$ is a sparse graph sequence. Note that $\{G_n\}_n$ converges to $W(x,y) = 0$ almost everywhere as per the cut-metric (Definition~\ref{def:cut2}), $ \lVert W_{G_n}  - W\rVert_\square = \frac{2m}{n^2} \to 0$. As any sequence of sparse graphs converges to $W(x,y) = 0$, we cannot differentiate different types of sparse graphs from $W$. However, we can differentiate different types of sparse graphs using line graphs. In the following, we consider single and disjoint star graphs as an example of different sparse graphs.

\begin{figure}[!ht]
    \centering
    \includegraphics[scale = 0.9]{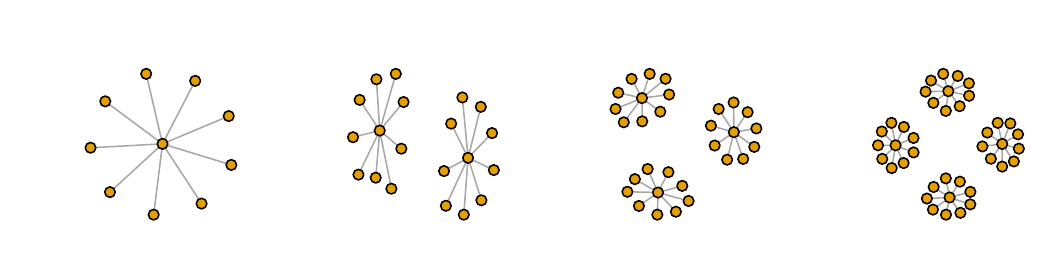}
    \includegraphics[scale = 0.9]{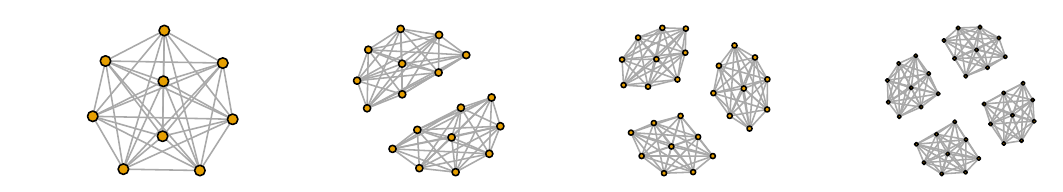}
    \includegraphics[scale = 0.9 ]{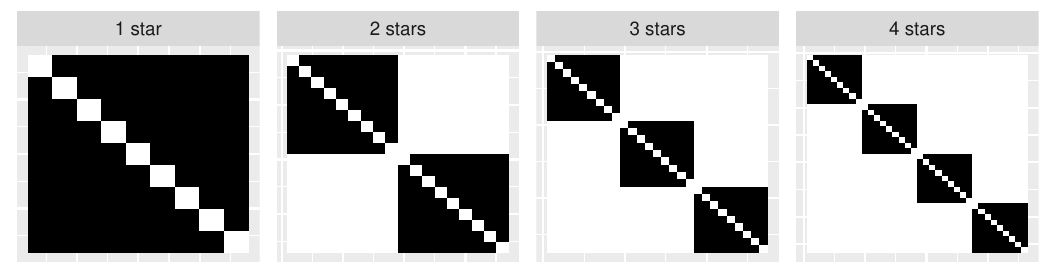}
    \caption{Top row: Graphs of 1 to 4 disjoint stars. Recall the graphon $W = 0$ for star graphs. Middle row: Line graphs of disjoint stars in top row. Line graphs of star graphs are complete graphs. Bottom row: The empirical graphons of the line graphs $U_{H_m}$ of the star graphs shown on top.  }
    \label{fig:1to4stars}
\end{figure}

\subsubsection{Single star graphs}
% Consider a sequence of graphs $\{G_n\}_n$ as follows: For $n = 1$ we start with a single node $v_0$. At each step we add a node and connect it to $v_0$. 
% At the $(n+1)$st step, this will give us a star graph $K_{1,n}$. 
% 
Since the star graph $K_{1,n}$ is sparse, a sequence of star graphs converges to graphon $W = 0$. 
In the following lemma, we show
that the corresponding sequence of line graphs $H_m = L(G_n)$ converge to a non-zero graphon $U$.

\begin{lemma}\label{lemma:stargraphon1} 
The line graphs $\{H_m\}_m$ of a sequence of star graphs $\{{K_{1, n-1}}\}_n$ satisfy
\[ \lVert U_{H_m}  - U\rVert_\square =  \frac{1}{m} \, ,
\]
where $U_{H_m}$ denotes the empirical graphon (Definition \ref{def:empiricalgraphon}) of $H_m$ and $U(x,y) = 1$. Therefore, the line graphs of star graphs converge to the graphon $U$ in the cut metric (Definition \ref{def:cut2}).
\end{lemma}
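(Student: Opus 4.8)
The plan is to reduce everything to a direct computation using Lemma~\ref{lemma:linegraphs}(\ref{lemma:linegraphs:star}). Since $K_{1,n-1}$ has exactly $m=n-1$ edges and its line graph is the complete graph $K_{n-1}$, we have $H_m = L(K_{1,n-1}) = K_m$, the complete graph on $m$ vertices. So the statement is really a claim about the cut-norm distance between the empirical graphon of $K_m$ and the constant graphon $U\equiv 1$, and I would prove it by writing both down explicitly.

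First I would describe $U_{H_m}$ via Definition~\ref{def:empiricalgraphon}: divide $[0,1]$ into $m$ equal intervals $I_1,\dots,I_m$ of length $1/m$. Because $K_m$ contains every edge $ij$ with $i\neq j$ and has no loops, $U_{H_m}(x,y)=1$ whenever $x\in I_i$, $y\in I_j$ with $i\neq j$, and $U_{H_m}(x,y)=0$ whenever $x,y$ lie in the same interval $I_i$. Hence $U_{H_m}-U$ is identically zero off the union of diagonal blocks $D:=\bigcup_{i=1}^m (I_i\times I_i)$ and equals $-1$ on $D$, a set of total measure $m\cdot(1/m)^2 = 1/m$.

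Next I would evaluate the cut-norm. For any measurable $S,T\subseteq[0,1]$, since $(S\times T)\cap(I_i\times I_i)=(S\cap I_i)\times(T\cap I_i)$, one gets
$$ \int_{S\times T}\bigl(U_{H_m}(x,y)-U(x,y)\bigr)\,dx\,dy = -\sum_{i=1}^m \lambda(S\cap I_i)\,\lambda(T\cap I_i), $$
with $\lambda$ Lebesgue measure. Taking absolute values and bounding $\lambda(S\cap I_i),\lambda(T\cap I_i)\le 1/m$, the sum is at most $m\cdot 1/m^2 = 1/m$, and this value is attained at $S=T=[0,1]$; therefore $\lVert U_{H_m}-U\rVert_\square = 1/m$, which is exactly the claimed identity. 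Finally, $\delta_\square(U_{H_m},U)\le\lVert U_{H_m}-U\rVert_\square = 1/m \to 0$, so $\{H_m\}_m$ converges to $U$ in the cut metric (and hence, by Theorem~\ref{thm:BCconvergent2}, in homomorphism density).

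There is no genuine obstacle here; the proof is a short, self-contained calculation. The only points needing a little care are verifying that $1/m$ is simultaneously an upper bound on the cut-norm over all pairs $(S,T)$ and is actually achieved, and keeping the indexing straight ($K_{1,n-1}$ has $m=n-1$ edges, so its line graph is $K_{n-1}=K_m$).
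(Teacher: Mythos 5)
Your proof is correct and follows essentially the same route as the paper: identify $H_m=L(K_{1,n-1})=K_m$ via Lemma~\ref{lemma:linegraphs}(\ref{lemma:linegraphs:star}), observe that $U_{H_m}-U$ is supported on the $m$ diagonal blocks of total measure $1/m$, and take $S=T=[0,1]$ to attain the cut-norm. Your explicit bound $\sum_i\lambda(S\cap I_i)\lambda(T\cap I_i)\le 1/m$ is in fact a slightly more careful justification of the optimality of $S=T=[0,1]$ than the paper's one-line assertion.
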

\begin{proof}
  For $n \geq 2$ we consider the line graphs $H_m = L(K_{1, n-1})$ of this sequence. The line graph $H_m$ of a star graph $K_{1, n-1}$ is a complete graph $K_{n-1}$ (Lemma \ref{lemma:linegraphs}-\ref{lemma:linegraphs:star}). We obtain the empirical graphon (Definition \ref{def:empiricalgraphon}) of $H_m$ by splitting the interval $[0, 1]$ into $m$ equal intervals $\{I_1, I_2, \ldots, I_m \}$ and for $x \in I_i, y \in I_j$ have
    $$ U_{H_m}(x,y) = \begin{cases} 
      0 \quad  \text{if} \quad i = j \, , \\
      1 \quad \text{otherwise} 
   \end{cases} \, .
    $$
    The empirical graphon $U_{H_m}$ is illustrated in the bottom leftmost diagram in Figure~\ref{fig:1to4stars}.
    Consider $U(x,y) = 1$ for all $x, y$. The cut norm  (Definition~\ref{def:cut1}) of $\lVert U_{H_m}  - U\rVert_\square$ is  % $\lVert U_{H_m}  - U \rVert_\square$.
    
$$ \lVert U_{H_m}  - U\rVert_\square = \sup_{S, T} \left\vert\int_{S\times T} U_{H_m}(x,y)  - U(x,y) \, dx dy \right\vert  \, \, .  
$$
Using the intervals $\{I_1, I_2, \ldots, I_m \}$ and for $x \in I_i, y \in I_j$ we have
$$  U(x,y) -  U_{H_m}(x,y)  = \begin{cases} 
      1 \quad  \text{if} \quad i = j \, , \\
      0 \quad \text{otherwise} 
   \end{cases} \, ,
$$
giving 
$$ \lVert U_{H_m}  - U\rVert_\square = \frac{1}{m^2} \times m = \frac{1}{m} \, ,
$$
as each $I_i \times I_i$ square would give rise to $\frac{1}{m^2}$ area. We have used $S = T = [0 ,1]$ as any $S \subset [0,1]$ and $T \subset [0,1]$ would give smaller area. The cut metric (Definition ~\ref{def:cut2})
$$ \delta_{\square}( U_{H_m}, U) = \inf_\varphi \left\lVert  U_{H_m} - U^\varphi \right\rVert_\square  =  \lVert U_{H_m}  - U\rVert_\square \, , 
$$
as $U^\varphi = U$ when $U(x,y) = 1$.
As $ \lim_{m \to \infty} \lVert U_{H_m}  - U\rVert_\square = 0 $, we have
$$ \lim_{m \to \infty} \delta_{\square}( U_{H_m}, U)  = 0\, , 
$$
and from Theorem \ref{thm:BCconvergent2} \citep{borgs2011limits} $\{H_m\}_m$ converges to $U$. We note that this works for any $U(x,y) = 1$ almost everywhere.
\end{proof}

\subsubsection{Multiple stars}
Next we consider $k$ disjoint stars denoted by  $G_{n_i} = \{K_{1, s_1}, K_{1, s_2}, \ldots, K_{1, s_k} \}$ and the sequence $\{G_{n_i}\}_i$ as follows: When $i = 1$ we start with $k$ nodes each denoting the centre of a star. Let $\{r_1, \ldots, r_k \}$ denote positive integers and let $R = \sum_j r_j$. At each step we add $R$ nodes to the graph. Of the $R$ nodes, $r_j$ nodes connect to $K_{1, s_j}$ for $j \in \{1, \ldots k\}$. 
This process results in $k$ disjoint stars with the $j^{\mathrm{th}}$ star having $1 + i r_j$ nodes at the $i^{\mathrm{th}}$ step. The node ratios converge to $r_1 \colon r_2 \colon \ldots \colon r_k$ as $i$ goes to infinity. The following lemma shows that the line graphs of disjoint stars converge to an almost block diagonal graphon. % \cheng{The following lemma shows that the line graphs of disjoint stars converge to an almost block diagonal graphon.}

\begin{restatable}{lemma}{lemmastargraphontwo}\label{lemma:stargraphon2}
Let $\{G_{n_i}\}_i$ denote a disjoint set of $k$ star graphs $\{K_{1, s_1}, K_{1, s_2}, \ldots, K_{1, s_k} \}$ where $G_{n_i}$ has $n_i$ vertices and  the number of degree-1 vertices of the stars satisfy the ratio $r_1 \colon r_2 \colon \ldots \colon r_k$ where each $r_j \in \mathbb{Z}^+$.  Consider the graphon $U$ obtained by splitting the interval $[0 ,1]$ into $k$ sub intervals $\{I_1, I_2, \ldots, I_k\}$ such that the length of $I_r$ denoted by $L(I_r)$ satisfies the following: $L(I_1) \colon L(I_2) \colon \ldots \colon L(I_k) = r_1 \colon r_2 \colon \ldots \colon r_k$ and for $x \in I_i$ and $y \in I_j$ 
$$ U(x,y) = \begin{cases}
    1 & \quad \text{if} \quad i = j \,   \\
    0 & \quad \text{otherwise}
\end{cases} \, ,
$$
making $U$ is a block diagonal graphon. The line graphs $H_{m_i} = L(G_{n_i}) $ satisfy
\[
  \lVert U_{H_m}  - U\rVert_\square  = \frac{1}{m_i} \, ,  
\]
where $U_{H_m}$ denotes the empirical graphon (Definition \ref{def:empiricalgraphon}) of $H_{m_i}$ making $\{H_{m_i}\}_i$ converge to the graphon $U$ in the cut metric (Definition \ref{def:cut2}). 
% Then, the corresponding line graphs $\{H_{m_i}\}_i$ where $H_{m_i} = L(G_{n_i}) $ converge to the graphon $U$.  
\end{restatable}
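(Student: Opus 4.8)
The plan is to make the line graph completely explicit and then reduce to the cut-norm computation already carried out in Lemma~\ref{lemma:stargraphon1}. Since the $k$ stars are pairwise vertex-disjoint, no edge of one star is adjacent to an edge of another, so the line graph operation distributes over the disjoint union: $H_{m_i} = L(G_{n_i}) = \bigsqcup_{j=1}^{k} L(K_{1,\, i r_j})$. At the $i$th step the $j$th star $K_{1,\,i r_j}$ has exactly $i r_j$ edges, and by Lemma~\ref{lemma:linegraphs}-\ref{lemma:linegraphs:star} its line graph is the clique $K_{i r_j}$. Hence $H_{m_i}$ is a disjoint union of $k$ cliques of sizes $i r_1, \dots, i r_k$, with $m_i = \sum_j i r_j = i R$ vertices in total.

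Next I would fix the labeling of $V(H_{m_i})$ that lists the vertices clique by clique, so that, after dividing $[0,1]$ into $m_i$ equal intervals $I_1,\dots,I_{m_i}$, the indices of clique $j$ occupy a consecutive subinterval of length $i r_j / m_i = r_j/R = L(I_j)$, i.e.\ exactly the block $I_j$ of the statement. Because all the block boundaries $(\sum_{l\le j} r_l)/R$ are integer multiples of $1/m_i$, each small interval $I_\ell$ lies inside exactly one block $I_j$, and the empirical graphon $U_{H_{m_i}}$ coincides with the target block-diagonal graphon $U$ everywhere except on the $m_i$ diagonal squares $I_\ell\times I_\ell$, where $U=1$ but $U_{H_{m_i}}=0$ because cliques have no loops. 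Thus $U - U_{H_{m_i}}$ is exactly the indicator of $\bigcup_{\ell=1}^{m_i}(I_\ell\times I_\ell)$. It is the \emph{exactness} of the edge-count ratio $i r_1:\dots:i r_k = r_1:\dots:r_k$ at every step (not merely in the limit) that makes the blocks of $U_{H_{m_i}}$ align perfectly with those of $U$, so that no further error term appears beyond this missing diagonal.

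The cut-norm computation then mirrors the single-star case: the integrand $U - U_{H_{m_i}}$ is nonnegative, so the supremum defining $\lVert U_{H_{m_i}} - U\rVert_\square$ (Definition~\ref{def:cut1}) is attained at $S=T=[0,1]$ and equals the total area of the diagonal squares, namely $m_i\cdot m_i^{-2} = 1/m_i$. Taking $\varphi$ to be the identity in the infimum of Definition~\ref{def:cut2} then gives $\delta_\square(U_{H_{m_i}}, U)\le 1/m_i\to 0$, and Theorem~\ref{thm:BCconvergent2} yields convergence of $\{H_{m_i}\}_i$ to $U$. The only step requiring care is the bookkeeping of the preceding paragraph: verifying that the $m_i$-subdivision refines the $k$-block subdivision so that the difference graphon is supported precisely on the $m_i$ diagonal cells — this is exactly where the hypothesis that the leaf counts are the exact integer multiples $i r_j$ (rather than merely asymptotically in the ratio $r_1:\dots:r_k$) is used, and it is the main, if modest, obstacle.
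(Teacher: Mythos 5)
Your proposal is correct and follows essentially the same route as the paper: identify $H_{m_i}$ as a disjoint union of cliques $K_{ir_1},\dots,K_{ir_k}$ via Lemma~\ref{lemma:linegraphs}, label the vertices clique by clique so the empirical graphon's blocks align exactly with those of $U$, observe that the difference is supported on the $m_i$ diagonal cells of total area $1/m_i$, and conclude via Theorem~\ref{thm:BCconvergent2}. Your explicit check that the block boundaries are integer multiples of $1/m_i$ is a point the paper leaves implicit, but it is the same argument.
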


\begin{remark}
    Both single stars and multiple disjoint stars $\{G_n\}_n$ give rise to $W = 0$. However their line graphs $\{H_m\}_m$ give rise to different graphons $U$ as shown in Lemmas \ref{lemma:stargraphon1} and \ref{lemma:stargraphon2}. This is an example of differentiating sparse graphs in the line graph space. See Figure \ref{fig:1to4stars}.
\end{remark}

\subsection{Line graphs of some dense and sparse graphs}\label{sec:linegraphs2}
Next, we go through some well known graphs and compute their line graph edge densities. We consider specific examples of graph sequences $\{G_n\}_n \in D$, and  $\{G_n\}_n \in S\backslash S_q$.
 
\begin{restatable}{theorem}{thmproplinegraphs}\label{thm:proplinegraphs}
    Let  $\{G_n\}_n$ be a sequence of graphs where $G_n$ has $n$ vertices and $m$ edges. Let $H_m = L(G_n)$ and suppose $m \to \infty$ as $n \to \infty$. Then $\{G_n\}_n$ with properties described below give rise to following line graph edge densities. 
    \begin{enumerate}
        \item Suppose $G_n$ is the complete graph $K_n$. Then the edge density of the corresponding line graph,  $\normalfont \density(H_m) = \frac{4}{n+1}$ where $m = \frac{1}{2}n(n-1)$ and $\lim_{m \to \infty} \normalfont \density(H_m) = 0$. Furthermore,  $\{K_n\}_n \in D$ and $\{H_m\}_m \in S$.
        %Thus  $W(x,y) = 0$ for all $x$ and $y$. 
        \item Suppose $G_n$ is an $r$-regular graph. Then the  edge density $\normalfont \density(H_m) = \frac{2(r-1)}{m - 1}$  and $\lim_{m \to \infty} \density(H_m) = 0$. Furthermore $\{G_n\}_n, \{H_m\}_m \in S\backslash S_q$. %making $W(x,y) = 0$ for all $x$ and $y$.
        \item Suppose $G_n$ is a path. Then the  edge density  $\normalfont \density(H_m) = \frac{2}{m} $ and $\lim_{m \to \infty} \normalfont \density(H_m) = 0$.  Furthermore $\{G_n\}_n, \{H_m\}_m \in S\backslash S_q$. %making $W(x,y) = 0$ for all $x$ and $y$.
        \item Suppose $G_n$ is a cycle. Then the edge density  $ \normalfont \density(H_m) = \frac{2}{m-1} $ and $\lim_{m \to \infty} \normalfont \density(H_m) = 0$.  Furthermore $\{G_n\}_n, \{H_m\}_m \in S\backslash S_q$. %making $W(x,y) = 0$ for all $x$ and $y$.
    \end{enumerate}
\end{restatable}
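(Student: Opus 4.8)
The plan is to treat each of the four cases as a direct computation built on the edge-count formula from Lemma~\ref{lemma:linegraphs}(\ref{lemma:linegraphs:numedges}), namely that $H_m$ has $m$ vertices and $\frac{1}{2}\sum(\deg v)^2 - m$ edges, together with the density formula \eqref{eq:linedensity}. In each case I first identify the degree sequence of $G_n$, plug it into $\sum(\deg v)^2$, compute $m = |E(G_n)|$, and then substitute into $\density(H_m) = \bigl(\tfrac12\sum(\deg v)^2 - m\bigr)/\bigl(\tfrac12 m(m-1)\bigr)$. For the complete graph $K_n$ every vertex has degree $n-1$, so $\sum(\deg v)^2 = n(n-1)^2$ and $m = \tfrac12 n(n-1)$; substituting and simplifying the resulting rational expression in $n$ should collapse to $\frac{4}{n+1}$, whence the limit is $0$. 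For the $r$-regular case, $\sum(\deg v)^2 = nr^2$ and $m = \tfrac{nr}{2}$, and one can shortcut via Lemma~\ref{lemma:linegraphs}(\ref{lemma:linegraphs:r-regulargraph}): $H_m$ is $2(r-1)$-regular on $m$ vertices, so its density is $\frac{2(r-1)}{m-1}$ directly. For a path $P_n$, the degree sequence is two endpoints of degree $1$ and $n-2$ internal vertices of degree $2$, giving $\sum(\deg v)^2 = 2 + 4(n-2) = 4n-6$ and $m = n-1$; the arithmetic yields $\density(H_m) = \frac{2}{m}$. For a cycle $C_n$, every vertex has degree $2$, so $\sum(\deg v)^2 = 4n$ and $m = n$, giving $\density(H_m) = \frac{2}{m-1}$ — consistent with Lemma~\ref{lemma:linegraphs}(\ref{lemma:linegraphs:cyle}), since $L(C_n) = C_n$ is $2$-regular.

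Second, for each case I need to establish the stated membership claims, $\{G_n\}_n \in D$ (case 1) or $\{G_n\}_n, \{H_m\}_m \in S \setminus S_q$ (cases 2--4). That $\{K_n\}_n \in D$ is immediate from $m = \tfrac12 n(n-1)$, so $\liminf m/n^2 = \tfrac12 > 0$; and $\{H_m\}_m \in S$ follows since $\lim_m \density(H_m) = 0$ forces $m'/m^2 \to 0$ where $m' = |E(H_m)|$, i.e. subquadratic edge growth in $H_m$. For the sparse cases, $\{G_n\}_n \in S$ is clear ($r$-regular: $m = \tfrac{nr}{2}$ linear in $n$; path and cycle: $m$ linear in $n$), and $\{G_n\}_n \notin S_q$ because the degree sequence is bounded (by $r$, or by $2$), so $\sum(\deg v)^2 \le r \sum \deg v = 2rm$ while $(\sum \deg v)^2 = 4m^2$, and $2rm / (4m^2) = r/(2m) \to 0$, contradicting any fixed positive lower bound $c_1$. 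The analogous bound on $H_m$'s degrees — $H_m$ is $2(r-1)$-regular, or for paths/cycles bounded-degree — gives $\{H_m\}_m \notin S_q$ by the same argument; and $\{H_m\}_m \in S$ follows either from $\lim \density(H_m) = 0$ or, more cleanly, from Theorem~\ref{thm:main}: since $\{G_n\}_n \in S \setminus S_q$, Theorem~\ref{thm:main} (contrapositive) gives $\{H_m\}_m \notin D$; combined with $H_m$ being a bounded-degree-or-regular graph sequence with vanishing density, $\{H_m\}_m \in S$.

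I do not expect any genuine obstacle here — all four cases are elementary substitutions into a known formula followed by limit-taking, and the set-membership claims reduce to the bounded-degree observation that a graph with uniformly bounded degrees cannot satisfy $Sq$. The only points requiring a little care are: (i) making sure the algebraic simplifications are exact (e.g. verifying $\bigl(\tfrac12 n(n-1)^2 - \tfrac12 n(n-1)\bigr)/\bigl(\tfrac12 \cdot \tfrac12 n(n-1)(\tfrac12 n(n-1)-1)\bigr)$ really equals $\frac{4}{n+1}$, which it does after cancelling $\tfrac12 n(n-1)$ from numerator and denominator and simplifying $\tfrac12 n(n-1) - 1 = \tfrac12(n^2-n-2) = \tfrac12(n-2)(n+1)$ against the leftover $(n-2)$ in the numerator); (ii) checking that $L(C_n)$ really has $n$ vertices so the cycle formula is $\frac{2}{m-1}$ and not $\frac{2}{m}$, which distinguishes it from the path case where $L(P_n) = P_{n-1}$; and (iii) stating the $r$-regular result with $r$ treated as a fixed constant (or at least $r = o(\sqrt{n})$) so that "$r$-regular graph sequence'' is genuinely sparse — I would note this hypothesis explicitly. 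Beyond these bookkeeping matters the proof is routine, and I would present it as a short case-by-case verification, possibly deferring the full arithmetic to the appendix as the \texttt{restatable} wrapper suggests.
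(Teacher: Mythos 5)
Your proposal is correct and follows essentially the same route as the paper: identify the degree sequence in each case, substitute into the line-graph edge count from Lemma~\ref{lemma:linegraphs}(\ref{lemma:linegraphs:numedges}) and the density formula \eqref{eq:linedensity}, and take limits, with all four algebraic simplifications checking out. The only (minor) divergence is that you establish $\{G_n\}_n, \{H_m\}_m \notin S_q$ directly from the bounded-degree estimate $\sum (\deg v)^2 \leq r \sum \deg v = 2rm \ll 4m^2$, whereas the paper deduces it from the equivalence in Theorem~\ref{thm:main}; your explicit caveat that $r$ must be fixed (or $o(\sqrt{n})$) for the $r$-regular sequence to be sparse is a hypothesis the paper leaves implicit.
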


\subsection{Empirical Experiments on Estimating Graphons} \label{sec:experiments}
In this section we compare graphs generated from different empirical graphons.  Let $G_n$ denote a star $K_{1, n-1}$ with $n$ vertices and let  $H_m = L(G_n)$. We consider the empirical graphons (see Definition \ref{def:empiricalgraphon}) $W_{G_n}$ and $U_{H_m}$ where we consistently use $W$ and $U$ to denote graphons related to $G_n$ and $H_m$ respectively.  We consider the set of $k$ disjoint stars as illustrated in Figure~\ref{fig:1to4stars}. 
We want to evaluate how well these empirical graphons can generate graphs with $kn$ vertices where $n= 100$ and $k \in \{2, 3, 4, 5\}$. That is, do graphs generated from $W_{G_n}$ resemble stars when $n$ increases?  Similarly, do graphs generated from $U_{H_m}$ resemble line graphs of stars when $m$ increases?

To evaluate this, we generate (following Definition~\ref{def:wrandomgraphs}) 
$W$-random graphs from $W_{G_n}$ and $U$-random graphs $U_{H_m}$ with $kn$ vertices i.e., let $\hat{G}_{W} = \mathbb{G}(kn, W_{G_n})$ and $\hat{H}_U = \mathbb{G}(kn, U_{H_m})$. Noting we cannot compare $\hat{G}_{W}$ and $\hat{H}_U$ because $\hat{G}_{W}$ is in the space of original graphs  whereas $\hat{H}_U$ is in the space of line-graphs, we consider the line graph of $\hat{G}_W$, that is, let $\hat{H}_W = L(\hat{G}_W)$. 
Then we have 3 graphs in the line graph space, the actual line graph $H = L(G_{kn})$, the estimated line graph of the $W$-random graph $\hat{H}_W$ and the estimated $U$-random graph $\hat{H}_U$. We compare different quantities derived from $H$,  $\hat{H}_W$ and $\hat{H}_U$ for different $k$. 
These include the edge-density, the triangle-density, and the cosine similarity of the degree distributions of $\hat{H}_U$ and $\hat{H}_W$ with  $H$. 
\begin{figure}[!ht]
    \centering
    \includegraphics{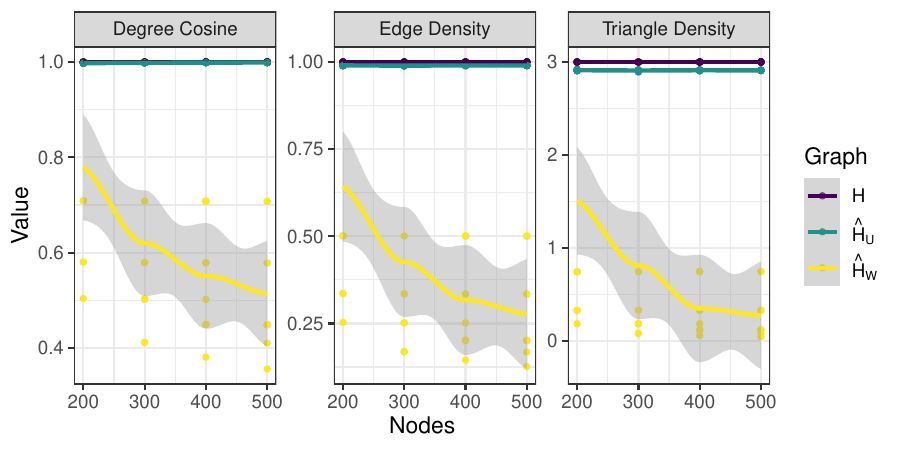}
    \caption{Experiment with 1 star graph. Degree cosine similarity, edge density and triangle density for $H$, $\hat{H}_U$ and $\hat{H}_W$. }
    \label{fig:exp1star}
\end{figure}

\begin{figure}[!ht]
    \centering
     \includegraphics{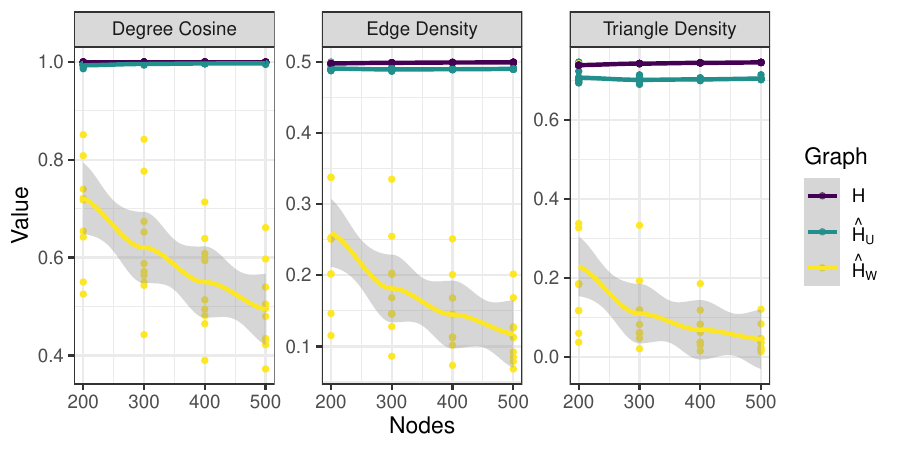}
    \caption{Experiment with 2 star graphs. Degree cosine similarity, edge density and triangle density for $H$, $\hat{H}_U$ and $\hat{H}_W$.}
    \label{fig:exp2stars}
\end{figure}

Figure \ref{fig:exp1star} shows the values obtained from $\hat{H}_U$,  $\hat{H}_W$ and $H$ for a single star graph and Figure \ref{fig:exp2stars} shows the metrics for 2 stars. All 3 metrics are better for $\hat{H}_U$ compared to $\hat{H}_W$. Interestingly, the edge and triangle densities of $\hat{H}_U$ are slightly lower than those of $H$ in all instances. This is because $\hat{H}_U$ is sampled from $U_{H_m}$ which has empty squares along the diagonal, which are effectively closed or blacked out in the graphon $U$ (see empirical graphons in Figure \ref{fig:1to4stars}). In these two scenarios we know that the shaded-area of $U_{H_m}$ is less than that of $U$, and as such slightly lower edge and triangle densities are expected.

\section{Results on probabilistic graphs}\label{sec:probgraphs}

\subsection{Superlinear preferential attachment graphs}\label{sec:prefattachment}
Preferential attachment models \citep{albert2002statistical} consider nodes connecting  to more connected nodes with higher probability. Specifically the probability $\Pi(i)$ that a new node connects to node $i$, which has degree $k_i$ is given by 
\begin{equation}\label{eq:superlinear}
    \Pi(i) = \frac{k_i^\alpha}{\sum_i k_i^\alpha} \, ,  
\end{equation}
where $\alpha$ is a parameter. The three regimes $\alpha < 1$,  $\alpha = 1$ and $\alpha > 1$ are called sublinear, linear and superlinear preferential attachment respectively. Suppose we start with $s_0$ nodes and $s_0$ edges and at each time step $t$ a new node is added to the network with $s$ edges. After $t$ timesteps the network has %$n = t + s_0$ nodes and $m = s_0 + ts$ edges.  
\begin{equation}\label{eq:superlinearnodesedges}
    n = t + s_0 \quad \text{nodes and } \quad  m = s_0 + ts \quad \text{edges.}
\end{equation}

 For growing networks with superlinear preferential attachment \cite{Krapivsky2001, Krapivsky2011} state that the maximum degree $k_{\max}$ satisfies 
$$ k_{\max} \sim n \, . 
$$
\cite{Sethuraman2019} prove a more rigorous version of the above statement. They show that,  
$$  P\left[ \lim_{n \to \infty} \frac{1}{n} k_{\max} = 1 \right] = 1 \, . 
$$
We will use this result to show that superlinear preferential attachment graphs satisfy the square-degree property almost surely. 

\begin{lemma}\label{lemma:superlinear}
    Let $\{G_n\}_n$ denote a sequence of graphs growing by superlinear preferential attachment satisfying equation \eqref{eq:superlinear} with $\alpha > 1$. Then  $ \{G_n\}_n \in S_q $ almost surely. 
  %  $$ P\left[ \{G_n\}_n \in S_q \right] = 1 \, .$$ 
\end{lemma}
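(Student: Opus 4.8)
The plan is to combine two facts: the edge count $m$ of a superlinear preferential attachment graph grows \emph{linearly} in $n$ (this is deterministic, from \eqref{eq:superlinearnodesedges}), while the maximum degree $k_{\max}$ also grows linearly in $n$ \emph{almost surely} (this is the quoted concentration result of \cite{Sethuraman2019}). Since Definition~\ref{def:square} only asks for a strictly positive lower bound on the ratio $\sum \deg v_{i,n}^2 \big/ \left(\sum \deg v_{i,n}\right)^2$ for all large $n$, it is enough to bound the numerator below by the single term $(k_{\max})^2$ and the denominator above by a constant times $n^2$.

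First I would record the deterministic bound on $m$. From \eqref{eq:superlinearnodesedges} with $t = n - s_0$ we have $m = s_0 + ts = sn - s_0(s-1) \le sn$ because $s \ge 1$, so $\left( \sum \deg v_{i,n} \right)^2 = (2m)^2 \le 4 s^2 n^2$ for every $n$, treating $s_0, s$ as fixed constants. Next, $\sum \deg v_{i,n}^2 \ge \left(k_{\max}(G_n)\right)^2$ trivially, since $k_{\max}$ is one of the summed degrees. By \cite{Sethuraman2019} there is an event $\Omega_0$ of probability $1$ on which $\tfrac1n k_{\max}(G_n) \to 1$; fixing $\epsilon \in (0,1)$, on $\Omega_0$ there is a (sample-path dependent) $N_0$ with $k_{\max}(G_n) \ge (1-\epsilon) n$ for all $n \ge N_0$. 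Then for $n \ge N_0$,
\[
  \sum \deg v_{i,n}^2 \;\ge\; (1-\epsilon)^2 n^2 \;\ge\; \frac{(1-\epsilon)^2}{4 s^2}\,(2m)^2 \;=\; \frac{(1-\epsilon)^2}{4 s^2}\left( \sum \deg v_{i,n} \right)^2 ,
\]
so taking $c_1 = (1-\epsilon)^2/(4 s^2) > 0$ shows $\{G_n\}_n$ satisfies $Sq$ on $\Omega_0$, i.e. almost surely.

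There is no deep obstacle here — the cited degree-concentration theorem does the heavy lifting. The one point to handle carefully is that the definition of $Sq$ requires $k_{\max}(G_n) \ge (1-\epsilon)n$ for \emph{all} sufficiently large $n$, not merely infinitely often; this is exactly what the almost-sure convergence $\tfrac1n k_{\max} \to 1$ gives, so the fact that the threshold $N_0$ is itself random is harmless, since Definition~\ref{def:square} only asserts the existence of some such $N_0$ on each sample path. (Combined with Lemma~\ref{lemmapropsparse} this incidentally re-confirms that these graphs are sparse, but that is already built into the linear bound $m \le sn$.)
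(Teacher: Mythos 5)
Your proof is correct and follows essentially the same route as the paper's: lower-bound $\sum \deg v_{i,n}^2$ by $k_{\max}^2$, invoke the almost-sure convergence $\tfrac1n k_{\max} \to 1$ from \cite{Sethuraman2019}, and use the deterministic relation $ns \ge m$ from \eqref{eq:superlinearnodesedges} to convert $n^2$ into a constant multiple of $m^2$. If anything, your treatment of the sample-path-dependent threshold $N_0$ is slightly more careful than the paper's phrasing of the concentration result, but the substance is identical.
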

\begin{proof}
Using the result from \cite{Sethuraman2019} we know that for every $\epsilon > 0$ there exists $N_0 \in \mathbb{N}$ such that  
$$ P\left[  \left \vert  \frac{1}{n} k_{\max} - 1\right \vert  < \epsilon  \right ] = 1\, , 
$$
for all $n > N_0$. That is, almost surely
$$ 1 - \epsilon  < \frac{1}{n} k_{\max}  < 1 + \epsilon \, , 
$$
for $n > N_0$.
Rearranging the equations for $n$ and $m$ (equation \eqref{eq:superlinearnodesedges}) we get $ns = m + (s-1)s_0$ giving us $ns > m$. Hence, 
$$  \sum (\deg v_i^2) >   k_{\max}^2 > (1 - \epsilon)^2 n^2  > \frac{(1 - \epsilon)^2 m^2}{s^2} \quad \text {almost surely.}
$$
Thus, for $n > N_0$
$$ P \left[ \sum (\deg v_i^2) > \frac{(1 - \epsilon)^2 }{s^2} m^2 \right] = 1
$$
showing that superlinear preferential attachment graphs satisfy the square-degree property (Definition \ref{def:square}) almost surely for large values of $n$. From Theorem \ref{thm:main} they produce dense line graphs. If $\{ t(F, H_m)\}_m$ converges for all graphs $F$, where $H_m = L(G_n)$ then Theorem \ref{thm:BCconvergent} \citep{borgs2008convergent} ensures $\{H_m\}_m$ converges to a graphon $U$. As $\{H_m\}_m$ is dense $U \neq 0$.
\end{proof}

\subsection{Erdős–Rényi graphs}
\label{sec:erdos-renyi}
The Erdős–Rényi model $G(n,p)$ describes graphs of $n$ vertices with  edge probability $p$, where $p$ is a parameter. Each edge is equally likely to be included in the graph. The degree distribution for any vertex in $G_n \sim G(n,p)$ is binomial with parameters $n-1$ and $p$.
%The degree distribution for any vertex in a $G(n, p)$ graph is binomial with parameters $n−1$ and $p$. 
For a given $n$ and $p$ there exists a graph distribution as different edges can be included or left out in different graphs. Expectations are calculated with respect to this graph distribution. 

\begin{restatable}{theorem}{thmerdosrenyi}\label{thm:erdosrenyi}
    Let $G_n$ be an Erdős–Rényi graph sampled from a $G(n,p)$ model and suppose $G_n$ has $n$ nodes and $m$ edges. Let $H_m = L(G_n)$. Then for any $c \in(0, 1)$, the edge density of $H_m$ satisfies 
    \[
    P\left[ \density(H_m) \geq c \right]  \leq  \exp\left( - \frac{\alpha^2pn(n-1) }{4} \right) + \exp \left( \ln n -\frac{\beta^2 p(n-1)}{3}\right) + \exp\left( - \frac{\alpha^2pn(n-1)}{6} \right) \, ,
    \]
    where $\alpha \in (0,1)$,  $n > \frac{4}{c(1-\alpha)^2}$ and $\beta = \frac{\sqrt{cn}(1 - \alpha)}{2} - 1$. 
Therefore, as $n$ and $m$ go to infinity the edge density of $H_m$ satisfies
\begin{equation}\label{eq:result2erdosrenyithm}
\normalfont     \lim_{m \to \infty} P\left[ \density(H_m) = 0 \right] = 1\, . 
\end{equation}
\end{restatable}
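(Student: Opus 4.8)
The plan is to bound $\density(H_m)$ from above with high probability by controlling the two quantities that appear in Equation~\eqref{eq:linedensity}, namely $\sum_i (\deg v_{i,n})^2$ in the numerator and $m$ in the denominator. For the Erdős–Rényi model $G(n,p)$ both of these concentrate: the number of edges $m$ is a sum of $\binom{n}{2}$ independent Bernoulli($p$) variables with mean $p\binom{n}{2}$, and each degree $\deg v_{i,n}$ is Binomial($n-1,p$). First I would use a Chernoff/Bernstein bound to show that $m \geq (1-\alpha)\,p\binom{n}{2}$ with probability at least $1 - \exp(-\alpha^2 p n(n-1)/4)$, which controls the denominator $\tfrac12 m(m-1)$ from below by something of order $p^2 n^4$. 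This is where the first term in the stated bound comes from.

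Next I would control the numerator $\sum_i (\deg v_{i,n})^2$ from above. The degrees are not independent, but a clean route is: on the good event $\{m \geq (1-\alpha)p\binom{n}{2}\}$, the average degree is $2m/n$, so it suffices to show no degree is too much larger than its mean — specifically that $\max_i \deg v_{i,n} \leq \beta'$ for a suitable threshold, and then bound $\sum_i (\deg v_{i,n})^2 \leq (\max_i \deg v_{i,n}) \sum_i \deg v_{i,n} = 2m \cdot \max_i \deg v_{i,n}$. A union bound over the $n$ vertices, each with a Bernstein bound on Binomial($n-1,p$) deviating above its mean by $\beta\sqrt{\cdot}$ or so, gives a failure probability of order $\exp(\ln n - \beta^2 p(n-1)/3)$ — the second term in the statement. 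The threshold $\beta = \tfrac{\sqrt{cn}(1-\alpha)}{2} - 1$ and the condition $n > \tfrac{4}{c(1-\alpha)^2}$ are exactly what is needed so that, when we plug the maximum-degree bound and the lower bound on $m$ into Equation~\eqref{eq:linedensity}, the ratio comes out below $c$. The third term, $\exp(-\alpha^2 p n(n-1)/6)$, presumably arises from a complementary concentration bound needed to rule out $m$ being too small in a slightly different regime, or from controlling $\sum_i \deg v_{i,n} = 2m$ from above as well (one also needs $m \leq (1+\alpha)p\binom{n}{2}$-type control to bound the numerator $2m\cdot\max_i\deg v_i$).

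Assembling: on the intersection of these three good events (probability at least $1$ minus the sum of the three exponentials, by a union bound), I would substitute into
\[
\density(H_m) = \frac{\tfrac12 \sum_i (\deg v_{i,n})^2 - m}{\tfrac12 m(m-1)} \leq \frac{\tfrac12 \cdot 2m\cdot \max_i \deg v_{i,n}}{\tfrac12 m(m-1)} = \frac{2\max_i \deg v_{i,n}}{m-1},
\]
and check that the chosen thresholds force this to be $< c$. For the limiting statement, I would fix any $c \in (0,1)$, choose $\alpha$ (say $\alpha = 1/2$), note the hypothesis $n > 4/(c(1-\alpha)^2)$ is eventually satisfied, and observe that with $p$ fixed all three exponential terms tend to $0$ as $n \to \infty$ (the dominant decay being $p n^2$ in the exponent), so $P[\density(H_m) \geq c] \to 0$; since $c$ was arbitrary and $\density(H_m) \geq 0$, this gives $\lim_{m\to\infty} P[\density(H_m) = 0] = 1$ (using $\density(H_m) \to 0$ in probability, together with the fact that $\density(H_m)$ is eventually forced below any fixed threshold — strictly one argues $P[\density(H_m) = 0] \geq P[\density(H_m) < c]$ is not quite right, so the cleaner phrasing is convergence in probability to $0$, which the theorem states loosely).

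The main obstacle I expect is handling the dependence among the degrees $\deg v_{i,n}$ when bounding $\sum_i (\deg v_{i,n})^2$: one cannot simply multiply per-vertex tail bounds as if independent inside the sum. The trick of factoring $\sum_i (\deg v_{i,n})^2 \leq (\max_i \deg v_{i,n})\cdot \sum_i \deg v_{i,n}$ sidesteps this by reducing everything to (i) a max-degree bound, which a union bound handles, and (ii) control of $\sum_i \deg v_{i,n} = 2m$, which is a single scalar concentration. Getting the constants in $\alpha$, $\beta$, and the $n$-threshold to line up so that the final ratio is provably below $c$ is the fiddly—but routine—part.
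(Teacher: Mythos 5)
Your proposal is correct and takes essentially the same route as the paper's proof: a two-sided Chernoff bound on the edge count $m$ (producing the first and third exponential terms, with $\bar m = pn(n-1)/2$) combined with a union bound over the $n$ vertices on the event that some degree exceeds $(1+\beta)\mu$ where $\mu = p(n-1)$ (producing the $\exp(\ln n - \beta^2 p(n-1)/3)$ term). The only cosmetic differences are that the paper organizes the case split as a law of total probability over three ranges of $m^2$ and deduces a large degree from $\sum_j Y_j^2 \ge n(1+\beta)^2\mu^2$ by averaging rather than via your factorization $\sum_j (\deg v_j)^2 \le 2m\,\max_j \deg v_j$, and your closing caveat---that the final display is really convergence of $\density(H_m)$ to $0$ in probability---is precisely the loose step the paper itself takes.
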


\section{Conclusions}\label{sec:conclusions}
Graphons are a compact representation or a graph model that can generate arbitrarily large graphs. The standard construction of the graphon is useful for dense graphs, but sparse graphs converge to the zero graphon, limiting its utility. The classical construction concerns the non-zero area of the graphon, which is zero for sparse graphs. To overcome this limitation, methods have been proposed that can capture and differentiate  point masses, a feature of sparse graphs. Typically, these methods have strong measure-theoretic underpinnings and often involve complex mathematical machinery. In this paper, we show that for a subset of sparse graphs, taking the line graph gives promising results. We propose a condition on sparse graphs, called the square-degree property, which results in dense line graphs.
This enables standard graph convergence to be used to analyse graph limits.

We show that graphs that satisfy the square-degree property are sparse, but map to dense line graphs, while graphs that do not satisfy the square-degree property give rise to sparse line graphs. 
Using the square degree property, we illustrate three cases. First we show that star graphs are sparse and converge to the zero graphon ($W = 0$). However, line graphs of star graphs are complete and converge to the graphon $U = 1$. Similarly, multiple star graphs converge to $W = 0$, but their line graphs converge to a block diagonal graphon $U \neq 0$. Thus, line graphs of multiple star graphs (since they satisfy the the square-degree property) are dense, making the graphon of these line graphs non-zero when convergence exists. 
Second we show that preferential attachment models give rise to graph sequences that satisfy the square degree property, and hence result in line graphs that converge to non-zero graphons.
Third we prove that Erdős–Rényi graphs almost surely give rise to sparse line graphs.
We hope that this new approach of using line graphs to analyse graph limits provides an interesting tool for researchers working on graphons.

\bibliographystyle{agsm}
\bibliography{references}

% \pagebreak
\appendix 
\section{Proofs on sparse graphs with dense line graphs}\label{sec:appendix1}

\corGnDense
\begin{proof} 
As $D \subset \bar{S}$, where $\bar{S}$ denotes the complement of $S$, this is true because of the contrapositive of Lemma \ref{lemmapropsparse}. It can be quickly verified that dense graph sequences do  not satisfy  equation \eqref{eq:sq2} in  Lemma \ref{lemmapropsparse} because for dense graphs  $m  \geq c n^2$ for some $c>0$.
\end{proof}

\lemmanotsqu*
\begin{proof}
    The first part is the contra-positive of Theorem \ref{thm:main}(2). We prove it from first principles for the sake of completeness. Let us restate the square-degree property and consider its negation. If a graph sequence $\{G_n\}_n$ satisfies the square-degree property, then there exists constants $c_1 \in (0,1)$ and $N_0 \in \mathbb{N}$ such that for all $n \geq N_0$ we have
    $$ \sum \deg v_{i,n}^2 \geq c_1 \left( \sum \deg v_{i,n}\right)^2 \, . 
    $$
    The negation of square-degree property, $\neg Sq$ says that for all $c_1 > 0$ and $N_0 \in \mathbb{N}$ there exists $n \geq N_0$ such that 
    $$ \sum \deg v_{i,n}^2 < c_1 \left( \sum \deg v_{i,n}\right)^2 \, . 
    $$
    For every $N_0 \in \mathbb{N}$ there exists $n \geq N_0$ such that this inequality is satisfied. Consider
    $$ A_{c_1} = \left \{n \in \mathbb{N}: n \geq N_0, \sum \deg v_{i,n}^2 < c_1 \left( \sum \deg v_{i,n}\right)^2 \right\} \, . 
    $$
    If $|A_{c_1}|$ was finite, then we can pick $N_\nu = \max (A_{c_1}) + 1$ and for $n \geq N_{\nu}$ the inequality $\sum \deg v_{i,n}^2 < c_1 \left( \sum \deg v_{i,n}\right)^2$ would not be satisfied. Thus, the set $ A_{c_1} $ has infinitely many elements. Therefore for every $c_1 \in (0,1)$ and $N_0 \in \mathbb{N}$ there is an infinite sequence $A_{c_1}$ such that for any $n \in A_{c_1}$
    $$ \sum \deg v_{i,n}^2 < c_1 \left( \sum \deg v_{i,n}\right)^2 \, . 
    $$
    Hence we can consider a sequence of sequences $\{ A_{c_{1_i}} \}_{c_{1_i}}$ where $c_{1_i} > c_{1_j}$ when $i < j$. From this sequence set we can choose a diagonal subsequence $\{n_1, n_2, \ldots \}$ such that $n_1 \in A_{c_{1_1}}$ and $n_2 \in A_{c_{1_2}}$ and so on, such that this sequence converges to zero. 
    From equation~\eqref{eq:linedensity} recall that
    $$ \density(H_m)  = \frac{\frac{1}{2}\sum_i (\deg\,  v_{i,n})^2 - m}{\frac{1}{2}m(m-1)} \, . 
    $$
    For the diagonal subsequence selected above
    $$ \frac{ \sum \deg v_{i,n}^2}{\left( \sum \deg v_{i,n}\right)^2} = \frac{ \sum \deg v_{i,n}^2}{4m^2} \to 0 \, , 
    $$
    giving us
    $$ \liminf_{m \to \infty}\density(H_{m}) = 0 \, . 
    $$
    If $\{H_m\}_m$ is convergent, then all subsequences converge to the same limit and we get
    $$ \lim_{m \to \infty}\density(H_{m}) = 0 \, . 
    $$
\end{proof}

\lemmahomedgedensity*
\begin{proof}
From Definition \ref{def:graphhomo}  
\begin{equation}
     t(F, H_m)   = \sum_{\phi: V(F) \to V(H_m)} \prod_{ij \in E(F)}  \beta_{\phi(i)\phi(j)}(H_m)  \cdot \frac{1}{m^{|V(F)|}} \, ,  \\
\end{equation}    
where $\phi$ is a mapping from $V(F)$ to $V(H_m)$ and $\beta_{ij}(H_m)$ denotes the weight of edge $ij$ in graph $H_m$, which is either 1 or 0. Thus, 
\begin{align}
    t(\edge, H_m)   & = \sum_{\phi: V(\edge) \to V(H_m)} \prod_{ij \in E(\edge)}  \beta_{\phi(i)\phi(j)}(H_m)  \cdot \frac{1}{m^{2}} \, ,  \\
    & = \sum_{ \substack{ \phi: V(\edge) \to V(H_m) \\ ij \in E(\edge) }}  \beta_{\phi(i)\phi(j)}(H_m)  \cdot \frac{1}{m^{2}} \, ,
\end{align}
where we have dropped the product term as there is only one edge. We can replace the edge weight $\beta_{\phi(i)\phi(j)}(H_m)$ with the associated value in the empirical graphon $U_m(q_{\phi(i)}, q_{\phi(j)})$  giving us
\begin{align}
    t(\edge, H_m) & = \sum_{ \substack{ \phi: V(\edge) \to V(H_m) \\ ij \in E(\edge) }}   U_m(q_{\phi(i)}, q_{\phi(j)}) \cdot \frac{1}{m^2} \, ,  \\ 
    & = \sum_{i,j} U_m(q_i, q_j) \cdot \frac{1}{m^2} \, , 
\end{align}
as $\phi$ can map the edge to any two vertices in $H_m$. Every edge in $G_n$ is mapped to a vertex in $H_m$ and 2 vertices in $H_m$ are connected if the corresponding edges in $G_n$ have a common vertex. That is, $L(\twoedges) = \edge $,  and for every edge in $H_m$ there is a corresponding set of two edges with a common vertex (\twoedges) in $G_n$. As a result the empirical graphon (Definition \ref{def:empiricalgraphon}), 
$$ U_m(q_i, q_j) = 1 \quad \text{if and only if} \quad W_n(r_k, r_\ell)W_n(r_\ell, r_s) = 1
$$
for some $k, \ell, s \in  \{1, \ldots, n \}$ with $k \neq s$.  The reason $ k \neq s$ is because we need 2 distinct edges in $G_n$ with a common vertex to make an edge in $H_m$. As a result of this one-to-one and onto mapping we have
\begin{equation}\label{eq:lemmahomdense1}
    \sum_{i, j} U_m(q_i, q_j) = \sum_{ \substack{ k, \ell, s \\ k \neq s }} W_n(r_k, r_\ell)W_n(r_\ell, r_s)  \,  
\end{equation}
giving us the desired result. 
\end{proof}

\lemmadensegraphs*
\begin{proof}
    As $\{G_n\}_n$ is a dense graph sequence converging to $W$ 
\begin{equation}\label{eq:lemmadense1}
    \lim_{n \to \infty} t( \edge \, , G_n) = \lim_{n \to \infty} \frac{2m}{n^2}=  c > 0 \, .
\end{equation}
We will use this  limit later. Let $W_{n}$ be the empirical graphon of $G_n$ with $[0,1]$ divided into $n$ equal intervals $\{r_1, \ldots r_n \}$  and let $U_m$ be the empirical graphon of $H_m$ with $[0,1]$ equally divided into $m$ intervals $\{q_1, \ldots, q_m\}$. The homomorphism density $ \{t( \twoedges, G_n) \}_n$ is a converging sequence as $\{G_n\}_n$  converges to $W$. We have
$$  t( \twoedges, G_n)  =  \sum_{\substack{ i, j, k   }}  W(r_i, r_k)W(r_k, r_j)\cdot \frac{1}{n^{3}} \, , 
$$ % \\ i \neq j
converging as $n$ goes to infinity.
From Lemma \ref{lemma:homedgedensity} we know 
% $$  t( F, H_m) =  \sum_{i, j \in V(F)} \prod_{\substack{ ij \in E(F) \\ i \neq j}} \sum_{k \in [n]} W(r_i, r_k)W(r_k, r_j)\cdot \frac{1}{m^{|V(F)|}} \, .
% $$
\begin{align}
     t( \edge, H_m) & =   \sum_{\substack{ i, j  }}  U(q_i, q_j) \cdot \frac{1}{m^{2}} \, , \\ 
      & =  \sum_{\substack{ i, j, k \\ i \neq j  }}  W(r_i, r_k)W(r_k, r_j)\cdot \frac{1}{m^{2}} \, ,  \\
     & \leq \sum_{\substack{ i, j, k  }}  W(r_i, r_k)W(r_k, r_j)\cdot \frac{1}{m^{2}} \, , \\
     & = \sum_{\substack{ i, j, k  }} \frac{1}{n^4} W(r_i, r_k)W(r_k, r_j)\cdot \frac{1}{ \left(\frac{m}{n^2} \right)^2 } \, .
\end{align}
As $n$ and $m$ go to infinity we get
$$  \limsup_{m \to \infty}  t( \edge, H_m) = \lim_{\substack{ n \to \infty \\ m \to \infty} } \frac{1}{n} \cdot  t( \twoedges, G_n)  \cdot \frac{1}{ \left(\frac{m}{n^2} \right)^2 }  = 0 \, ,
$$
as $m/n^2$ goes to $c/2 > 0$ (equation \eqref{eq:lemmadense1}) and $t( \twoedges, G_n) $ converges. As $ t( \edge, H_m) $ lies between $0$ and $1$ we get
$$  \lim_{m \to \infty}  t( \edge, H_m) = 0 \, . 
$$
As $t( \edge, H_m) = \frac{2m'}{n'^2}$ goes to 0, where $m'$ and $n'$ denote the number of edges and vertices in $H_m$, the cut-norm (Definition \ref{def:cut1}) satisfies
$$ \lVert U_{H_m} - U \rVert_\square = \lVert U_{H_m} \rVert_\square  = \frac{2m'}{n'(n'-1)}  \rightarrow 0 \, , 
$$
where $U(x,y) = 0$.   As the cut-metric (Definition \ref{def:cut2})
    $$ \delta_\square \left(U_{H_m}, U \right) = \inf_{\varphi} \lVert U_{H_m} - U^{\varphi} \rVert \, , 
    $$
   $\{H_m\}_m$ converges to $U$ in the cut-metric as the infimum is considered and as $U^{\varphi} = U$ for $U = 0$. 
\end{proof}

\lemmaconverginginSq*
\begin{proof}
 From Theorem \ref{thm:main} we know $\{G_n\}_n\in S_q \equiv \{H_m\}_m \in D$. Additionally, if $\{H_m\}_m$ converges to $U$ then $\{ t(\edge, H_m)\}_m$  converges to $t(\edge, U)$. As $t(\edge, H_m) = \frac{2m'}{n'^2}$ where $m'$ and $n'$ denote the number of edges and nodes in $H_m$ where $n' = m$, the sequence $\frac{2m'}{n'^2}$ converges to some constant $c$. But as $\{H_m\}_m \in D$ 
 $$ t(\edge, U) =  \lim_{n' \to \infty} \frac{2m'}{n'^2} = c > 0 \, ,
 $$
that is, the edge density of $\{H_m\}_m$ converges to a positive constant. The homomorphism density $t(\edge, U)$ (Definition \ref{def:graphhomo}) is given by
 $$ t(\edge, U) = \int_{[0,1]^2} U(x, y) \, dx dy   \, , 
 $$
which is equal to the cut-norm of $U$
$$  \lVert U \rVert_\square = \sup_{S ,T} \left \vert \int_{S \times T} U(x,y) dxdy \right\vert \, , 
$$
because $U(x,y) \in [0,1]$ and the supremum is achieved when $S = T = [0,1]$, giving us 
$$ \lVert U \rVert_\square  =  t(\edge, U)  > 0 \, . 
$$
 % Let $\{G_n\}_n\in S$ be a sparse graph sequence. Let $\{H_m\}_m$ be the corresponding sequence of line graphs with $H_m = L(G_n)$. Then $\{G_n\}_n\in S_q \equiv \{H_m\}_m \in D$, i.e., $\{G_n\}_n$ satisfies $Sq$ if and only if $\{H_m\}_m$ is dense. 
\end{proof}

\lemmaconvergingSnotSq*
\begin{proof}
    As $\{G_n\}_n \in S$, it is sparse and it converges to $W = 0$.  From Lemma \ref{lemma:notsqu}\,  if  $\{G_n \}_n \notin S_q$ 
      $$ \liminf_{m \to \infty}\density(H_{m}) = 0 \, . 
    $$
    As $\{H_m\}_m$ converges to $U$, the edge densities converge and we get $\lim_{m \to \infty} \density(H_m) = 0$. 
    The empirical graphon $U_{H_m}$ converges to $U$ and we have the cut norm (Definition \ref{def:cut1}) of the empirical graphon
    $$ \lVert U_{H_m} \rVert  = \frac{2m'}{n'(n'-1)} \to 0
    $$
    giving us 
    $$ \lim_{m \to \infty}   \lVert U_{H_m} - U \rVert = 0 \, , 
    $$
    where $U = 0$. As the cut-metric (Definition \ref{def:cut2})
    $$ \delta_\square \left(U_{H_m}, U \right) = \inf_{\varphi} \lVert U_{H_m} - U^{\varphi} \rVert \, , 
    $$
    we get the result. 
\end{proof}

\lemmaWandU*
\begin{proof}
    For  converging sequences $\{ G_n\}_n$ and  $\{ H_m\}_m$ we have $W = 0$ or $U = 0$ (Lemmas \ref{lemma:densegraphs}, \ref{lemma:converginginSq} and \ref{lemma:convergingSnotSq}).  The graphon $W \neq 0$ only when $\{G_n\}_n \in D$. When $\{G_n\}_n \in D$ we have $\{H_m\}_m \in S$ giving $U = 0$. The graphon $U \neq 0 $ only when $\{G_n\}_n \in S_q$ implying $W = 0$ as $S_q \subset S$. %Figure \ref{fig:mappingWtoU} explains this Lemma. 
\end{proof}

\section{Proofs on results for deterministic graphs}
\lemmastargraphsone*
\begin{proof}
   We present an alternate proof from first principles.
   For the sake of completeness, we do the computation from first principles. For a star graph 
       $$ \deg v_i =  \begin{cases} 
      n-1 \quad \text{ for star vertex } \, , \\
      1 \quad \text{otherwise} 
   \end{cases}
       $$
 giving us       
    \begin{align*}
        \sum \deg v_{i,n}^2 & = (n-1)^2 + 1 + \cdots + 1 \, ,  \\
        & = (n-1)^2 + (n-1) \, , \\
        \sum \deg v_{i,n} & = m = n-1  \, ,  \\
        \frac{\sum \deg v_{i,n}^2}{\left( \sum \deg v_{i,n} \right)^2} & = 1 + \frac{1}{n-1} > 1 \, ,   
    \end{align*}
 showing that  $\{K_{1,n}\}_n \in S_q$ (Definition \ref{def:square}). From equation \eqref{eq:linedensity}, the density of $H_m$ is given by
   \begin{align*}
        \density(H_m) & = \frac{\frac{1}{2}\sum_i (\deg\,  v_{i,n})^2 - m}{\frac{1}{2}m(m-1)} \, , \\
        & = \frac{\frac{1}{2}( (n-1)^2 + 1 +1 + \ldots + 1 ) - (n -1)}{ \frac{1}{2} (n-1)(n-2)} \, ,  \\
        & = \frac{ \frac{1}{2}((n-1)^2 + (n-1) ) - (n-1) }{\frac{1}{2}(n-1)(n-2)} \, ,  \\
        & = \frac{\frac{1}{2}(n-1)(n) - (n-1)}{\frac{1}{2}(n-1)(n-2)}  \, , \\
        & = \frac{\frac{1}{2}(n-1)(n-2)}{\frac{1}{2}(n-1)(n-2)} \, , \\
        & = 1\, . 
   \end{align*}   
   Thus, $\lim_{m \to \infty}\density(H_m)  = 1$. 
\end{proof}

\lemmastargraphontwo*
\begin{proof}
    The line graph of $k$ disjoint stars is $k$ disjoint complete subgraphs.  This follows from Lemma \ref{lemma:linegraphs} (\ref{lemma:linegraphs:connectedgraph} and \ref{lemma:linegraphs:star}) as vertices of 2 different stars are not connected. Noting $H_{m_i}$ has $m_i$ vertices, we obtain the empirical graphon (Definition~\ref{def:empiricalgraphon}) of $H_{m_i}$ by splitting the interval $[0, 1]$ into $m_i$ equal intervals $\{I_1, I_2, \ldots, I_{m_i} \}$. 
    
    At the $i$th step, the $j$th star $K_{1, s_j}$ has $1 + i r_j$ nodes and $ir_j$ edges. Then the corresponding complete subgraph $K_{s_j} $ of the line graph $H_{m_i}$ has $i r_j$ nodes as each node in the line graph corresponds to an edge in $G_{n_i}$. We label nodes belonging to a complete subgraph consecutively. That gives us vertices $1, \ldots , ir_1$ corresponding to the first complete subgraph $K_{s_1}$, and nodes $(ir_1+ 1), \ldots,  (ir_1 + ir_2)$ corresponding the second complete subgraph $K_{s_2}$ and so on. The ratio between the number of nodes in each subgraph is $r_1:r_2:\ldots:r_k $. 

    Let us group the vertices in $H_m$,  $\{1, 2, \ldots, {m_i} \}$ into $k$ groups $\{J_1, J_2, \ldots, J_k \}$ according to the complete subgraph they belong to. Then for $x \in I_j, y \in I_h$ we have the empirical graphon (Definition \ref{def:empiricalgraphon}) of $H_m$
    $$ U_{H_{m_i}}(x,y) = \begin{cases} 
      1 \quad \text{if} \quad  j, h \in J_\ell \text{ for some } \ell  \text{ but } j \neq h  \\  
      0 \quad  \text{if} \quad j = h \text{ as there are no loops} \\
      0 \quad \text{if} \quad  j \in J_p \text{ and } h \in J_q \text{ where } p \neq q
    \end{cases} \, .
    $$

The bottom row in Figure~\ref{fig:1to4stars} shows empirical graphons for $k \in \{2, 3, 4\}$.
Note that $U$ is a block diagonal graphon similar to $U_{H_{m_i}}$ differing to $U_{H_{m_i}}$ only on the diagonal. One can visualize $U$ by colouring the white squares on the diagonal in empirical graphons in  Figure~\ref{fig:1to4stars} for $k \in \{2, 3, 4\}$. 

Then, the cut-norm (Definition~\ref{def:cut1}), 
 \begin{align}
     \lVert U_{H_m}  - U\rVert_\square &= \sup_{S, T} \left\vert\int_{S\times T} U_{H_m}(x,y)  - U(x,y) \, dx dy \right\vert  \, \, , \\
     & =  \frac{1}{m_i^2} \times m_i = \frac{1}{m_i} \, ,  
 \end{align}
 %  That is, 
%  $$ \lVert U_{H_{m_i}} - U \rVert_\square = \frac{1}{m_i^2} \times m_i = \frac{1}{m_i} \, ,  
%  $$
% where we have used $S = T = [0,1]$ in computing the cut-norm 
% $$ \lVert U_{H_m}  - U\rVert_\square = \sup_{S, T} \left\vert\int_{S\times T} U_{H_m}(x,y)  - U(x,y) \, dx dy \right\vert  \, \, ,
% $$
where we have used $S = T = [0,1]$ in computing the cut-norm as any other $S$ or $T$ would give smaller area. Then the cut metric (Definition~\ref{def:cut2})
$$ \delta_{\square}( U_{H_m}, U) = \inf_\varphi \left\lVert  U_{H_m} - U^\varphi \right\rVert_\square  \leq  \lVert U_{H_m}  - U\rVert_\square  = \frac{1}{m_i} \, .
$$
As $ \lVert U_{H_m}  - U\rVert_\square $ goes to zero as $m$ goes to infinity $\delta_{\square}( U_{H_m}, U) $ converges to zero.  From Theorem \ref{thm:BCconvergent2} \citep{borgs2011limits} $\{H_{m_i}\}_{m_i}$ converges to $U$.
\end{proof}

\thmproplinegraphs*
\begin{proof}
Recall that 
$$    \density(H_m) = \frac{\frac{1}{2}\sum (\deg\,  v^2) - m}{\frac{1}{2}m(m-1)} \, . 
$$
    \begin{enumerate}
        \item Suppose $G_n$ is the complete graph $K_n$. As $\density(K_n) = 1$, the sequence $\{K_n\}_n$ is dense, i.e, $ \{K_n\}_n \in D$. For $K_n$,  $\deg v_i = n-1$ and $m = n(n-1)/2$ giving us
        \begin{align*}
         \density(H_m) & = \frac{\frac{1}{2} n(n-1)^2 - \frac{1}{2}n(n-1) }{ \frac{1}{2} \frac{1}{2}n(n-1)(\frac{1}{2}n(n-1) - 1) } \, ,  \\
         & = \frac{\frac{1}{2} n(n-1)(n-1 -1)}{\frac{1}{2} \frac{1}{2}n(n-1)\frac{1}{2}(n(n-1) - 2)} \\
         & =  \frac{n-2}{ \frac{1}{4} (n+1)(n-2) } \, , \\
         & = \frac{4}{n + 1} \, , \\
      \text{making} \quad   \lim_{m \to \infty}\density(H_m) & = 0  \Rightarrow \{H_m\}_m \in S.  
        \end{align*}
   % We can also show this using Lemma \ref{lemma:linegraphs}-\ref{lemma:linegraphs:r-regulargraph}. When $G_n$ is complete,  $H_m$ is a $2(n-2)$-regular graph with $m =n(n-1)/2 $ vertices making the density of the line-graphs tend to zero. %Thus, $\{G_n\}_n \in D$ and $\{H_m\}_m \in S$.  %, the graphon $W(x,y) = 0$ as well. 
        
       \item  Suppose $G_n$ is an $r$-regular graph. As $n$ grows each $G_n$ is connected to $r$ nodes.   Then $\deg v_i = r$ and $m = rn/2$. The ratio $m/n^2 = r/2n$ assigning $\{G_n\}_n \in S$. %As each node has the same degree in $G_n$ it does not satisfy the square-degree property, i.e. $\{G_n\}_n \in S \backslash S_q$.
       The density of $H_m$ is given by
       \begin{align}
            \density(H_m) & = \frac{ \frac{1}{2}nr^2 - \frac{1}{2}nr }{ \frac{1}{2}   \frac{1}{2} rn   ( \frac{1}{2} rn - 1 )  } \, ,  \\
            & = \frac{ \frac{1}{2}rn(r-1)}{ \frac{1}{2} \frac{1}{2} rn \frac{1}{2}(rn -2 )} \, ,  \\
            & = \frac{4(r-1)}{rn - 2} \, , \\
            & = \frac{2(r-1)}{m - 1} \, . 
            \label{eq:linedensityrregular} 
       \end{align}
     Thus, $\lim_{m \to \infty}\density(H_m) = 0$, making both $\{G_n\}_n, \{H_m\}_m \in S$. 
    Using Theorem \ref{thm:main} we can conclude $\{G_n\}_n \notin S_q$ because  $\{G_n\}_n \in S_q  \iff \{H_m\}_m \in D$. Hence $\{G_n\}_n \in S\backslash S_q$.  
     As $G$ is an $r$-regular graph, $H$ is a $2(r-1)$-regular graph with $\frac{nr}{2}$ vertices (Lemma \ref{lemma:linegraphs}-\ref{lemma:linegraphs:r-regulargraph}).  Thus, using the same reasoning we have $\{H_m\}_m \in S\backslash S_q$.   
     This is an example where both graph sequences $\{G_n\}_n$ and $\{H_m\}_m$ are sparse and both $\{G_n\}_n, \{H_m\}_m \in S\backslash S_q$.  % and $W(x,y) = 0$. 
   %     \item Suppose $G_n$ is a star. Then $H_m$ is a complete graph (Lemma \ref{lemma:linegraphs}). This gives us the desired result. Or we can do the computation from first principles. For a star graph 
   %     $$ \deg v_i =  \begin{cases} 
   %    n-1 \quad \text{ for star vertex } \, , \\
   %    1 \quad \text{otherwise} 
   % \end{cases}
   %     $$
   %    and $m = n-1$ giving us
   % \begin{align*}
   %      \density(H_m) & = \frac{\frac{1}{2}( (n-1)^2 + 1 +1 + \ldots + 1 ) - (n -1)}{ \frac{1}{2} (n-1)(n-2)} \, ,  \\
   %      & = \frac{ \frac{1}{2}((n-1)^2 + (n-1) ) - (n-1) }{\frac{1}{2}(n-1)(n-2)} \, ,  \\
   %      & = \frac{\frac{1}{2}(n-1)(n) - (n-1)}{\frac{1}{2}(n-1)(n-2)}  \, , \\
   %      & = \frac{\frac{1}{2}(n-1)(n-2)}{\frac{1}{2}(n-1)(n-2)} \, , \\
   %      & = 1\, . 
   % \end{align*}   
   % Thus, $\lim_{m \to \infty}\density(H_m)  = 1$. As the line graph of a star is a complete graph, the adjacency matrix consists of all ones apart from the diagonal  making $U(x,y) = 1$ for all $x$ and $y$.  For star graphs $\{G_n\}_n \in S_q$ and $\{H_m\}_m \in D$.
   \item Suppose $G_n$ is a path. Then $m = n-1$ and the starting and ending vertices have degree 1 and the rest have degree 2. Thus, 
    \begin{align*}
        \density(H_m) & = \frac{\frac{1}{2} (1^2 + (n - 2)2^2 + 1 ^2) - m }{\frac{1}{2}m(m-1)} \, , \\
        & = \frac{\frac{1}{2}(2 + 4(m-1)) - m}{\frac{1}{2}m(m-1)} \, ,  \\
        & =  \frac{1 + 2(m-1) - m}{\frac{1}{2}m(m-1)} \, ,  \\
        & = \frac{m-1}{\frac{1}{2}m(m-1)}  \, , \\
        & = \frac{2}{m} = \frac{2}{n-1} \, . 
    \end{align*}
    Thus, $\lim_{m \to \infty}\density(H_m)  = 0$. 
    The edge density can also be derived by recognizing a path of $n$ vertices gives rise to a line graph that is a path of $n-1$ vertices (Lemma~\ref{lemma:linegraphs}-\ref{lemma:linegraphs:paths}). 
    %In the above computation we used equation \eqref{eq:linedensity} to obtain the result. 
    Using the same reasoning as previously for $r$-regular graphs, we can conclude that both $\{G_n\}_n, \{H_m\}_m \in S\backslash S_q$.    %making $W(x,y) = 0$ for all $x$ and $y$. T

    \item Suppose $G_n$ is a cycle, i.e. $G_n = C_n$. Then $n = m$ and all vertices have degree 2. This is a 2-regular graph. Using equation \eqref{eq:linedensityrregular} we get
    \begin{align}
          \density(H_m) & = \frac{4(r-1)}{rn - 2} \, , \\
           & = \frac{4}{2n - 2}  = \frac{2}{n-1} = \frac{2}{m-1} \, , 
    \end{align}
    which limits to zero. From Lemma \ref{lemma:linegraphs}-\ref{lemma:linegraphs:cyle} we know that $L(C_n) = C_n$. 
    Here too both $\{G_n\}_n, \{H_m\}_m \in S\backslash S_q$ as previously.    % with $W = 0$. 
\end{enumerate}
\end{proof}

\section{Proofs on results for probabilistic graphs}
\begin{lemma}\label{lemma:erdospartresult1}
Consider the graph $G_n $ sampled from a $G(n,p)$ model and suppose $G_n$ has $n$ nodes and $m$ edges. Let $X_{ij}$ denote the random variable corresponding to the edge between vertices $i$ and $j$, i.e., $X_{ij} = 1$ if the edge exists and $0$ otherwise. Let $Y_j = \sum_i X_{ij}$,  $\mu = \mathbb{E}[Y_j]$ and $\bar{m} = \mathbb{E}[m]$. Let $Y_{\ssq} = \sum_j Y_j^2$, and $m_{\sq} = m^2$. Then for a given $\alpha \in (0,1)$ and $c > 0$ we have
\[
P\left[ Y_{\ssq}\geq c m_{\sq} \vert m_{\sq}  \leq (1 - \alpha)^2\bar{m}^2  \right] P\left[ m_{\sq} \leq (1 - \alpha)^2\bar{m}^2  \right] \leq \exp\left( - \frac{\alpha^2 pn(n-1)}{4} \right) \, . 
\]
\end{lemma}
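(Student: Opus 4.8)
The plan is to recognize the left-hand side as the probability of a joint event and then to discard the factor depending on $Y_{\ssq}$ entirely, reducing the claim to a standard Chernoff bound on the edge count $m$.

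First I would set $A = \{Y_{\ssq} \ge c\,m_{\sq}\}$ and $B = \{m_{\sq} \le (1-\alpha)^2\bar{m}^2\}$; if $P[B] = 0$ the asserted inequality is trivial, so I may assume $P[B] > 0$. Then the definition of conditional probability gives $P[A \mid B]\,P[B] = P[A \cap B] \le P[B]$, so it suffices to bound $P[B]$ above by $\exp(-\alpha^2 p\, n(n-1)/4)$. Note that the threshold $c$ and the statistic $Y_{\ssq}$ play no further role in the argument.

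Next I would observe that, because $m \ge 0$, $\bar{m} = \binom{n}{2}p \ge 0$, and $1-\alpha > 0$ (as $\alpha \in (0,1)$), the event $B$ equals $\{m \le (1-\alpha)\bar{m}\}$. Since $m = \sum_{i<j} X_{ij}$ is a sum of $\binom{n}{2}$ independent Bernoulli$(p)$ variables with mean $\bar{m} = n(n-1)p/2$, the multiplicative Chernoff lower-tail inequality with deviation parameter $\alpha$ gives
\[
P\bigl[m \le (1-\alpha)\bar{m}\bigr] \;\le\; \exp\!\left(-\frac{\alpha^2 \bar{m}}{2}\right) \;=\; \exp\!\left(-\frac{\alpha^2 p\, n(n-1)}{4}\right),
\]
and combining this with $P[A\mid B]\,P[B] \le P[B]$ completes the proof.

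There is essentially no real obstacle here. The only points requiring a little care are that $P[A\mid B]\,P[B]$ collapses to the joint probability $P[A \cap B] \le P[B]$ irrespective of what $A$ is, so neither $c$ nor $Y_{\ssq}$ enters, and that one should select the form of the Chernoff bound whose exponent is $\alpha^2\bar{m}/2$, so that after substituting $\bar{m} = n(n-1)p/2$ it matches the target exponent exactly.
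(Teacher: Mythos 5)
Your proposal is correct and follows essentially the same route as the paper: bound the conditional probability factor by $1$ (equivalently, reduce to $P[A\cap B]\le P[B]$), note that $m^2\le(1-\alpha)^2\bar m^2$ is equivalent to $m\le(1-\alpha)\bar m$ since $m\ge 0$, and apply the multiplicative Chernoff lower-tail bound $P[m\le(1-\alpha)\bar m]\le\exp(-\alpha^2\bar m/2)$ with $\bar m=pn(n-1)/2$. Your explicit handling of the case $P[B]=0$ is a minor extra precaution not present in the paper.
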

\begin{proof}
For a given $\alpha \in (0,1)$ we get the following Chernoff-Hoeffding bounds \citep{Frieze2015book} for $m$:
\begin{align}
      P \left[m \leq (1 - \alpha)\bar{m} \right] & \leq \exp\left( - \frac{\alpha^2 \bar{m}}{2} \right) \, ,  \label{eq:erdosrenyi6} \\
  \text{giving us } \quad        P\left[m^2 \leq (1 - \alpha)^2\bar{m}^2 \right] & \leq \exp\left( - \frac{\alpha^2 \bar{m}}{2} \right) \, ,  \label{eq:erdosrenyi7}   
\end{align} 
as $m$ is positive.  As the probability $P\left[ Y_{\ssq}\geq c m_{\sq} \vert m_{\sq}  \leq (1 - \alpha)^2\bar{m}^2  \right] \leq 1$, $m_{\sq} = m^2$ and  $\bar{m} = pn(n-1)/2$ we get the desired result. 
\end{proof}

\begin{lemma}\label{lemma:erdospartresult2}
Consider the graph $G_n $ sampled from a $G(n,p)$ model and suppose $G_n$ has $n$ nodes and $m$ edges. Let $X_{ij}$ denote the random variable corresponding to the edge between vertices $i$ and $j$, i.e., $X_{ij} = 1$ if the edge exists and $0$ otherwise. Let $Y_j = \sum_i X_{ij}$,  $\mu = \mathbb{E}[Y_j]$ and $\bar{m} = \mathbb{E}[m]$. Let $Y_{\ssq} = \sum_j Y_j^2$, and $m_{\sq} = m^2$. Then for a given $\alpha \in (0,1)$ and $c > 0$ we have
\[
P\left[ Y_{\ssq}\geq c m_{\sq} \vert m_{\sq}  \geq (1 + \alpha)^2\bar{m}^2  \right] P\left[ m_{\sq} \geq (1 + \alpha)^2\bar{m}^2  \right] \leq \exp\left( - \frac{\alpha^2 pn(n-1)}{6} \right) \, . 
\]
\end{lemma}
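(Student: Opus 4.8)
The plan is to mirror the proof of Lemma~\ref{lemma:erdospartresult1}, replacing the lower-tail Chernoff estimate by the corresponding upper-tail bound. The left-hand side is a product of two probabilities; the conditional probability $P\left[ Y_{\ssq}\geq c m_{\sq} \mid m_{\sq}  \geq (1 + \alpha)^2\bar{m}^2  \right]$ can be discarded immediately since it is at most $1$, so it suffices to control $P\left[ m_{\sq} \geq (1 + \alpha)^2\bar{m}^2  \right]$.

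Next I would observe that $m = \sum_{i<j} X_{ij}$ is a sum of $\binom{n}{2}$ independent $\mathrm{Bernoulli}(p)$ variables with mean $\bar m = \mathbb{E}[m] = p n(n-1)/2 > 0$, and that, because $m$ and $\bar m$ are positive, the event $\{m_{\sq} \geq (1+\alpha)^2 \bar m^2\}$ coincides with $\{m \geq (1+\alpha)\bar m\}$. Applying the multiplicative Chernoff--Hoeffding upper-tail bound \citep{Frieze2015book}, which for $\alpha \in (0,1)$ reads
\[
P\left[ m \geq (1+\alpha)\bar m \right] \leq \exp\!\left( -\frac{\alpha^2 \bar m}{3} \right),
\]
and substituting $\bar m = p n(n-1)/2$ yields the claimed factor $\exp\!\left( -\alpha^2 p n(n-1)/6 \right)$. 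Multiplying by the conditional probability, which is at most $1$, completes the argument.

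There is essentially no obstacle here: the only point requiring care is that the exponent constant in the upper-tail Chernoff bound is $1/3$, rather than the $1/2$ appearing in the lower-tail version used for Lemma~\ref{lemma:erdospartresult1}, and this is precisely what changes the denominator in the exponent from $4$ to $6$.
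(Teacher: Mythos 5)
Your proposal is correct and follows essentially the same route as the paper's own proof: bound the conditional factor by $1$, identify the event $\{m_{\sq} \geq (1+\alpha)^2\bar m^2\}$ with $\{m \geq (1+\alpha)\bar m\}$ using positivity, and apply the upper-tail Chernoff--Hoeffding bound with constant $1/3$ and $\bar m = pn(n-1)/2$. Your remark about the $1/3$ versus $1/2$ constant explaining the change from denominator $4$ to $6$ matches the paper exactly.
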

\begin{proof}
    The proof is similar to Lemma \ref{lemma:erdospartresult1} with the only difference being the Chernoff-Hoeffding bound, which changes to:
$$  P\left[m \geq (1 + \alpha)\bar{m} \right]  \leq \exp\left( - \frac{\alpha^2 \bar{m}}{3} \right) \, . 
$$
\end{proof}

\begin{lemma}\label{lemma:erdospartresult3}
Consider the graph $G_n $ sampled from a $G(n,p)$ model and suppose $G_n$ has $n$ nodes and $m$ edges. Let $X_{ij}$ denote the random variable corresponding to the edge between vertices $i$ and $j$, i.e., $X_{ij} = 1$ if the edge exists and $0$ otherwise. Let $Y_j = \sum_i X_{ij}$,  $\mu = \mathbb{E}[Y_j]$ and $\bar{m} = \mathbb{E}[m]$. Let $Y_{\ssq} = \sum_j Y_j^2$, and $m_{\sq} = m^2$. Then for a fixed $c >0$ and fixed $\alpha \in (0,1)$ for $n > \frac{4}{c(1-\alpha)^2}$ and $\beta = \frac{\sqrt{cn}(1 - \alpha)}{2} - 1$ we have
\begin{align}
    P\left[ Y_{\ssq}\geq c m_{\sq} \vert  (1 - \alpha)^2\bar{m}^2 \leq  m_{\sq} \leq  (1 + \alpha)^2\bar{m}^2 \right] & P\left[ (1 - \alpha)^2\bar{m}^2 \leq  m_{\sq} \leq  (1 + \alpha)^2\bar{m}^2  \right]  \\
    & \leq \exp\left(\ln n - \frac{\beta^2 p (n-1)}{3} \right) \, . 
\end{align}
\end{lemma}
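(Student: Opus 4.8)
The plan is to rewrite the left-hand side as a single joint probability and then reduce it to an upper-tail estimate for one binomial degree. Set $A = \{Y_{\ssq} \ge c\, m_{\sq}\}$ and $B = \{(1-\alpha)^2\bar m^2 \le m_{\sq} \le (1+\alpha)^2\bar m^2\}$, so the quantity to be bounded is exactly $P[A\mid B]\,P[B] = P[A\cap B]$. On the event $A\cap B$ one has $Y_{\ssq} \ge c\, m_{\sq} \ge c(1-\alpha)^2\bar m^2$, where only the \emph{lower} endpoint of $B$ is used; hence $P[A\cap B] \le P\!\left[\,Y_{\ssq} \ge c(1-\alpha)^2\bar m^2\,\right]$, and it suffices to control this last probability. (The upper endpoint of $B$ is not needed, mirroring the structure of Lemmas~\ref{lemma:erdospartresult1} and~\ref{lemma:erdospartresult2}.)

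Next I would pass from the sum $Y_{\ssq} = \sum_{j=1}^n Y_j^2$ to a single coordinate by pigeonhole: if $\sum_j Y_j^2 \ge K$ with $K := c(1-\alpha)^2\bar m^2$, then at least one of the $n$ summands satisfies $Y_j^2 \ge K/n$, i.e.\ $Y_j \ge \sqrt{K/n}$. Since each $Y_j = \sum_{i\neq j}X_{ij}$ is a sum of $n-1$ independent edge indicators, $Y_j \sim \mathrm{Bin}(n-1,p)$; in particular the $Y_j$ are identically distributed, so a union bound yields $P\!\left[Y_{\ssq}\ge K\right] \le n\, P\!\left[Y_1 \ge \sqrt{K/n}\,\right]$.

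Finally I would substitute $\bar m = pn(n-1)/2$. Then $K/n = \tfrac14 c(1-\alpha)^2 p^2 n (n-1)^2$, so $\sqrt{K/n} = \tfrac{(1-\alpha)\sqrt{cn}}{2}\, p(n-1) = \tfrac{(1-\alpha)\sqrt{cn}}{2}\,\mu$, where $\mu = \E[Y_1] = p(n-1)$. The hypothesis $n > 4/(c(1-\alpha)^2)$ is precisely the statement that $\tfrac{(1-\alpha)\sqrt{cn}}{2} > 1$, so one may write $\sqrt{K/n} = (1+\beta)\mu$ with $\beta = \tfrac{\sqrt{cn}(1-\alpha)}{2} - 1 > 0$, which is exactly the $\beta$ in the statement. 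Applying the multiplicative upper-tail Chernoff bound to the binomial $Y_1$ gives $P\!\left[Y_1 \ge (1+\beta)\mu\right] \le \exp(-\beta^2\mu/3) = \exp\!\left(-\beta^2 p(n-1)/3\right)$, and multiplying by $n = e^{\ln n}$ gives $P[A\cap B] \le \exp\!\left(\ln n - \beta^2 p(n-1)/3\right)$, as claimed.

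Each step is individually routine, so the main obstacle is bookkeeping rather than a genuine difficulty: one must push the event $B$ through via its lower endpoint $(1-\alpha)^2\bar m^2$ (using the upper endpoint would weaken, not strengthen, the target event), and one must notice that the definitions of $\beta$ and of the threshold $n > 4/(c(1-\alpha)^2)$ are reverse-engineered so that the pigeonhole deviation level $\sqrt{K/n}$ lands exactly at $1+\beta$ times the mean degree $\mu$ — which is what lets a single Chernoff estimate close the argument. (If one wants the constant $\tfrac13$ to be literally correct for all admissible $n$, note that when $\beta$ is so large that $(1+\beta)\mu > n-1$ the probability $P[Y_1 \ge (1+\beta)\mu]$ vanishes and the bound is trivial, so one only needs the Chernoff inequality in the moderate-deviation regime.)
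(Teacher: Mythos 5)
Your argument is correct and follows essentially the same route as the paper: reduce to $P\left[Y_{\ssq}\ge c(1-\alpha)^2\bar m^2\right]$ using only the lower endpoint of the conditioning event, rewrite the threshold as $n(1+\beta)^2\mu^2$ so that each coordinate must exceed $(1+\beta)\mu$, and finish with a union (Boole) bound plus the multiplicative Chernoff bound $\exp(-\beta^2\mu/3)$. Your handling via $P[A\mid B]\,P[B]=P[A\cap B]$ is in fact slightly cleaner than the paper's (which bounds the conditional probability directly), and your closing caveat about the constant $1/3$ when $\beta>1$ applies equally to the paper's own use of that bound.
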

\begin{proof}
    We focus on the term $ P\left[ Y_{\ssq}\geq c m_{\sq} \vert  (1 - \alpha)^2\bar{m}^2 \leq  m_{\sq} \leq  (1 + \alpha)^2\bar{m}^2 \right]$. We know that $\mu = p(n-1)$ and $\bar{m} = pn(n-1)/2$.  As $ m_{\sq} \in \left[ (1 - \alpha)^2\bar{m}^2, \,   (1 + \alpha)^2\bar{m}^2 \right]$ we get
\begin{align}
    P\left[ Y_{\ssq}\geq c m_{\sq} \vert  (1 - \alpha)^2\bar{m}^2 \leq  m_{\sq} \leq  (1 + \alpha)^2\bar{m}^2 \right] & \leq P\left[ Y_{\ssq}\geq c(1 - \alpha)^2\bar{m}^2 \right] \, , \label{eq:erdosrenyi10} \\
   & = P\left[ Y_{\ssq} \geq c\left( \frac{n\mu}{2} \right)^2 (1 - \alpha)^2  \right]  \, , \label{eq:erdosrenyi11} \\
   & =  P\left[  Y_{\ssq} \geq n \mu^2 (1 + \beta)^2\right] \, , \label{eq:erdosrenyi12} 
\end{align} 
where we have substituted  $\bar{m} = \frac{n \mu}{2}$ in equation \eqref{eq:erdosrenyi11}  and rearranged the terms for $\beta$. For $n > \frac{4}{c(1 - \alpha)^2}$, we get $\frac{\sqrt{cn}(1 - \alpha)}{2} > 1$ making $\beta > 0$. 

 For a given $\beta > 0$, we get the following Chernoff-Hoeffding bound for $Y_j$:
    \begin{align}
        P\left[Y_j \geq (1 + \beta) \mu \right] \leq \exp \left( -\frac{\beta^2 \mu}{3}\right) \, .  \label{eq:erdosrenyi1}\\
     %\cheng{Some more words about which properties enable this and the next step}} \\ 
      \text{As } Y_j \geq 0 \text{ we have }  \quad   P\left[Y_j^2 \geq (1 + \beta)^2 \mu^2 \right] \leq \exp \left( -\frac{\beta^2 \mu}{3}\right) \, ,  \label{eq:erdosrenyi2} \\
   \text{and from Boole's inequality} \quad        P\left[Y_{\ssq} \geq n(1 + \beta)^2 \mu^2 \right] \leq n \exp \left( -\frac{\beta^2 \mu}{3}\right) \, .  \label{eq:erdosrenyi3} \\        
    \end{align}
Substituting equation \eqref{eq:erdosrenyi3} in equation \eqref{eq:erdosrenyi12}  we get
\begin{align}
    P\left[ Y_{\ssq}\geq c m_{\sq} \vert  (1 - \alpha)^2\bar{m}^2 \leq  m_{\sq} \leq  (1 + \alpha)^2\bar{m}^2 \right] & \leq n \exp \left( -\frac{\beta^2 \mu}{3}\right) \, , \\
    & \leq  \exp \left( \ln n -\frac{\beta^2 \mu}{3}\right)
\end{align}
for $n > \frac{4}{c(1-\alpha)^2}$ and $\beta = \frac{\sqrt{cn}(1 - \alpha)}{2} - 1$. 
\end{proof}

\thmerdosrenyi*
\begin{proof}
    Let $X_{ij}$ denote the Bernoulli random variable corresponding to the edge between nodes $i$ and $j$ in $G_n$ and let $X_{ij} = 1$ if the edge is present and $0$ otherwise. Let $Y_j = \sum_i X_{ij}$. Then the degree of each node $j$ in $G_n$ is given by $\deg v_j = Y_j$. Let $\mu = \mathbb{E}[Y_j]$ and $\bar{m} = \mathbb{E}[m]$. We know that $\mu = p(n-1)$ and $\bar{m}  = pn(n-1)/2$. Let $Y_{\ssq} = \sum_j Y_j^2$, and $m_{\sq} = m^2$ where ssq denotes the  sum of squares and sq denotes square.
    
We fix $c$ and $\alpha$ such that $c, \alpha \in (0,1)$ and  compute $P\left[ Y_{\ssq} \geq c m_{\sq} \right]$ using the law of total probability 
\begin{align}
    P\left[ Y_{\ssq} \geq  c m_{\sq} \right]  & = P\left[ Y_{\ssq}\geq c m_{\sq} \vert m_{\sq}  \leq (1 - \alpha)^2\bar{m}^2  \right] P\left[ m_{\sq} \leq (1 - \alpha)^2\bar{m}^2 \right]  + \\
    & \hspace{-1.5cm} P\left[ Y_{\ssq}\geq c m_{\sq} \vert  (1 - \alpha)^2\bar{m}^2 \leq  m_{\sq} \leq  (1 + \alpha)^2\bar{m}^2 \right] P\left[ (1 - \alpha)^2\bar{m}^2 \leq  m_{\sq} \leq  (1 + \alpha)^2\bar{m}^2  \right]  + \\
    & \hspace{-1.5cm} P\left[ Y_{\ssq}\geq c m_{\sq} \vert m_{\sq} \geq (1 + \alpha)^2\bar{m}^2  \right] P\left[ m_{\sq} \geq (1 +  \alpha)^2\bar{m}^2 \right]  \, , \ \label{eq:erdosrenyi8} \\ 
    & \leq \exp\left( - \frac{\alpha^2pn(n-1) }{4} \right) + \exp \left( \ln n -\frac{\beta^2 p(n-1)}{3}\right) + \exp\left( - \frac{\alpha^2pn(n-1)}{6} \right) \, ,  \label{eq:erdosrenyi9}
 \end{align}   
from Lemmas \ref{lemma:erdospartresult1}, \ref{lemma:erdospartresult2} and \ref{lemma:erdospartresult3} for $n > \frac{4}{c(1-\alpha)^2}$ and $\beta = \frac{\sqrt{cn}(1 - \alpha)}{2} - 1$.     

For a fixed $c \in (0,1) $ we get
$$ \lim_{n,m \to \infty}  P\left[ Y_{\ssq} \geq  c m_{\sq} \right]  = 0
$$ 
As 
$$ P\left[ \frac{1}{2} Y_{\ssq} \geq  \frac{1}{2} c m_{\sq} \right] = P\left[\frac{1}{2}\sum_j Y_j^2  - m \geq  \frac{1}{2} c m^2 -m  \right]
$$
For $c \in (0,1)$ we have $ \frac{1}{2}cm^2 - \frac{1}{2}cm > \frac{1}{2}cm^2 - m$ giving us
$$
P\left[\frac{1}{2}\sum_j Y_j^2  - m \geq  \frac{1}{2} c m^2 - \frac{1}{2} c m  \right] \leq P\left[\frac{1}{2}\sum_j Y_j^2  - m \geq  \frac{1}{2} c m^2 -m  \right] =  P\left[ Y_{\ssq} \geq  c m_{\sq} \right] \, . 
$$
This gives us
\begin{align}
P\left[ \frac{\frac{1}{2}\sum_j Y_j^2  - m }{\frac{m(m-1)}{2}}   \geq c \right] & \leq P\left[ Y_{\ssq} \geq  c m_{\sq} \right]     \, , \\
 P\left[ \density(H_m) \geq c \right]  & \leq  \exp\left( - \frac{\alpha^2pn(n-1) }{4} \right) + \exp \left( \ln n -\frac{\beta^2 p(n-1)}{3}\right) + \exp\left( - \frac{\alpha^2pn(n-1)}{6} \right) \, ,
\end{align}
where we have used the line graph edge density in equation \eqref{eq:linedensity}. 
As $n$ and $m$ go to infinity
% \begin{align}
%    \lim_{n, m \to \infty} P\left[ \frac{\frac{1}{2}\sum_j Y_j^2  - m }{\frac{m(m-1)}{2}}  \geq c \right] & = 0 \, .   \\
% \end{align}
% Using the line graph edge density in equation \eqref{eq:linedensity}
\begin{align}
   \lim_{m \to \infty} P\left[ \density(H_m) \geq c \right] & = 0 \, , 
\end{align}
giving us the first result. 
%obtaining equation \eqref{eq:result1erdosrenyithm}.
Taking the complement we have
$$  
      \lim_{m \to \infty} P\left[ \density(H_m) < c \right] = 1\, ,
$$
for a fixed $c \in (0, 1)$. As this is true for any $c \in (0, 1)$ we have
$$  \lim_{m \to \infty} P\left[ \density(H_m) = 0 \right] = 1\, . 
$$
\end{proof}

\end{document}